\documentclass[lettersize, journal]{IEEEtran}

\usepackage[utf8]{inputenc}

\usepackage[colorlinks=false]{hyperref}
\usepackage{orcidlink}   

\usepackage{amsmath,amsfonts,amssymb}

\usepackage{fancyhdr}

\usepackage{booktabs}
\usepackage{multirow}
\usepackage{array}
\usepackage{tabularx,makecell}
\newcolumntype{Y}{>{\centering\arraybackslash}X}
\newcolumntype{L}{>{\raggedright\arraybackslash}X}

\allowdisplaybreaks

\usepackage{graphicx}
\usepackage{float}
\usepackage{wrapfig}
\usepackage{placeins}
\usepackage{subcaption}
\usepackage{cite}

\usepackage{algorithm}
\usepackage{algorithmic}

\usepackage{amsmath, amssymb, amsfonts}
\usepackage{amsthm}

\newtheorem{theorem}{Theorem}
\newtheorem{assumption}{Assumption}
\newtheorem{lemma}{Lemma}
\newtheorem{proposition}{Proposition}
\newtheorem{corollary}{Corollary}

\theoremstyle{definition}
\newtheorem{definition}{Definition}

\allowdisplaybreaks

\usepackage{xcolor}
\newcommand{\com}[1]{}

\ifodd 0
\newcommand{\remove}[1]{{\color{black}#1}}
\newcommand{\add}[1]{}
\else
\newcommand{\remove}[1]{}
\newcommand{\add}[1]{{\color{black}#1}}
\fi

\usepackage{pifont}

\begin{document}

\title{Multi-User mmWave Beam and Rate Adaptation via Combinatorial Satisficing Bandits}

\author{\IEEEauthorblockN{Emre~\"Ozy{\i}ld{\i}r{\i}m\,\raisebox{0.7ex}{\orcidlink{0009-0003-5842-9782}}, Bar{\i}{\c s}~Yayc{\i}\,\raisebox{0.7ex}{\orcidlink{0009-0002-8488-540X}}, Umut~Eren~Akturk\,\raisebox{0.7ex}{\orcidlink{0009-0004-1094-9145}}, and Cem~Tekin\,\raisebox{0.7ex}{\orcidlink{0000-0003-4361-4021}},~{\em Senior Member, IEEE}}%
\thanks{This work was supported in part by the Scientific and Technological Research Council of Türkiye (TUBITAK) BILGEM Grant; TUBITAK Grant 124E065; by the Turkish Academy of Sciences Distinguished Young Scientist Award Program (TUBA-GEBIP-2023); by TUBITAK 2024 Incentive Award.}
\thanks{The authors are with the Department of Electrical and Electronics Engineering, Bilkent University, Ankara, T\"urkiye (e-mail: emre.ozyildirim@ug.bilkent.edu.tr; b.yayci@ug.bilkent.edu.tr; eren.akturk@bilkent.edu.tr; cemtekin@ee.bilkent.edu.tr). E. \"Ozy{\i}ld{\i}r{\i}m and B. Yayc{\i} contributed equally to this work.}%
\thanks{A preliminary version of this work was presented at the AI and ML for Next-Generation Wireless Communications and Networking Workshop at NeurIPS 2025 \cite{ozyldrm2025satisficing} (non-archival).}
}

\maketitle
\begin{abstract}
We study downlink beam and rate adaptation in a multi-user mmWave MISO system where multiple base stations (BSs), each using analog beamforming from finite codebooks, serve multiple single-antenna user equipments (UEs) with a unique beam per UE and discrete data transmission rates. BSs learn about transmission success based on ACK/NACK feedback. To encode service goals, we introduce a satisficing throughput threshold $\tau_r$ and cast joint beam and rate adaptation as a combinatorial semi-bandit over beam-rate tuples. Within this framework, we propose SAT-CTS, a lightweight, threshold-aware policy that blends conservative confidence estimates with posterior sampling, steering learning toward meeting $\tau_r$ rather than merely maximizing. Our main theoretical contribution provides the first finite-time regret bounds for combinatorial semi-bandits with satisficing objective: when $\tau_r$ is realizable, we upper bound the cumulative satisficing regret to the target with a time-independent constant, and when $\tau_r$ is non-realizable, we show that SAT-CTS incurs only a finite expected transient outside committed CTS rounds, after which its regret is governed by the sum of the regret contributions of restarted CTS rounds, yielding an $O((\log T)^2)$ standard regret bound. On the practical side, we evaluate the performance via cumulative satisficing regret to $\tau_r$ alongside standard regret and fairness. Experiments with time-varying sparse multipath channels show that SAT-CTS consistently reduces satisficing regret and maintains competitive standard regret, while achieving favorable average throughput and fairness across users, indicating that feedback-efficient learning can equitably allocate beams and rates to meet QoS targets without channel state knowledge.
\end{abstract}

\begin{IEEEkeywords}
Millimeter wave, beam adaptation, rate adaptation, combinatorial bandits, satisficing. 
\end{IEEEkeywords}

\section{Introduction}

Millimeter-wave (mmWave) communication has emerged as a cornerstone technology for next-generation wireless networks, driven by the relentless growth in mobile data demand and the proliferation of bandwidth-hungry applications such as interactive extended reality \cite{struye2022covrage} and ultra-high-definition streaming \cite{liu2018mec}. By leveraging large swaths of underutilized spectrum, mmWave enables unprecedented peak data rates and supports ultra-low latency services that are essential for such emerging applications. At the same time, the unique propagation characteristics of mmWave, such as high path loss, susceptibility to blockage, and strong directionality, introduce fundamental challenges in acquiring channel state information (CSI), efficiently aligning beams, and adjusting data rates to meet service demands \cite{rappaport2015wideband}.   

Accurate CSI is often unavailable in mmWave systems because highly directional links with large arrays (often under hybrid analog/digital constraints) make channel sounding expensive, while mobility and blockage can rapidly invalidate any estimate, forcing frequent re-training with substantial overhead and latency \cite{deng2025csi}. This motivates beam adaptation without CSI, where the transmitter avoids explicit channel estimation and instead uses minimal link-layer feedback \cite{mamba, Wu2019, ktari2024machine}. In particular, ACK/NACK feedback provides a reliable binary signal indicating whether the current beam configuration can sustain the required packet delivery (and thus throughput) under the prevailing propagation conditions. By iteratively updating the beam choice based solely on this feedback, the system can implicitly track the good beam directions and maintain throughput targets while eliminating the CSI acquisition burden.

We study downlink data transmission in a multi-user, multi-base station Multiple Input Single Output (MISO) system, where multiple base stations (BSs) serve single-antenna user equipments (UEs) \cite{1266912}. In mmWave communications, transmitters employ highly directional narrow beams to mitigate severe path loss and blockage \cite{HurTCOM2013}. Reliable links require these beams to be accurately steered toward the UEs \cite{Zhou2017EnhancedRA, wei2022fast}.
Each UE can be served by one of several candidate beams across different BSs. A BS transmits data to a UE using the selected beam at the highest feasible rate, determined by the chosen modulation and coding scheme (MCS). We model this as a joint beam and rate adaptation problem without CSI, where after each transmission the BS receives binary ACK/NACK feedback for the selected beam–rate pair.

\begin{figure}[t]
\centering
\includegraphics[width=0.85\columnwidth]{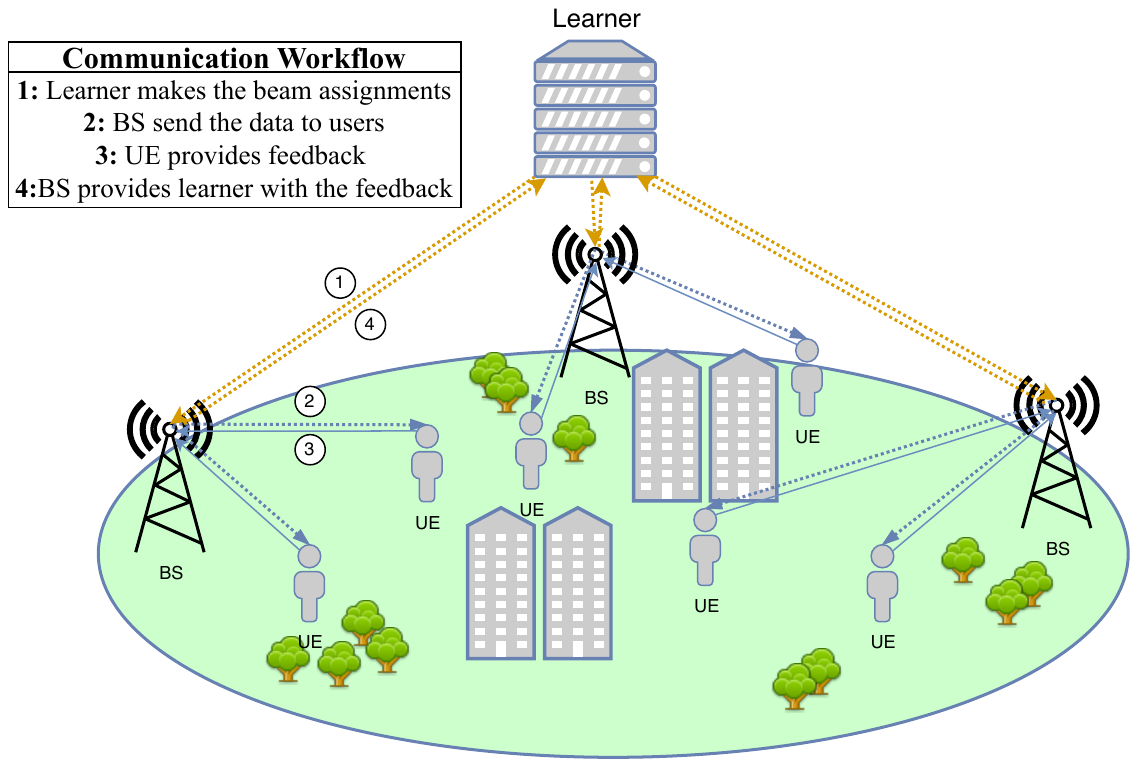}
\caption{System model.}
\label{fig:sys}
\end{figure}

We consider a centralized architecture in which a learner coordinates beam and rate assignments for the BSs and UEs at the beginning of each time slot over an ultra-low latency control channel (Fig. \ref{fig:sys}). At the end of each slot, BSs return feedback to the learner, which updates assignments to improve system performance. We cast the problem of learning the best beam-rate assignments by relying solely on ACK/NACK feedback as a sequential decision-making problem. This setup exhibits the canonical exploration–exploitation tradeoff: the learner must explore different beam-rate pairs to learn which ones perform well while exploiting known good configurations to satisfy throughput requirements. This tradeoff is optimality balanced by multi-armed bandit (MAB) algorithms \cite{lattimore2020bandit} that can adapt beams and rates in a completely data-driven manner.  

While a wide range of MAB algorithms has been proposed for identifying the best choice (i.e., the best {\em super-arm}) among combinatorially many configurations (i.e., {\em super-arms}) \cite{Kveton2015SemiBandit, pmlr-v28-chen13a, pmlr-v80-wang18a, huyuk2020thompson}, many of these methods are inherently designed to prioritize long-term optimality and thus may over-explore in practice rather than quickly settling on a configuration that already satisfies the throughput requirements, as a result, they can exhibit slow convergence and poor early performance, leading to suboptimal communication reliability and throughput during the early rounds of communication. 

Departing from the prior work, we focus on designing algorithms that learn a good-enough configuration in the most sample efficient way. We do this by modeling the joint beam and rate adaptation process as a satisficing combinatorial MAB (CMAB) with semi-bandit feedback, and develop a scalable learning algorithm, called {\em satisficing combinatorial Thompson sampling} (SAT-CTS), that ensures acceptable per-UE average throughput. Our approach follows Herbert Simon’s bounded-rationality perspective, where agents aim to reach an aspiration level under limited time and information \cite{SimonQJE1955}. Instead of focusing on convergence to a maximizer, we measure how quickly the assignments achieve a satisficing threshold by analyzing a satisficing version of regret.

We provide finite-time regret bounds for SAT-CTS under two scenarios. When the satisficing threshold $\tau_r$ is \emph{realizable} (i.e., achievable by some assignment with margin), we prove a horizon-free upper bound on the expected cumulative satisficing regret that depends only on system parameters. When $\tau_r$ is \emph{non-realizable} (higher than the best achievable throughput), we show that SAT-CTS reduces to standard {\em combinatorial Thompson sampling} (CTS) \cite{pmlr-v80-wang18a} after a finite transient phase and therefore inherits the CTS standard regret bound up to an additive constant. To the best of our knowledge, these are the first finite-time regret guarantees for CMAB with satisficing objectives.

We evaluate our approach in a simulated multi-BS, multi-UE MISO system with realistic channel vectors generated by the DeepMIMO simulator \cite{alkhateeb2019deepmimogenericdeeplearning}. UEs are modeled as stationary within each time slot, consistent with the quasi-static assumption used in prior beam-training studies \cite{wei2022fast, 8644165}. Our proposed algorithm SAT-CTS, which takes the satisficing threshold as an input and tests the selected super arm's performance under this threshold, outperforms CTS \cite{pmlr-v80-wang18a} and {\em combinatorial upper confidence bound} (CUCB) \cite{pmlr-v28-chen13a} baselines in terms of satisficing regret. Compared with the preliminary workshop version \cite{ozyldrm2025satisficing}, the present paper uses a committed-round SAT-CTS design with reset fresh CTS rounds and removes the optimistic UCB gate, which enables the finite-time regret analysis.

In a nutshell, we propose the first scalable, regret-optimal, binary-feedback-enabled beam and rate adaptation algorithm for mmWave systems in the absence of CSI. We rigorously prove the optimality of SAT-CTS in terms of the satisficing regret, and demonstrate that it outperforms well-known CMAB algorithms utilized for beam and rate adaptation.

The rest of the paper is organized as follows. Related work is given in Section \ref{sec:related}. The beam and rate adaptation problem is formalized in Section \ref{sec:problem}. SAT-CTS algorithm is proposed in Section \ref{sec:algorithms}. Regret analysis of SAT-CTS under realizable target is done in Section \ref{sec:realizable}. Experimental results on the DeepMIMO simulator are reported in Section \ref{sec:experiments}. Concluding remarks are given in Section \ref{sec:conclusion}.

\section{Related Work} \label{sec:related}

\newcommand{\cmark}{\ding{51}}  
\newcommand{\xmark}{\ding{55}} 
\begin{table}[!t]
\centering
\caption{Comparison with related works.}
\label{tab:related-ieee}
\scriptsize
\setlength{\tabcolsep}{3pt}
\renewcommand{\arraystretch}{1.08}

\begin{tabularx}{\linewidth}{@{}l
  >{\centering\arraybackslash}X
  >{\centering\arraybackslash}X
  >{\centering\arraybackslash}X
  >{\centering\arraybackslash}X@{}}
\toprule
\textbf{Work} &
\textbf{Combinatorial Setup} &
\textbf{MAB \& Comm. System Together} &
\textbf{Satisficing Threshold} &
\textbf{Beam + Data (Align \& Rate) } \\
\midrule
MAMBA~\cite{mamba}            & \xmark & \cmark & \xmark & \cmark \\
HBA~\cite{Wu2019}     & \xmark & \cmark & \xmark & \xmark \\
CCBM~\cite{li2024contextualcombinatorialbeammanagement}
                               & \cmark & \cmark & \xmark & \xmark \\
CCVB for BS~\cite{9373701}    & \cmark & \cmark & \xmark & \xmark \\
FBA~\cite{fastbeam}           & \xmark & \xmark & \xmark & \xmark \\
CCV--MAB~\cite{pmlr-v108-nika20a}
                               & \cmark & \xmark & \xmark & \xmark \\
\midrule
\textbf{Our work}              & \cmark & \cmark & \cmark & \cmark \\
\bottomrule
\end{tabularx}
\end{table}

In mmWave networks, the central challenge is assigning beams together with appropriate rates so that data transmissions succeed under directional links and SNR thresholds. Several non-bandit approaches tackle the alignment phase. For instance, Hassanieh et al. \cite{fastbeam} design fast probing to identify a good beam, but do not address multi-user assignment or data rate selection. Marzi et al. \cite{marzi2016compressive} employ compressive sensing techniques that utilize the sparsity of the mmWave channel to reduce the alignment overhead. Wang et al. \cite{wang2009beam} propose a hierarchical multi-resolution search procedure to identify the optimal beam while minimizing the probing overhead. Several other works incorporate contextual information, such as position information, into the beam search process to enable rapid identification of the optimal beam \cite{ghatak2020beamwidth,ali2017millimeter}. 

There also exists a plethora of MAB-based formulations for beam alignment. Wei et al. \cite{wei2022fast} treat fast beam alignment as a pure exploration problem, identifying the best beam from pilot measurements (received signal strength) without sending real data or choosing rates. Ghosh et al. \cite{Ghosh2024UB3} propose a MAB algorithm that utilizes the unimodal structure of the RSS to maximize the probability of the identification of the best beam given a fixed transmission budget. Several other works, such as \cite{hashemi2018efficient, Wu2019, mamba,zhang2021mmwave}, approach the beam alignment problem from a regret minimization perspective and propose algorithms that continuously explore and exploit. However, continuous exploration can lead to suboptimality if the goal is to identify the best beam for data transmission during an alignment phase with pilot symbols.

On the other hand, when used to directly select beams for data transmission, continuous exploration can result in suboptimal beam selections for data transfer. In contrast to these works, we apply MAB algorithms for beam tracking without requiring pilot-symbol-based beam alignment. Instead, we aim directly for data transmission in each slot with the chosen beam (see Fig. \ref{fig:bsue} for the process of beam selection and data transmission). Moreover, to reduce suboptimal explorations, we incorporate a satisficing approach where the goal is to select beam-rate pairs that guarantee a target level of throughput demand instead of the highest throughput.  

Along regret minimization direction, contextual combinatorial beam management \cite{li2024contextualcombinatorialbeammanagement} extends alignment to a contextual, combinatorial setting with multiple UEs and BSs and a probing budget, and tries to maximize a contextual reward. Beam alignment in non-stationary environments is considered in \cite{min2024beam} and \cite{ghatak2024best}. Another related work \cite{9373701} focuses on the problem of multi-user small BS association in a contextual setup with dynamic UE presence using contextual combinatorial volatile MAB without beam management. Relatedly, CCV-MAB \cite{pmlr-v108-nika20a} addresses contextual combinatorial bandits with time-varying availability, offering regret guarantees via adaptive discretization. Yet it remains communication-agnostic (no beam/rate modeling or ACK/NACK signals). A procedure similar to adaptive discretization, called dynamic beam zooming, is proposed in \cite{blinn2025mab} for beam alignment and tracking. MAMBA \cite{mamba} jointly adapts beam and MCS from ACK/NACK, yet remains a single-user, single-BS formulation without combinatorial matching.

Our research fills the gap between alignment-only pilot methods and single-link bandits by providing a target-aware, multi-user assignment mechanism that operates directly on the data plane with provable performance guarantees.

\section{Problem Formulation}
\label{sec:problem}

We consider a multi-user mmWave MISO system where $B$ BSs, 
each with $N$ transmit antennas, serve $M$ single-antenna UEs as in Fig. \ref{fig:sys}. 

\begin{table}[t]
  \centering
  \small
  \caption{Summary of notation for multi-UE multi-BS mmWave MISO system.}
  \label{tab:notation_multi}
  \begin{tabular*}{\columnwidth}{@{\extracolsep{\fill}}ll@{}}
    \toprule
    Symbol & Description \\
    \midrule
    \multicolumn{2}{@{}l}{\textbf{System Parameters}} \\
    $B$ & Number of BSs \\
    $M$ & Number of UEs \\
    $K$ & Beams per BS codebook \\
    $N$ & Antennas per BS \\
    $\mathcal{C}_b$ & Codebook for BS $b$ \\
    $\lambda$, $d$ & Wavelength, antenna spacing \\
    $p_b$ & Transmit power for BS $b$ \\
    \addlinespace
    \multicolumn{2}{@{}l}{\textbf{Channel Parameters}} \\
    $\mathbf{h}_{m,b}$ & Channel: BS $b$ to UE $m$ \\
    $L_{m,b}$ & Number of paths (BS $b$, UE $m$) \\
    $\beta_{m,b,\ell}$ & Complex gain, $\ell$th path \\
    $\theta_{m,b,\ell}$ & AoD, $\ell$th path \\
    $\mathbf{a}(\cdot)$ & Array steering vector \\
    $\mathbf{f}_{b,k}$ & $k$th beam from BS $b$ \\
    \addlinespace
    \multicolumn{2}{@{}l}{\textbf{Signal Parameters}} \\
    $y_m$ & Received signal at UE $m$ \\
    $\mathrm{RSS}_m(\mathbf{f}_{b,k})$ & Received power for beam $\mathbf{f}_{b,k}$ \\
    $a_{m,b,k}$ & $\mathbf{h}_{m,b}^H \mathbf{f}_{b,k}$ (projection) \\
    $n_m$ & Noise, $\mathcal{CN}(0,\sigma_m^2)$ \\
    $\sigma_m^2$ & Noise variance at UE $m$ \\
    \bottomrule
  \end{tabular*}
\end{table}

\subsection{System Model and SNR}

System parameters are summarized in Table \ref{tab:notation_multi}. For an integer $n$, let $[n] := \{1,\ldots, n\}$. Each BS $b \in [B]$ selects its beams from its predefined analog beamforming codebook of size $K$ \cite{Codebook2017}, defined as
$\mathcal{C}_b := \{\,\mathbf{f}_{b,k} \in \mathbb{C}^N : k \in [K] \}$,
with each beamforming vector \(\mathbf{f}_{b,k}\) normalized, \(\|\mathbf{f}_{b,k}\|_2=1\). The mmWave channel, as observed in measurement studies~\cite{indoor2022,indoor2017}, follows a multipath model with a small number of propagation paths, resulting in a sparse structure. While our learning algorithms are not restricted to work under a specific channel model, a common model for the channel vector from BS $b$ to UE $m$ is the \(L_{m,b}\)-path Saleh–Valenzuela model~\cite{SalehValenzuela1987}, given as
\begin{align}
\mathbf{h}_{m,b} \;=\; \sqrt{\tfrac{N}{L_{m,b}}}\sum_{\ell=1}^{L_{m,b}} \beta_{m,b,\ell}\,\mathbf{a}
(\cos{\theta_{m,b,\ell}}), \label{eqn:channelgain}
\end{align}
where \(\beta_{m,b,\ell}\) and \(\theta_{m,b,\ell}\) are the complex gain and angle-of-departure (AoD) of path \(\ell\) for that specific BS-UE link, and the array steering vector is \(\mathbf{a}(\cos{\theta_{m,b,\ell}})=[1,e^{j\frac{2\pi}{\lambda}d\cos\theta_{m,b,\ell}},\dots,e^{j\frac{2\pi}{\lambda}d(N-1)\cos\theta_{m,b,\ell}}]^T\) defined as in \cite{Ghosh2024UB3}. Here, \(\lambda\)  represents the carrier wavelength and \(d\) represents the antenna spacing.
We assume a quasi-static channel model, where the channel vector $\mathbf{h}_{m,b}$ remains constant during a time slot, but can vary between time slots \cite{LimMobilityTCOM2020}.

Consider BS $b$ transmitting symbol $s$ to UE $m$ with transmit power $p_b$. Without loss of generality, assume $s=1$. When BS $b$ transmits with with beam $\mathbf{f}_{b,k}$, the received signal at UE $m$ is given by $y_m = \sqrt{p_b} \mathbf{h}_{m,b}^H \mathbf{f}_{b,k} + n_m$ 
where \(n_m\sim \mathcal{CN}(0,\sigma_{m}^2)\) is the complex additive white Gaussian noise (AWGN). 
The instantaneous received-signal-strength (RSS) is equal to the magnitude squared of the received signal, which is given as
$\text{RSS}_m(\mathbf{f}_{b,k}) = |y_m|^2 = | \sqrt{p_b}\,\mathbf{h}_{m,b}^H \mathbf{f}_{b,k} + n_m |^2$.
Letting $a_{m,b,k}=\mathbf{h}_{m,b}^H\mathbf{f}_{b,k}$, we have
\[
\begin{aligned}
\text{RSS}_m(\mathbf{f}_{b,k}) &= \bigl|\sqrt{p_b}\,a_{m,b,k} + n_m\bigr|^2 \\
&= p_b\,|a_{m,b,k}|^2 + 2\sqrt{p_b}\,\Re\{a_{m,b,k}^*\,n_m\} + |n_m|^2.
\end{aligned}
\]
Note that
    $2\sqrt{p_b}\,\Re\{a_{m,b,k}^*\,n_m\}
      \;\sim\; \mathcal N\bigl(0,\;2p_b\,|a_{m,b,k}|^2\,\sigma_m^2\bigr)$
and the noise-power term $|n_m|^2$ is an exponential random variable with mean $\sigma_m^2$ and variance $\sigma_m^4$. 
Assuming a high SNR regime, i.e., $p_b\,|a_{m,b,k}|^2\gg\sigma_m^2$, the following approximation can be obtained as in \cite{wei2022fast}:
\[
\begin{aligned}
\text{RSS}_m(\mathbf{f}_{b,k}) &= |\,\sqrt{p_b}a_{m,b,k}+n_m|^2 \\
&\approx p_b|a_{m,b,k}|^2 + \mathcal{N}(0,2p_b|a_{m,b,k}|^2\sigma_m^2).
\end{aligned}
\]
The expected RSS within a time slot is then given as $\mathbb{E} [\text{RSS}_m(\mathbf{f}_{b,k}) | \mathbf{h}_{m,b}] =  p_b \bigl|\mathbf{h}_{m,b}^H \mathbf{f}_{b,k}\bigr|^2$, which yields a signal to noise ratio (SNR) equal to $p_b \bigl|\mathbf{h}_{m,b}^H \mathbf{f}_{b,k}\bigr|^2 / \sigma^2_m$.
\begin{figure}[t]
\centering
\includegraphics[width=0.95\columnwidth]{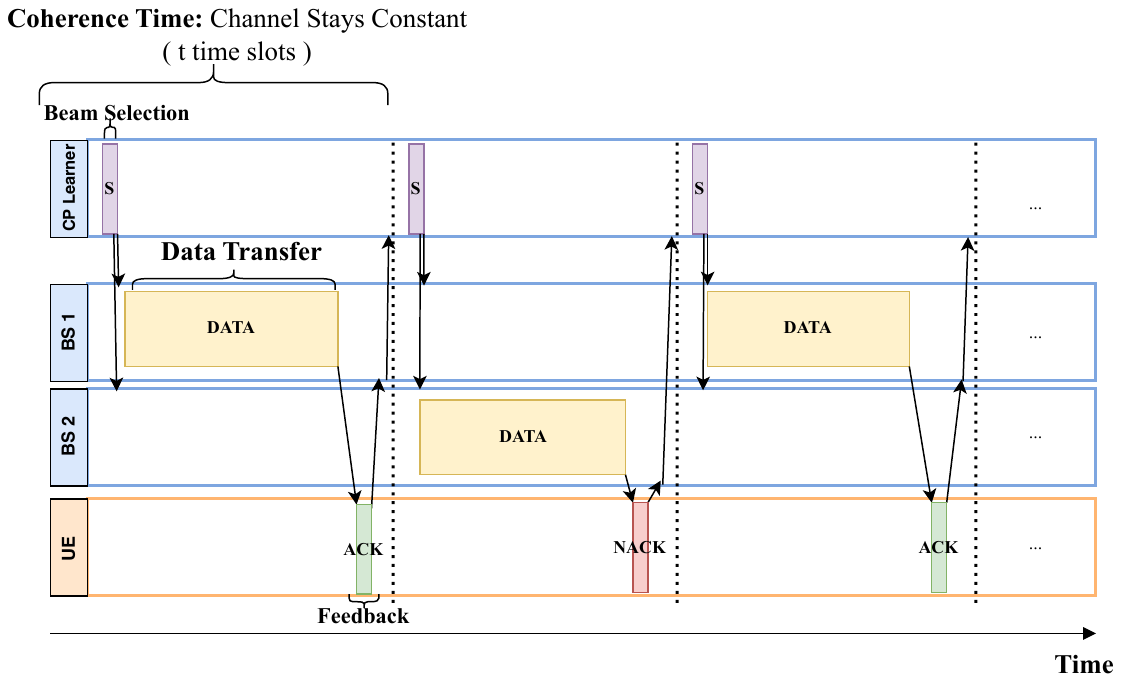}
\caption{Process of beam and rate assignment.}
\label{fig:bsue}
\end{figure}
\subsection{Combinatorial Multi-Armed Bandit Formulation}

Beam $k$ of BS $b$ is denoted by tuple $(b,k)$. We denote the set of all beams by $\mathcal K := [B]\times[K]$. Additionally, we define a discrete set of feasible transmission rates $\mathcal{R} = \{r_1, r_2, \ldots, r_R\}$ where $r_1 < r_2 < \ldots < r_R$ represent the available data rates in bits per channel use. Different data rates can be achieved by choosing a different MCS. We assume $|\mathcal K| \geq M$. Let $T$ represent the time horizon. We consider a centralized system where the learner coordinates beam and rate selection for downlink transmission. 

As presented in Fig. \ref{fig:bsue}, the following events take place sequentially at each time slot $t$. At the beginning of each time slot $t \in [T]$, the learner assigns exactly one beam to each UE via a mapping $\pi_t:\{1,\ldots,M\}\to\mathcal K$, and selects a transmission rate for each UE via a mapping $\rho_t:\{1,\ldots,M\}\to\mathcal R$. Here, $\pi_{m,t} = (b_{m,t},k_{m,t})$ represents the beam assigned to UE $m$ with $b_{m,t}$ representing the assigned BS and $k_{m,t}$ representing the beam of the assigned BS. For each UE $m \in [M]$, the learner communicates this information with the associated BS through an ultra-low latency control channel. After this communication, BS $b_{m,t}$ uses its beam $\pi_{m,t} = (b_{m,t},k_{m,t})$ to transmit data to UE $m$ at rate $\rho_{m,t}$.

We assume that each beam $k \in {\cal K}$ can serve at most one UE as in \cite{wang2016low, wang2018frequency}. A base arm is indexed by $i=(m,(b,k),r)$, where $m\in[M]$ is the UE, $(b,k)\in\mathcal K$ is the BS--beam pair, and $r\in\mathcal R$ is the rate. Let ${\cal A}$ represent the set of base arms and let
\begin{align*}
{\cal S} :=  \{ \{(\pi_1, \rho_1), \ldots, (\pi_M, \rho_M)\} &: \pi_m \in {\cal K}, \rho_m \in \mathcal{R} \\
& \qquad \left. \pi_m \neq \pi_n \text{ for } n \neq m   \right\}
\end{align*}
represent the set of super arms. For a super arm $s \in {\cal S}$, let $\pi_m(s)$ be the beam assigned to UE $m$ and $\rho_m(s)$ be the rate selected for UE $m$.

For a given rate $\rho_{m,t}$ and beam allocation $\pi_{m,t} = (b,k)$, the transmission is successful if the instantaneous SNR exceeds the threshold required for the selected rate. Define the SNR threshold for rate $r \in \mathcal{R}$ as $\gamma_{\text{th}}(r) = 2^r - 1$ based on the Shannon-Hartley theorem. At the end of time slot $t$, BS $b$ receives the transmission success indicator (ACK/NACK feedback) from UE $m$, which is given by: 
\begin{align*}
x_{m,t} := \begin{cases}
1 & \text{if } \frac{p_b |\mathbf{h}_{m,b}^H \mathbf{f}_{b,k}|^2}{\sigma_m^2} \geq \gamma_{\text{th}}(\rho_{m,t}) \quad \text{(ACK)}\\
0 & \text{otherwise} \quad \text{(NACK - outage)}
\end{cases}
\end{align*}

The instantaneous reward for UE $m$ at time $t$ is $r_{m,t} = \rho_{m,t} \times x_{m,t}$, where the UE receives the selected rate $\rho_{m,t}$ if transmission is successful and zero otherwise. The transmission success probability for UE $m$ with beam $(b,k)$ and rate $r$ is denoted as 
\begin{align*}
    \psi_{m,(b,k),r} := \mathbb{P}\left(\frac{p_b |\mathbf{h}_{m,b}^H \mathbf{f}_{b,k}|^2}{\sigma_m^2} \geq \gamma_{\text{th}}(r)\right) , 
\end{align*}
where the randomness is over the channel vector $\mathbf{h}_{m,b}$. These probabilities are unknown, which makes the decision-making process without channel state knowledge challenging. An optimal super arm is a super arm that maximizes the expected total throughput, which is given as 
\begin{align*}
    (\pi^*, \rho^*) &\in \arg\max_{(\pi,\rho) \in {\cal S}} \mathbb{E} \left[ \sum_{m=1}^M \rho_m \times x_{m} \right] \\
    &= \arg\max_{(\pi,\rho) \in {\cal S}} \sum_{m=1}^M \rho_m \times \psi_{m,\pi_m,\rho_m} ,
\end{align*}
where $\rho_m$ is the transmission rate selected for UE $m$, $x_{m} \in \{0,1\}$ represents the random ACK/NACK feedback under super arm choice $(\pi,\rho)$, and $\psi_{m,\pi_m,\rho_m}$ is the transmission success probability for UE $m$ with beam $\pi_m$ and rate $\rho_m$.

Given assignment $S_t = \{(\pi_{1,t}, \rho_{1,t}),\dots,(\pi_{M,t}, \rho_{M,t})\}$, the learner observes at the end of slot $t$ the per-UE ACK/NACK feedback $x_{m,t} \in \{0,1\}$ for $m=1,\dots,M$. This information, which will be used by the learner for subsequent super arm selections, is known as the semi-bandit feedback and is communicated by the BSs to the learner via the ultra-low latency control link. 

Define the \emph{satisficing threshold} as $\tau_r \in \mathbb{R}_+$ representing the target average throughput per UE. This threshold can be interpreted as the desired minimum average data rate that should be achieved across all UEs. For instance, in 6G scenarios one can aim for average throughput $\tau_r = 2.5$ Gbits/sec per UE per time slot. 
The \emph{per-round satisficing regret} of $S_t$ is 
\begin{align*}
    \Delta_{\mathrm{sat}}(S_t) := \left[\tau_r - \frac{1}{M}\sum_{m=1}^M \rho_{m,t} \cdot \psi_{m,\pi_{m,t},\rho_{m,t}} \right]_+ ,
\end{align*}  
and the \emph{cumulative satisficing regret} over $T$ time slots is
\begin{align}
    R^S(T) := \sum_{t=1}^T \Delta_{\mathrm{sat}}(S_t) . \label{eqn:satisficingregret}
\end{align}

In the next section, we will propose an algorithm that minimizes the satisficing regret. This way, the algorithm will achieve an average throughput that is above $\tau_r$ in all but a finite number of initial rounds.
A stricter notion of regret, called {\em the standard regret}, compares the performance of the learner with that of the best super arm. The $T$ round standard regret is given as 
\begin{align*}
    R_{\text{std}}(T)
    := \frac{1}{M} \sum_{t=1}^T  \sum_{m=1}^M ( \rho^*_{m}  \psi_{m,\pi^*_{m},\rho^*_{m}} - \rho_{m,t} \psi_{m,\pi_{m,t},\rho_{m,t}} ) .
\end{align*}
When $\tau_r$ is not realizable, i.e., there is no super arm that achieves $\tau_r$, we will bound the standard regret.

\section{Fast Beam and Rate Adaptation via SAT-CTS}
\label{sec:algorithms}

\subsection{Algorithm Description}

\begin{algorithm}[p]
\caption{SAT-CTS}
\label{alg:lcb-cts}
\begin{algorithmic}[1]
\REQUIRE $M$ UEs; beam set $\mathcal{K}=[B]\times[K]$;
  rate set $\mathcal{R}=\{r_1<\cdots<r_R\}$;
  target $\tau_r$
\STATE For all $m\!\in\![M],\
  (b,k)\!\in\!\mathcal K,\ r\!\in\!\mathcal R$:
\STATE \quad $n_{m,(b,k),r}\!\leftarrow\!0,\
  s_{m,(b,k),r}\!\leftarrow\!0$
\STATE $\mathcal{S} \!=\! \bigl\{\{(\pi_1,\rho_1),\ldots,(\pi_M,\rho_M)\}:$
\STATE \quad $\pi_m\!\in\!\mathcal K,\; \rho_m\!\in\!\mathcal R,\; \pi_m\!\neq\!\pi_n\;\forall\, m\!\neq\!n\bigr\}$
\STATE \textbf{Primitives:}
\STATE \quad $\mathrm{BestAssign}(\mathrm{Score})$
\STATE \qquad $:= \arg\max_{s\in\mathcal S}\!\sum_{m=1}^M \mathrm{Score}_{m,\pi_m(s),\rho_m(s)}$
\STATE \quad $\mathrm{Avg}(\mathrm{Score},s)$
\STATE \qquad $:=\frac{1}{M}\sum_{m=1}^M \mathrm{Score}_{m,\pi_m(s),\rho_m(s)}$
\STATE \textbf{// Initialization: cover all base arms}
\STATE Choose super arms $s_1,\ldots,s_{T_0}\!\in\!\mathcal S$
  s.t.\ every base arm $i\!\in\!\mathcal A$
  appears in at least one $s_j$
\FOR{$j=1$ to $T_0$}
  \STATE Play $s_j$; observe $x_{m,j}\!\in\!\{0,1\}$
    for each $m$; update $n,s$ counters
\ENDFOR
\STATE CTS round counter $i\leftarrow 1$
\STATE $t\leftarrow T_0+1$
\WHILE{$t\le T$}
  \STATE \textbf{// Compute indices}
  \FOR{each $m,(b,k),r$}
    \STATE $\hat\psi\!\leftarrow\!
      s_{m,(b,k),r}/n_{m,(b,k),r}$;\
      $c\!\leftarrow\!\sqrt{3
      \log t/(2n_{m,(b,k),r})}$
    \STATE $\mathrm{LCB}_{m,(b,k),r}\!\leftarrow\!
      r\max\{0,\hat\psi\!-\!c\}$
    \STATE $\mathrm{MEAN}_{m,(b,k),r}\!\leftarrow\!
      r\hat\psi$
  \ENDFOR
  \STATE $S_L\!\leftarrow\!
    \mathrm{BestAssign}(\mathrm{LCB})$,\
    $S_M\!\leftarrow\!
    \mathrm{BestAssign}(\mathrm{MEAN})$
  \STATE \textbf{// Gate}
  \IF{$\mathrm{Avg}(\mathrm{LCB},S_L)\ge\tau_r$}
    \STATE Play $S_L$; observe and update
      shared counters; $t\!\leftarrow\!t\!+\!1$
  \ELSIF{$\mathrm{Avg}(\mathrm{MEAN},S_M)\ge\tau_r$}
    \STATE Play $S_M$; observe and update
      shared counters; $t\!\leftarrow\!t\!+\!1$
  \ELSE
    \STATE \textbf{// Committed CTS phase $i$}
    \STATE Reset: $A_{m,(b,k),r}\!\leftarrow\!1,\
      B_{m,(b,k),r}\!\leftarrow\!1\
      \ \forall\, m,(b,k),r$
    \FOR{$j=1$ to $\min(2^i,\,T\!-\!t\!+\!1)$}
      \FOR{each $m,(b,k),r$}
        \STATE $\tilde\psi_{m,(b,k),r}\!\sim\!
          \mathrm{Beta}(A_{m,(b,k),r},
          B_{m,(b,k),r})$
      \ENDFOR
      \STATE Play $\mathrm{BestAssign}(r\tilde\psi)$;
        observe $x_{m,t}\!\in\!\{0,1\}$
      \FOR{each $m$}
        \STATE $n_{m,\pi_{m,t},\rho_{m,t}}
          \mathrel{+}= 1$;\
          $s_{m,\pi_{m,t},\rho_{m,t}}
          \mathrel{+}= x_{m,t}$
        \STATE $A_{m,\pi_{m,t},\rho_{m,t}}
          \mathrel{+}= x_{m,t}$;\
          $B_{m,\pi_{m,t},\rho_{m,t}}
          \mathrel{+}= 1\!-\!x_{m,t}$
      \ENDFOR
      \STATE $t\leftarrow t+1$
    \ENDFOR
    \STATE $i\leftarrow i+1$
  \ENDIF
\ENDWHILE
\end{algorithmic}
\end{algorithm}

{\em Satisficing Combinatorial Thompson Sampling}
(SAT-CTS) selects BS-beam-rate combinations for each
UE to meet the target throughput. Its pseudocode is
given in Algorithm~\ref{alg:lcb-cts}.
Each arm $i=(m,(b,k),r)$ maintains shared counters
$(n_i,s_i)$ updated every round, where $n_i$ represents the number of times arm $i$ has been selected, and $s_i$ represents the number of times its transmission is successful.\footnote{To emphasize the dependence on $t$, the values of these counters are sometimes represented by $n_i(t)$ and $s_i(t)$.} The algorithm also keeps Beta priors that represent the distribution of transmission success probability of each arm. First, the algorithm selects a sequence of super arms such that each arm gets played at least once, initializing its counters. The deterministic initialization phase lasts for $T_0$ rounds. At each time step,
the algorithm computes empirical means
$\hat\psi_i(t)=s_i/n_i$ and confidence radius
$c(t,n_i(t))=\sqrt{\frac{3 \log t}{2n_i(t)}}$,
yielding $\mathrm{LCB}$ and $\mathrm{MEAN}$ indices
(lines~19--23). These are fed to the assignment oracle (e.g., the Hungarian algorithm, line 24) ,
and the gate plays the first candidate whose average
meets $\tau_r$ in the order LCB, MEAN
(lines~25--29).

If neither gate fires, the algorithm enters a
\emph{committed CTS phase}: it resets all Beta
priors to $\mathrm{Beta}(1,1)$ and runs CTS for
exactly $2^i$ rounds without rechecking the gate
(lines~31--44). During these rounds, both shared
counters and private Beta posteriors are updated.
After phase~$i$ completes, the phase counter
increments and the gate is re-evaluated. This
committed-phase structure ensures each CTS
instance runs on a deterministic horizon with
fresh priors, enabling a clean regret analysis
via a geometric-series framework \cite{feng2025satisficing}.

\subsection{Time Complexity Analysis}

Per-round time complexity of the standard Hungarian algorithm during application for a square $n\times n$ matrix is $O(n^3)$. On application for rectangular matrices $m \times n$, the complexity is $O(m^2n)$, if $m<n$. 
Considering that beams are being matched to UEs, the dimensions are $m=M$, the number of UEs, and $n=BKR$, the total number of beam-rate pairs that can be assigned to UEs. Since the number of UEs is less than the number of beams for assignment to all UEs to be feasible, the per-round time complexity for the $m\times n$ matrix is $O(M^2\times B\times K\times R)$. The complexity of other areas of the algorithm such as computing Thompson samples is $O(M\times B\times K\times R)$, which is a term of smaller order than the complexity of the Hungarian algorithm. Therefore, omitting all terms of smaller order through the Big-O notation, the total time complexity of SAT-CTS over $T$ rounds is $O(T(M^2\times B\times K\times R))$.

\subsection{Main Regret Bounds and Their Discussion}

\label{sec:main_guarantees}

We next summarize the main finite-time guarantees of SAT-CTS. Before providing the main results, we introduce notation, definitions, and assumptions under which the main results hold.

\subsubsection{Notation}

\paragraph{System parameters}
$M$: number of UEs; $B$: number of BSs; $K$: number of beams per BS; $R$: number of discrete rate levels; $T$: time horizon; $T_0$: number of initialization rounds; $\mathcal A$: set of all base arms, with $|\mathcal A|=MBKR$; $\mathcal S$: set of all feasible super arms (assignments); $\mathcal R=\{r_1<\cdots<r_R\}$: set of available rates; $r_{\max}:=\max_{r\in\mathcal R} r$: maximum rate.

\paragraph{Per-arm quantities}
For each base arm $i\in\mathcal A$: $X_{i,t}\in\{0,1\}$: ACK/NACK outcome at time $t$, with $\mathbb E[X_{i,t}]=\psi_i$; $\psi_i$: transmission success probability; $\mu_i := r_i \psi_i \in [0,r_{\max}]$: expected throughput; $n_i(t)$: number of times arm $i$ has been played up to time $t$; $\hat\psi_i(t)$: empirical success probability; $\hat{\psi}_{i,n}$: empirical success probability based on $n$ i.i.d.\ samples.

\paragraph{Assignment-level quantities}
A super arm $s\in\mathcal S$ includes exactly one base arm per UE, hence exactly $M$ base arms, written as $\{i_m(s)\}_{m=1}^M$.
\begin{itemize}
\item $g(s):=\frac{1}{M}\sum_{m=1}^M \mu_{i_m(s)} = \frac{1}{M}\sum_{m=1}^M r_{i_m(s)}\,\psi_{i_m(s)}$: average expected throughput of assignment $s$. When the dependence on the success-probability vector needs to be explicit, we write $g(s;\psi_s)$.
\item $\hat g_{\mathrm{shared}}(s,t):=\frac{1}{M}\sum_{m=1}^M r_{i_m(s)}\,\hat\psi_{i_m(s)}(t)$: empirical average throughput using shared counters.
\item $\Delta_{\mathrm{sat}}(s) := [\tau_r - g(s)]_+$: per-round satisficing regret of assignment $s$.
\item $s^\star \in \arg\max_{s\in\mathcal S} g(s)$: optimal assignment; $g^\star := g(s^\star)$: optimal average throughput.
\item $\Delta_s := g(s^\star) - g(s)$: standard regret gap of assignment $s$.
\item $\Delta_{\max} := \max_{s \in \mathcal S} \Delta_s$: maximum standard regret gap.
\end{itemize}

\paragraph{Concentration and index values}
\begin{itemize}
\item $c(t,n):=\sqrt{\frac{3 \log t}{2n}}$: concentration radius.
\item $\mathrm{LCB}_i(t) := r_i\max\{0,\,\hat\psi_i(t)-c(t,n_i(t))\}$: lower confidence bound index.
\item $\mathrm{MEAN}_i(t) := r_i\,\hat\psi_i(t)$: empirical mean index.
\end{itemize}

\paragraph{Gap quantities}
\begin{itemize}
\item $\Delta_\star := g^\star-\tau_r$: realizability margin.
\item $\Delta_{\min}^{\mathrm{std}} := \min_{s\in\mathcal S,\; s\neq s^\star}\bigl(g(s^\star)-g(s)\bigr)$: minimum standard regret gap.
\item $\mathcal S_{\mathrm{sat}} := \{s\in\mathcal S:\ g(s)\ge \tau_r\}$: satisficing assignment set.
\item $\mathcal S_{\mathrm{bad}} := \{s\in\mathcal S:\ g(s)< \tau_r\}$: bad assignment set.
\item $\mathcal S_{\mathrm{bad}}^{(i)}:=\{s\in\mathcal S_{\mathrm{bad}}: i\in s\}$: bad assignments containing arm $i$.
\item $\Delta_i^{\mathrm{bad}} := \min_{s\in\mathcal S_{\mathrm{bad}}^{(i)}} \Delta_{\mathrm{sat}}(s)$: minimum satisficing gap for arm $i$ (when $\mathcal S_{\mathrm{bad}}^{(i)}\neq\emptyset$).
\item $\Delta_i^{\mathrm{norm}} := \frac{\Delta_i^{\mathrm{bad}}}{r_i}$: normalized gap for arm $i$.
\item When $\mathcal S_{\mathrm{bad}}^{(i)}=\emptyset$, we adopt the convention $\frac{r_i^2}{2(\Delta_i^{\mathrm{bad}})^2}:=0$.
\item $\Delta_\star^{\mathrm{nr}} := \tau_r - g^\star$: non-realizability margin (used when $\tau_r > g^\star$).
\item $\Delta^{\mathrm{nr}}(s) := \tau_r - g(s)$: per-assignment non-realizability gap.
\end{itemize}

\paragraph{Events}
\begin{itemize}
\item The good event at time $t$, under which confidence bounds hold for all base arms:
\begin{align}
\mathcal G_t:=\bigcap_{i\in\mathcal A}\left\{\big|\hat\psi_i(t)-\psi_i\big|\le c(t,n_i(t))\right\}. \label{eqn:goodevent}
\end{align}
\item The overestimation event for base arm $i$ at time $t$ with threshold $\varepsilon>0$:
\begin{align}
E_{i,t}(\varepsilon)
:=
\{ i\in S_t\}
\cap
\Bigl\{\hat\psi_{i,n_i(t)}\ge \psi_i+\varepsilon\Bigr\}. \label{eqn:overestimate_def}
\end{align}
\end{itemize}

\paragraph{Auxiliary counting quantities}
\begin{itemize}
\item $S_t \in \mathcal S$: super arm played at time $t$.
\item $B_{\mathrm{LCB}}(T) := \sum_{t=T_0+1}^{T}\mathbf 1\{S_t\in\mathcal S_{\mathrm{bad}},\;\mathcal G_t,\;\text{LCB at }t\}$: bad pulls in the LCB phase.
\item $B_{\mathrm{MEAN}}(T) := \sum_{t=T_0+1}^{T}\mathbf 1\{S_t\in\mathcal S_{\mathrm{bad}},\;\mathcal G_t,\;\text{MEAN at }t\}$: bad pulls in the MEAN phase.
\item $B_{\mathrm{sub}}(T) := \sum_{t=1}^T\mathbf{1}\{S_t\neq s^\star\}$: number of suboptimal plays in a fresh CTS instance.
\item $\mathcal T_i(T) := \{t\in\{T_0+1,\dots,T\}: \text{arm $i$ is played at time $t$}\}$: pull times of arm $i$.
\end{itemize}

\paragraph{CTS-phase constants}
$B_{\mathrm{cts}} := r_{\max}/M$: bounded-smoothness constant for the CTS phase
(Lemma~\ref{lem:lipschitz_constant}); $k^\star := M$; $K_{\max} := M$; $\alpha_1$: absolute constant from Theorem~1 of \cite{pmlr-v80-wang18a}; $\delta\in(0,1/4)$: confidence parameter; $\hat C_1$, $\hat C_0$: suboptimality constants (defined explicitly in Lemma~\ref{lem:subopt_direct}); $N_0$: critical observation count (defined in Section~\ref{sec:cts_phase} as a function of $\delta$); $T_{\mathrm{CTS}}^\star$: critical CTS horizon (Given in Definition~\ref{def:critical_cts_horizon} as a function of $\delta$); $i^\star:=\lceil\log_2 T_{\mathrm{CTS}}^\star\rceil$: critical round index.

\subsubsection{Assumptions}

The first two assumption are related to whether $\tau_r$ can be achieved by some assignment. Our algorithm achieves different regret guarantees under these assumptions.  

\begin{assumption}[Realizability with margin]
\label{ass:realizability_margin}
We assume that $\Delta_\star = g^\star-\tau_r > 0$, i.e., the optimal assignment achieves the target with positive margin.
\end{assumption}

\begin{assumption}[Non-realizability]
\label{ass:nonrealizable}
The satisficing threshold is strictly larger than the maximum achievable average throughput, i.e.,
$
\tau_r > g^\star
$.
In other words, $\Delta_\star^{\mathrm{nr}} = \tau_r-g^\star > 0$.
\end{assumption}

The next assumption is a standard simplifying assumption in CMAB theoretical analysis; see, e.g., \cite{DBLP:journals/corr/CombesLPT15}.

\begin{assumption}[Unique optimal assignment]
\label{ass:unique_optimal}
The optimal assignment is unique, i.e.,
$\arg\max_{s\in\mathcal S} g(s)=\{s^\star\}$.
\end{assumption}
Under Assumptions~\ref{ass:realizability_margin} and \ref{ass:unique_optimal}, we have
$s^\star\in \mathcal S_{\mathrm{sat}}$, $\Delta_{\min}^{\mathrm{std}} > 0$.

\subsubsection{Satisficing regret bound under realizable target}

When the target is realizable, SAT-CTS incurs bounded satisficing regret independent of horizon $T$.

\begin{theorem}
\label{thm:preview_realizable}
Under Assumptions~\ref{ass:realizability_margin}
and \ref{ass:unique_optimal},
the expected cumulative satisficing regret satisfies
\[
\mathbb{E}\!\left[R^S(T)\right]
\le R_{\mathrm{init}}
  + R_{\mathrm{conf}}
  + R_{\mathrm{MEAN}}
  + R_{\mathrm{CTS}},
\]
where $R_{\mathrm{init}}$ is the expected satisficing regret during initialization phase, $R_{\mathrm{conf}}$ is the expected satisficing regret incurred when the good event in \eqref{eqn:goodevent} fails for some $t$, and $R_{\mathrm{MEAN}}$, $R_{\mathrm{CTS}}$ represent the expected satisficing regrets incurred during MEAN and CTS phases of SAT-CTS respectively under the good event. Each term above is a finite constant independent
of~$T$, whose values are explicitly given as
\begin{align*}
R_{\mathrm{init}}
&= T_0\,\tau_r,~
R_{\mathrm{conf}}
= \frac{\pi^2}{3} |\mathcal{A}| \tau_r,~
R_{\mathrm{MEAN}}
= \tau_r \sum_{i\in\mathcal{A}}
   \frac{r_i^2}
        {2\,(\Delta_i^{\mathrm{bad}})^2},\\
R_{\mathrm{CTS}}
&\leq \tau_r\Bigg(
     \frac{\hat C_1\log 2}{2}(i^\star)^2
   + \hat C_0\,i^\star
   + \frac{\hat C_1\,i^\star\log 2+\hat C_0}{1-2\delta}\\
&\qquad\qquad\qquad\qquad
   + \frac{2\hat C_1\delta\log 2}{(1-2\delta)^2}
   \Bigg) \text{ for any } \delta \in (0,1/4).
\end{align*}
\end{theorem}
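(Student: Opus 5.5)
We decompose the rounds $t\in[T]$ by what SAT-CTS does at $t$ and whether $\mathcal G_t$ holds. Every round lies in exactly one of the following classes: (a) the $T_0$ initialization rounds; (b) a gated round $t>T_0$ (LCB or MEAN) on which $\mathcal G_t$ fails; (c) an LCB round under $\mathcal G_t$; (d) a MEAN round under $\mathcal G_t$; (e) a round inside a committed CTS phase. Since $\Delta_{\mathrm{sat}}(s)\le\tau_r$ for every $s$, class (a) contributes at most $T_0\tau_r=R_{\mathrm{init}}$.

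\textbf{Failure of the good event.} For class (b) we bound $\sum_t\mathbb P(\mathcal G_t^c)$. We take a union bound over arms $i\in\mathcal A$ and over the possible values $n\le t$ of $n_i(t)$. Hoeffding's inequality with radius $c(t,n)$ gives $\mathbb P(|\hat\psi_{i,n}-\psi_i|>c(t,n))\le 2t^{-3}$. Summing over $n\le t$ gives $2t^{-2}$ per arm, and summing over $t$ gives $\frac{\pi^2}{3}|\mathcal A|$. Multiplying by $\tau_r$ yields $R_{\mathrm{conf}}$.

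\textbf{LCB rounds.} For class (c), under $\mathcal G_t$ every LCB index lies below $\mu_i$. Hence $g(S_L)\ge \mathrm{Avg}(\mathrm{LCB},S_L)\ge\tau_r$, so $S_L\in\mathcal S_{\mathrm{sat}}$ and $B_{\mathrm{LCB}}(T)=0$. These rounds incur no regret.

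\textbf{MEAN rounds.} For class (d), suppose a bad $S_M\in\mathcal S_{\mathrm{bad}}$ is played through the MEAN gate. Then $\frac1M\sum_m r_{i_m}(\hat\psi_{i_m}-\psi_{i_m})\ge\tau_r-g(S_M)$. Consequently some arm $i\in S_M$ must be overestimated, i.e.\ $E_{i,t}(\varepsilon_i)$ holds with $\varepsilon_i$ of order $\Delta_i^{\mathrm{norm}}$. We charge the round to that arm. Since arm $i$ is pulled at each charged round, successive charges to $i$ use distinct sample sizes $n$. So the number of charged rounds is at most $\sum_n\mathbb P(\hat\psi_{i,n}\ge\psi_i+\varepsilon_i)\le\sum_n e^{-2n\varepsilon_i^2}\le 1/(2\varepsilon_i^2)$. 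With $\varepsilon_i=\Delta_i^{\mathrm{bad}}/r_i$ this gives $r_i^2/(2(\Delta_i^{\mathrm{bad}})^2)$, and the convention handles arms lying in no bad assignment. The delicate point is the averaging: a pigeonhole over $M$ arms would naively lose a factor $M$ in $\varepsilon_i$. I would first try to argue the charge per arm directly through the normalized gap definition, or else absorb the factor into the constant.

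\textbf{CTS phases (main obstacle).} For class (e), committed phase $i$ lasts $2^i$ rounds with fresh Beta(1,1) priors. Its satisficing regret is at most $\tau_r B_{\mathrm{sub}}(2^i)$, because $s^\star$ incurs zero satisficing regret. By Lemma~\ref{lem:subopt_direct}, with probability at least $1-\delta$ a fresh CTS run has $\mathbb E B_{\mathrm{sub}}(2^i)\le \hat C_1\log 2^i+\hat C_0$. Phases with $i\le i^\star$ therefore contribute at most $\sum_{i\le i^\star}(\hat C_1 i\log2+\hat C_0)\le\frac{\hat C_1\log2}{2}(i^\star)^2+\hat C_0 i^\star$, up to lower-order terms.

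For $i>i^\star$, i.e.\ $2^i\ge T^\star_{\mathrm{CTS}}$, I would show that a phase is followed by another CTS phase only with probability at most $2\delta$. The reason is that after $N_0$ plays of $s^\star$, the shared counters make the MEAN (or LCB) gate fire, and the gate then fires in all later rounds. Phase $i^\star+j$ is thus reached with probability at most $(2\delta)^j$. Summing $(2\delta)^j(\hat C_1(i^\star+j)\log2+\hat C_0)$ gives the geometric terms $\frac{\hat C_1 i^\star\log2+\hat C_0}{1-2\delta}+\frac{2\hat C_1\delta\log2}{(1-2\delta)^2}$.

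This last step is the hardest: it requires coupling the CTS-phase concentration with the shared-counter gate and controlling the dependence between phases, which I would handle by conditioning on the history at phase boundaries.
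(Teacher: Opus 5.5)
Your proposal is correct and follows the paper's proof essentially step for step. It uses the same decomposition, the $2t^{-2}$ union bound for $\mathcal G_t^c$, zero bad LCB pulls by conservatism, and per-arm overestimation charging at level $\Delta_i^{\mathrm{norm}}$ over distinct sample sizes. The CTS part also matches: doubling fresh-CTS rounds whose continuation probability is at most $2\delta$ (Markov on $B_{\mathrm{sub}}$ plus uniform concentration of $s^\star$'s shared counters beyond $N_0$ samples), summed geometrically, exactly as in the paper's Propositions and Corollary.

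Two small corrections. First, no factor $M$ is lost in the MEAN step: $\hat g_{\mathrm{shared}}(S_M,t)-g(S_M)$ is an average, so some single term satisfies $r_{i_m}(\hat\psi_{i_m}-\psi_{i_m})\ge\Delta_{\mathrm{sat}}(S_M)\ge\Delta^{\mathrm{bad}}_{i_m}$ (Lemma~\ref{lem:mean_bad_implies_single}), which gives the stated $R_{\mathrm{MEAN}}$ exactly. Second, Lemma~\ref{lem:subopt_direct} is an unconditional expectation bound, and $\delta$ enters only through Markov's inequality in Lemma~\ref{lem:sufficient_optimal_plays}.
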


The proof of Theorem \ref{thm:preview_realizable} is given in Section~\ref{sec:realizable}. This theorem shows for the first time that bounded satisficing regret can be achieved for a combinatorial semi-bandit problem. Interestingly, the expected regret during rounds in which confidence bounds fail or rounds in the MEAN phase of SAT-CTS increase linearly with the number of base arms. This scaling matches with the scaling of the gap-dependent regret with base arms in standard CMAB with linear rewards \cite{pmlr-v125-merlis20a}. The inverse dependence on $(\Delta_i^{\mathrm{bad}})^2$ in $R_{\mathrm{MEAN}}$ represents hardness of distinguishing satisficing assignments from others. Under the good event, no regret is incurred during the LCB phase. The regret coming from the CTS phase is a constant independent of $T$. Our proof uses geometrically increasing phase lengths to show that occurrence probability of CTS phases geometrically decay. In each CTS phase, we utilize the suboptimal assignment selection bound for CTS given in \cite{pmlr-v80-wang18a}. However, it is shown in \cite[Theorem~3]{pmlr-v80-wang18a} that the regret of CTS can have exponential dependence on $M$. To the best of our knowledge, there is no known tighter bound for the regret of CTS for the assignment problem that we consider in our work. Nevertheless, we perform comprehensive experiments to show how the regret scales with $M$ and other parameters that determine the number of base arms in Section \ref{sec:experiments}.

\subsubsection{Standard regret bound under non-realizable target}

When the target is non-realizable, SAT-CTS incurs regret polylogarithmic in $T$.

\begin{theorem}
\label{thm:preview_nonrealizable}
Under Assumptions~\ref{ass:nonrealizable} and \ref{ass:unique_optimal}, SAT-CTS achieves the following standard regret bound
\[
\mathbb{E}[R^{\mathrm{std}}(T)] = O((\log T)^2).
\]
\end{theorem}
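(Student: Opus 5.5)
We decompose the horizon into three kinds of rounds: the $T_0$ initialization rounds, the gated rounds in which SAT-CTS plays $S_L$ or $S_M$, and the rounds spent inside committed CTS phases. The standard regret is then bounded separately on each kind. Initialization contributes at most $T_0\Delta_{\max}$.

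For the gated rounds, we show that under Assumption~\ref{ass:nonrealizable} the gate can fire only finitely often in expectation.
\begin{itemize}
\item \emph{LCB gate.} On the good event $\mathcal G_t$ of \eqref{eqn:goodevent}, every LCB index is at most the true mean. Hence $\mathrm{Avg}(\mathrm{LCB},S_L)\le g(S_L)\le g^\star<\tau_r$, so the LCB gate never fires. Rounds where $\mathcal G_t$ fails are handled by a Hoeffding bound plus a union bound over arms and sample counts. This gives $\sum_t\mathbb P(\mathcal G_t^c)\le \frac{\pi^2}{3}|\mathcal A|$, exactly as in the term $R_{\mathrm{conf}}$ of Theorem~\ref{thm:preview_realizable}.
\item \emph{MEAN gate.} If it fires with $S_M=s$, then $\hat g_{\mathrm{shared}}(s,t)\ge\tau_r\ge g(s)+\Delta^{\mathrm{nr}}(s)\ge g(s)+\Delta^{\mathrm{nr}}_\star$. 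So some arm $i\in s$ must be overestimated by at least $\Delta_\star^{\mathrm{nr}}/r_i$, i.e.\ the event $E_{i,t}(\Delta_\star^{\mathrm{nr}}/r_i)$ of \eqref{eqn:overestimate_def} occurs. Each MEAN round plays $s$, so $n_i$ increments. Summing Hoeffding tails over the distinct values of $n_i$ gives at most $\sum_i r_i^2/(2(\Delta_\star^{\mathrm{nr}})^2)$ such rounds in expectation, mirroring $R_{\mathrm{MEAN}}$.
\end{itemize}
Consequently the expected number of gated rounds is a $T$-independent constant $C_{\mathrm{gate}}$, contributing regret at most $\Delta_{\max}C_{\mathrm{gate}}$.

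For the committed CTS phases, phase $i$ has length $2^i$ (truncated at $T$), so at most $\lceil\log_2 T\rceil+1$ phases start before $T$. Each phase restarts CTS from fresh $\mathrm{Beta}(1,1)$ priors. Conditioned on the history at the start of the phase, the phase is therefore a fresh CTS instance on the same semi-bandit with horizon $2^i$; the shared counters do not affect sampling. We then invoke the suboptimal-play bound of Lemma~\ref{lem:subopt_direct} (built on Theorem~1 of \cite{pmlr-v80-wang18a}, with bounded smoothness $B_{\mathrm{cts}}=r_{\max}/M$ from Lemma~\ref{lem:lipschitz_constant}). It gives $\mathbb E[B_{\mathrm{sub}}(2^i)]\le \hat C_1\log 2^i+\hat C_0$, so the regret of phase $i$ is at most $\Delta_{\max}(\hat C_1 i\log 2+\hat C_0)$. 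Summing over $i\le\log_2 T+1$ yields $\Delta_{\max}\big(\tfrac{\hat C_1\log2}{2}(\log_2T+1)^2+\hat C_0(\log_2 T+1)\big)=O((\log T)^2)$. In the non-realizable case CTS phases may recur indefinitely, so we do not need the geometric-decay argument of the realizable case, only this crude count. Adding the three pieces gives $\mathbb E[R^{\mathrm{std}}(T)]\le T_0\Delta_{\max}+\Delta_{\max}C_{\mathrm{gate}}+O((\log T)^2)$.

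The main obstacle is justifying that each committed phase really behaves as an independent fresh CTS run, so that the per-instance bound applies with horizon $2^i$ despite random phase start times. I would handle this by conditioning on the filtration at each phase start and noting that the phase's actions depend only on the reset posteriors and new observations, which are i.i.d.\ given the start. Time-slot ACK/NACK draws are i.i.d.\ per arm, which is implicit in the model's $\psi_i$ being stationary. A secondary subtlety is that the MEAN-gate overestimation count must use the shared counters, which are also incremented during CTS phases. This only helps, because more samples shrink the deviation probability, but the sum over $n$ must be indexed by sample count rather than by time to avoid a $\log T$ factor.
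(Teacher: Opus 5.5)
Your proposal is correct and follows essentially the same route as the paper. The LCB gate is blocked on $\mathcal G_t$, and MEAN firings are charged to single-arm overestimation events $E_{i,t}(\Delta_\star^{\mathrm{nr}}/r_i)$ counted by sample index, which gives a $T$-independent non-CTS transient; the committed phases are then bounded by summing the fresh-CTS suboptimal-play bound of Lemma~\ref{lem:subopt_direct} over $O(\log T)$ doubling phases, with no geometric-decay argument. Your explicit conditioning on the history at each phase start (the $j$th phase has deterministic length $2^j$, possibly truncated) is a justification the paper leaves implicit, and the slip in your closed form for $\sum_{i\le J} i$ (it is $J(J+1)/2$, not $J^2/2$) does not affect the $O((\log T)^2)$ conclusion.
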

\remove{Due to limited space, the proof of Theorem \ref{thm:preview_nonrealizable} is given in the appendix of the technical report \cite{ozyldrm2025tech}.}\add{The proof of Theorem \ref{thm:preview_nonrealizable} is given in Appendix \ref{sec:nonrealizable}.} This theorem shows that under non-realizable target, the standard regret of SAT-CTS grows almost as slow as that of CTS \cite{pmlr-v80-wang18a}, which is $O(\log T)$. Indeed, in the proof of Theorem \ref{thm:preview_nonrealizable}, we show that SAT-CTS incurs only a finite expected transient outside committed CTS rounds. After this transient, its regret is governed by the sum of the regret contributions of restarted CTS rounds, resulting in standard regret that grows polylogarithmically with the horizon \(T\). These results prove the versatility of SAT-CTS by showing that it can work favorably under both settings without knowing which setting it is facing with. 

\section{Realizable Satisficing Regret Analysis} \label{sec:realizable}

In this section we upper bound the satisficing regret when $\tau_r$ is realizable. \remove{The proofs omitted due to limited space are given in the technical report \cite{ozyldrm2025tech}.}\add{Omitted proofs can be found in Appendix \ref{appendix:omitted}.}

\subsection{Technical Lemmas}

The following is a result of subgaussianity of $\hat\psi_{i,n} - \psi_i$.

\begin{lemma}
\label{ass:subgaussian}
For each arm $i\in\mathcal A$ and $n\geq 1$, the empirical success probability $\hat\psi_{i,n}$ satisfies, for all $\varepsilon>0$,
\begin{align*}
\mathbb P\!\left(\hat\psi_{i,n}\ge\psi_i+\varepsilon\right)
&\le e^{-2n\varepsilon^2}, ~~
\mathbb P\!\left(\hat\psi_{i,n}\le\psi_i-\varepsilon\right)\le e^{-2n\varepsilon^2}.
\end{align*}
\end{lemma}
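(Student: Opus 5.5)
This is Hoeffding's inequality applied to a sum of i.i.d. Bernoulli variables. Fix an arm $i\in\mathcal A$ and $n\ge 1$. By definition, $\hat\psi_{i,n}=\frac1n\sum_{k=1}^n X_{i,(k)}$, where $X_{i,(k)}\in\{0,1\}$ is the ACK/NACK outcome of the $k$-th pull of arm $i$ and $\mathbb E[X_{i,(k)}]=\psi_i$.

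First we justify that these $n$ outcomes are i.i.d., even though arms are chosen adaptively. The outcome of each pull depends only on the channel realization in that slot, which is drawn independently across slots under the quasi-static model with success probability $\psi_i$. We then use the standard stack-of-rewards (skeleton) construction: pre-draw an i.i.d. sequence of outcomes for arm $i$, and let the $k$-th pull reveal the $k$-th entry. The adaptive selection rule only decides when the next entry is revealed, so $\hat\psi_{i,n}$ has the law of an average of $n$ i.i.d. Bernoulli$(\psi_i)$ variables. This is exactly the sense of the notation "$n$ i.i.d. samples" in the paper.

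Second, each centered variable $X_{i,(k)}-\psi_i$ takes values in an interval of length $1$. Hoeffding's lemma gives
\[
\mathbb E\big[e^{\lambda (X_{i,(k)}-\psi_i)}\big]\le e^{\lambda^2/8},
\]
so each centered variable is $1/2$-subgaussian. By independence, $\mathbb E\big[e^{\lambda n(\hat\psi_{i,n}-\psi_i)}\big]\le e^{n\lambda^2/8}$. The Chernoff bound then yields
\[
\mathbb P(\hat\psi_{i,n}\ge\psi_i+\varepsilon)\le \exp\big(-\lambda n\varepsilon+n\lambda^2/8\big).
\]
Optimizing at $\lambda=4\varepsilon$ gives $e^{-2n\varepsilon^2}$. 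The lower tail follows by applying the same argument to $1-X_{i,(k)}$, or equivalently with $\lambda<0$.

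The only point requiring care is the first step, the legitimacy of treating adaptively collected samples as i.i.d. The skeleton construction resolves this, since the event in question concerns only the first $n$ entries of the fixed i.i.d. stack. The concentration step itself is routine.
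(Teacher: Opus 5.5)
Your proof is correct and follows the same route as the paper, which states the lemma as a direct consequence of the sub-Gaussianity of $\hat\psi_{i,n}-\psi_i$, i.e., Hoeffding's inequality for an average of $n$ i.i.d.\ Bernoulli$(\psi_i)$ samples; your Hoeffding-lemma-plus-Chernoff computation with $\lambda=4\varepsilon$ reproduces this exactly. One remark: the paper builds the i.i.d.\ property into the definition of $\hat\psi_{i,n}$, and independence across slots is an implicit modeling assumption rather than a consequence of quasi-staticity (which only says the channel is constant within a slot), so your skeleton construction is supplementary justification rather than a required step.
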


Using Lemma \ref{ass:subgaussian} and union bound gives the following concentration lemma for $\hat\psi_i(t)$. 
\begin{lemma}
\label{lem:single_arm}
For any arm $i\in\mathcal A$ and any $t>T_0$,
$\mathbb P\!\left(\big|\hat\psi_i(t)-\psi_i\big|>c(t,n_i(t))\right)\le 2 t^{-2}$.
\end{lemma}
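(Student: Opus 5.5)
The statement to prove is Lemma~\ref{lem:single_arm}. The plan is to remove the randomness of the pull count $n_i(t)$ by a union bound over its possible values, and then apply Lemma~\ref{ass:subgaussian} at the fixed sample size.

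\textbf{Setting up the union.} Since $t>T_0$ and every base arm is played at least once during initialization, we have $1\le n_i(t)\le t$. The empirical mean satisfies $\hat\psi_i(t)=\hat\psi_{i,n_i(t)}$. Here $\hat\psi_{i,n}$ denotes the average of the first $n$ ACK/NACK outcomes of arm $i$. These outcomes are i.i.d.\ Bernoulli$(\psi_i)$, which is the standard reward-table (stack-of-rewards) construction.

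On the event $\{|\hat\psi_i(t)-\psi_i|>c(t,n_i(t))\}$, some $n\in\{1,\dots,t\}$ satisfies both $n_i(t)=n$ and $|\hat\psi_{i,n}-\psi_i|>c(t,n)$. Dropping the requirement $n_i(t)=n$ gives
\[
\mathbb P\bigl(|\hat\psi_i(t)-\psi_i|>c(t,n_i(t))\bigr)\le \sum_{n=1}^{t}\mathbb P\bigl(|\hat\psi_{i,n}-\psi_i|>c(t,n)\bigr).
\]
This step is what takes care of the dependence between the adaptive count $n_i(t)$ and the samples. For each fixed $n$, the average $\hat\psi_{i,n}$ is a genuine average of $n$ i.i.d.\ samples, whatever the selection rule was.

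\textbf{Per-term bound.} For each fixed $n$, apply both tails of Lemma~\ref{ass:subgaussian} with $\varepsilon=c(t,n)=\sqrt{3\log t/(2n)}$. Then $2n\varepsilon^2=3\log t$, so each term is at most $2e^{-3\log t}=2t^{-3}$.

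\textbf{Summing.} Summing over the at most $t$ values of $n$ gives $t\cdot 2t^{-3}=2t^{-2}$, which is the claim.

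\textbf{Main obstacle.} The only subtle point is the justification of the reward-table coupling. That is, one must check that the samples feeding $\hat\psi_{i,n}$ are i.i.d.\ for each fixed $n$, so that Lemma~\ref{ass:subgaussian} applies even though $n_i(t)$ is chosen adaptively. I would state this coupling explicitly: pre-draw $X_{i,1},X_{i,2},\dots$ i.i.d.\ for each arm, and let the $n$-th pull of arm $i$ reveal $X_{i,n}$. This yields the same law as the actual process, so the bound transfers to it.
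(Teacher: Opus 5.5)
Your proposal is correct and follows the paper's proof essentially step for step. Like the paper, you use $1\le n_i(t)\le t$ for $t>T_0$ to take a union bound over the possible values of the pull count, bound each fixed-$n$ term by $2e^{-2n c(t,n)^2}=2t^{-3}$ via Lemma~\ref{ass:subgaussian}, and sum to $2t^{-2}$. Your explicit reward-table coupling justifies applying the fixed-$n$ concentration despite adaptive sampling, a point the paper leaves implicit.
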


The next lemma characterizes the smoothness of the expected average throughput in base arm rewards.
\begin{lemma}
\label{lem:lipschitz_constant}
For any feasible assignment $s\in\mathcal S$ and any two success-probability vectors $\psi_s,\tilde\psi_s \in [0,1]^M$,
\[
|g(s;\psi_s)-g(s;\tilde\psi_s)|
\le \frac{r_{\max}}{M}\|\psi_s-\tilde\psi_s\|_1.
\]
Hence the bounded-smoothness assumption in \cite{pmlr-v80-wang18a}
holds with $B_{\mathrm{cts}} := r_{\max}/M$.
\end{lemma}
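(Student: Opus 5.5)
The plan is to expand the definition of $g$ and bound the difference term by term. Fix $s\in\mathcal S$. Write its base arms as $\{i_m(s)\}_{m=1}^M$, with rates $r_{i_m(s)}\in\mathcal R$. Also write $\psi_s=(\psi_{i_1(s)},\dots,\psi_{i_M(s)})$ and $\tilde\psi_s$ for the corresponding vectors. By the definition of $g$,
\[
g(s;\psi_s)-g(s;\tilde\psi_s)=\frac{1}{M}\sum_{m=1}^M r_{i_m(s)}\bigl(\psi_{i_m(s)}-\tilde\psi_{i_m(s)}\bigr),
\]
since $g$ is linear in the success-probability vector with coefficients $r_{i_m(s)}/M$.

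Next I apply the triangle inequality to this sum. Since every rate satisfies $0<r_{i_m(s)}\le r_{\max}$, this gives
\[
|g(s;\psi_s)-g(s;\tilde\psi_s)|\le\frac{1}{M}\sum_{m=1}^M r_{i_m(s)}\bigl|\psi_{i_m(s)}-\tilde\psi_{i_m(s)}\bigr|\le\frac{r_{\max}}{M}\|\psi_s-\tilde\psi_s\|_1 .
\]
This is the claimed inequality.

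For the final sentence, I match this to the bounded-smoothness condition of \cite{pmlr-v80-wang18a}. That condition asks for a constant $B$ such that, for every super arm $S$ and any two mean vectors, $|r_S(\mu)-r_S(\mu')|\le B\sum_{i\in S}|\mu_i-\mu'_i|$. Here the reward of super arm $s$ is $g(s;\cdot)$, and it depends only on the coordinates of arms in $s$. The relevant mean vector is the vector of Bernoulli success probabilities $\psi$, which is what the Beta posteriors of CTS track. The rate factors $r_i$ are known deterministic weights folded into $g$, so the observed outcomes stay in $[0,1]$ as CTS requires. Under this identification the inequality just proved is exactly bounded smoothness with $B_{\mathrm{cts}}=r_{\max}/M$.

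There is no real obstacle in the argument. The only point needing care is confirming that the CTS framework is applied with $\psi$, not $\mu_i=r_i\psi_i$, as its parameter. That choice is what puts the factor $r_{\max}$ into the constant. It also yields the monotonicity in $\psi$ that \cite{pmlr-v80-wang18a} requires, since all $r_i>0$.
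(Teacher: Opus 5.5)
Your proof is correct and matches the paper's argument essentially line for line: write $g(s;\psi_s)-g(s;\tilde\psi_s)$ as a rate-weighted average of coordinate differences, apply the triangle inequality, and bound each $r_{i_m(s)}$ by $r_{\max}$. Your remark that CTS must be parameterized by $\psi$ rather than $\mu_i=r_i\psi_i$ makes explicit a point the paper leaves implicit, though the monotonicity comment is not needed for the bounded-smoothness (Lipschitz) condition.
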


The next lemma bounds the regret under $\mathcal G^c_t$.
\begin{lemma}
\label{lem:good_event_cost}
For $t > T_0$, we have $\mathbb P(\mathcal G_t^c)\le 2|\mathcal A|\,t^{-2}$, and
\[
\mathbb E\!\left[\sum_{t>T_0}\Delta_{\mathrm{sat}}(S_t)\,\mathbf 1\{\mathcal G_t^c\}\right]
\le \frac{\pi^2}{3}\,|\mathcal A|\,\tau_r.
\]
\end{lemma}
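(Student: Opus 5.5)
The first claim follows from Lemma~\ref{lem:single_arm} and a union bound over the base arms. By definition \eqref{eqn:goodevent}, $\mathcal G_t^c=\bigcup_{i\in\mathcal A}\{|\hat\psi_i(t)-\psi_i|>c(t,n_i(t))\}$. For $t>T_0$ every arm has $n_i(t)\ge 1$ because of the initialization phase, so the confidence radius is well defined and Lemma~\ref{lem:single_arm} applies to each arm. Summing the bounds over the $|\mathcal A|$ arms gives
\[
\mathbb P(\mathcal G_t^c)\le \sum_{i\in\mathcal A}2t^{-2}=2|\mathcal A|t^{-2}.
\]
Lemma~\ref{lem:single_arm} already absorbs the fact that $n_i(t)$ is random. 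It does so by a union bound over the possible values $n\le t$, using Lemma~\ref{ass:subgaussian} with $\varepsilon=c(t,n)$, which gives $2e^{-3\log t}$ per value and hence $2t^{-2}$ in total. I treat that lemma as given.

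For the regret statement, I first bound the per-round regret deterministically. Since $g(s)\ge 0$ for every assignment, we have $\Delta_{\mathrm{sat}}(S_t)=[\tau_r-g(S_t)]_+\le \tau_r$ regardless of which phase (LCB, MEAN, or CTS) selected $S_t$. Therefore
\[
\mathbb E\Bigl[\sum_{t>T_0}\Delta_{\mathrm{sat}}(S_t)\mathbf 1\{\mathcal G_t^c\}\Bigr]\le \tau_r\sum_{t>T_0}\mathbb P(\mathcal G_t^c)\le 2|\mathcal A|\tau_r\sum_{t\ge 1}t^{-2}=2|\mathcal A|\tau_r\cdot\frac{\pi^2}{6}=\frac{\pi^2}{3}|\mathcal A|\tau_r .
\]
Interchanging expectation and the sum is justified by Tonelli's theorem, since all terms are nonnegative.

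The only delicate point is the validity of Lemma~\ref{lem:single_arm} during committed CTS phases, where arms are selected adaptively. The shared counters are updated in every round, and the per-arm concentration relies only on the i.i.d. reward stream of each arm, for example via a stack-of-rewards construction. Adaptive selection therefore does not invalidate the bound. Beyond that, the argument is routine.
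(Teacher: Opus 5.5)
Your proof is correct and follows the paper's argument essentially verbatim. It uses a union bound over $i\in\mathcal A$ via Lemma~\ref{lem:single_arm}, the deterministic bound $\Delta_{\mathrm{sat}}(S_t)\le\tau_r$, and summation of $2|\mathcal A|\tau_r t^{-2}$ over $t>T_0$ against $\sum_{t\ge1}t^{-2}=\pi^2/6$; your remarks on Tonelli and on the stack-of-rewards justification under adaptive arm selection only make explicit what the paper leaves implicit.
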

\begin{proof}
The bound $\mathbb P(\mathcal G_t^c)\le 2|\mathcal A|\,t^{-2}$ follows by a union bound over $i\in\mathcal A$ using Lemma~\ref{lem:single_arm}. Since $\Delta_{\mathrm{sat}}(S_t)\le \tau_r$,
\[
\mathbb E\!\left[\Delta_{\mathrm{sat}}(S_t)\,\mathbf 1\{\mathcal G_t^c\}\right]
\le \tau_r\,\mathbb P(\mathcal G_t^c)
\le 2|\mathcal A|\,\tau_r\,t^{-2}.
\]
Summing the above display over $t>T_0$ yields the result.
\com{Having $t^{-2}$ in Lemma~\ref{lem:single_arm} is enough to get the bound in this lemma.}
\end{proof}

The next lemma proves conservatism of LCB on $\mathcal G_t$.
\begin{lemma}
\label{lem:lcb_conservative}
For $t > T_0$, on $\mathcal G_t$, for every base arm $i\in\mathcal A$ we have $\mu_i\ge \mathrm{LCB}_i(t)$.
Consequently, for any assignment $s\in\mathcal S$,
$g(s)\ \ge\ \frac{1}{M}\sum_{m=1}^{M}\mathrm{LCB}_{i_m(s)}(t)$.
\end{lemma}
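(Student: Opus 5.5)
The plan is to work arm by arm and then sum over the UEs in the assignment. Fix $t>T_0$ and assume $\mathcal G_t$ holds. After the initialization phase every base arm has been played at least once, so $n_i(t)\ge 1$ for all $i\in\mathcal A$. This makes $\hat\psi_i(t)$ and $c(t,n_i(t))$ well defined.

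By the definition of $\mathcal G_t$ in \eqref{eqn:goodevent}, each arm satisfies $\hat\psi_i(t)-c(t,n_i(t))\le\psi_i$. Since also $\psi_i\ge 0$, we get
\[
\max\{0,\hat\psi_i(t)-c(t,n_i(t))\}\le\psi_i .
\]
Multiplying by the rate $r_i>0$ preserves the inequality, which gives $\mathrm{LCB}_i(t)=r_i\max\{0,\hat\psi_i(t)-c(t,n_i(t))\}\le r_i\psi_i=\mu_i$.

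For the consequence, fix any $s\in\mathcal S$. Apply the per-arm bound to each of the $M$ base arms $i_m(s)$, sum over $m\in[M]$, and divide by $M$. The left-hand side is then exactly $g(s)=\frac1M\sum_m\mu_{i_m(s)}$, and the right-hand side is the averaged LCB, which is the claimed inequality.

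There is no real obstacle here. The only care needed is the clipping at zero, which is handled by $\psi_i\ge0$, and the requirement $t>T_0$, which guarantees every counter is positive.
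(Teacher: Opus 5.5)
Your proof is correct and matches the paper's argument: on $\mathcal G_t$ you get $\psi_i\ge\max\{0,\hat\psi_i(t)-c(t,n_i(t))\}$, multiply by $r_i$, and then average over the $M$ constituent arms of $s$. You also spell out two steps the paper leaves implicit, namely that the clipping at zero is handled by $\psi_i\ge 0$ and that $t>T_0$ ensures $n_i(t)\ge 1$.
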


\begin{proof}
On $\mathcal G_t$, $\psi_i\ge \max\{0, \hat\psi_i(t)-c(t,n_i(t)) \}$, hence
\[
\mu_i = r_i\,\psi_i
      \ge r_i\max\{0,\,\hat\psi_i(t)-c(t,n_i(t))\}
      = \mathrm{LCB}_i(t).
\]
Averaging over the $M$ components of any $s$ gives the stated inequality.
\end{proof}

\subsection{Regret Decomposition}

We work with the expected cumulative satisficing regret
$\mathbb E[R^S(T)]$. SAT-CTS operates in three
phases: LCB, MEAN, and CTS. We bound each phase's
contribution separately, using the bad pull counts $B_{\mathrm{LCB}}(T)$ and $B_{\mathrm{MEAN}}(T)$ from Section~\ref{sec:main_guarantees}. The following lemma decomposes the regret of SAT-CTS over phases. 
\begin{lemma}
\label{lem:regret_good_bad}
\begin{align*}
\mathbb E\bigl[R^S(T)\bigr]
&\le \frac{\pi^2}{3}\,|\mathcal A|\,\tau_r
+ \tau_r\Bigl(
  \mathbb E\bigl[B_{\mathrm{LCB}}(T)\bigr]
+ \mathbb E\bigl[B_{\mathrm{MEAN}}(T)\bigr]
\Bigr)\\
&\quad+ \mathbb{E}\!\bigg[
  \sum_{t\in\mathcal{T}_{\mathrm{CTS}}}
  \!\!\Delta_{\mathrm{sat}}(S_t)\bigg] + T_0 \tau_r,
\end{align*}
where $\mathcal{T}_{\mathrm{CTS}}$ is the set of
time steps in the CTS phase.
\end{lemma}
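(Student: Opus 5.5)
The proof is a pathwise decomposition of $R^S(T)=\sum_{t=1}^T\Delta_{\mathrm{sat}}(S_t)$ according to which part of SAT-CTS chose $S_t$, followed by taking expectations. Every round belongs to exactly one of four disjoint sets: the initialization rounds $t\le T_0$; the LCB-gate rounds; the MEAN-gate rounds; and the committed CTS rounds $\mathcal T_{\mathrm{CTS}}$. Hence
\[
R^S(T)=\sum_{t\le T_0}\Delta_{\mathrm{sat}}(S_t)+\sum_{t\in\mathcal T_{\mathrm{LCB}}}\Delta_{\mathrm{sat}}(S_t)+\sum_{t\in\mathcal T_{\mathrm{MEAN}}}\Delta_{\mathrm{sat}}(S_t)+\sum_{t\in\mathcal T_{\mathrm{CTS}}}\Delta_{\mathrm{sat}}(S_t).
\]
The first sum is at most $T_0\tau_r$, since $\Delta_{\mathrm{sat}}(s)=[\tau_r-g(s)]_+\le\tau_r$ because $g(s)\ge 0$. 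The CTS sum is kept as is; it is the last term of the claimed bound.

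For the LCB and MEAN sums, each indicator $\mathbf 1\{t\in\mathcal T_{\mathrm{LCB}}\}$ (and likewise for MEAN) is split as
\[
\mathbf 1\{t\in\mathcal T_{\mathrm{LCB}}\}=\mathbf 1\{t\in\mathcal T_{\mathrm{LCB}},\mathcal G_t^c\}+\mathbf 1\{t\in\mathcal T_{\mathrm{LCB}},\mathcal G_t\}.
\]
On $\mathcal G_t$, a nonzero regret $\Delta_{\mathrm{sat}}(S_t)>0$ occurs only if $S_t\in\mathcal S_{\mathrm{bad}}$. In that case the regret is at most $\tau_r$, so
\[
\Delta_{\mathrm{sat}}(S_t)\mathbf 1\{\mathcal G_t,\text{LCB at }t\}\le\tau_r\mathbf 1\{S_t\in\mathcal S_{\mathrm{bad}},\mathcal G_t,\text{LCB at }t\}.
\]
Summing over $t>T_0$ gives $\tau_r B_{\mathrm{LCB}}(T)$, and the MEAN rounds give $\tau_r B_{\mathrm{MEAN}}(T)$ in the same way.

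The pieces on $\mathcal G_t^c$ from the LCB and MEAN rounds are together bounded by $\sum_{t>T_0}\Delta_{\mathrm{sat}}(S_t)\mathbf 1\{\mathcal G_t^c\}$. Including all such rounds only enlarges this sum, because the terms are nonnegative. Taking expectations and applying Lemma~\ref{lem:good_event_cost} bounds it by $\frac{\pi^2}{3}|\mathcal A|\tau_r$. By linearity of expectation, the four contributions add up to the stated inequality.

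The one point requiring care is the CTS sum. It must be kept whole, including rounds where $\mathcal G_t$ fails, rather than also split by $\mathcal G_t$; this avoids double counting with the confidence-failure term. Since the LCB, MEAN and CTS round sets are disjoint and every summand is nonnegative, the bound remains valid. Beyond this bookkeeping there is no real obstacle.
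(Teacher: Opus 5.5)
Your proof is correct and is essentially the paper's argument. The paper splits each round by $\mathcal G_t$ versus $\mathcal G_t^c$ first, charging all of $\mathcal G_t^c$ to Lemma~\ref{lem:good_event_cost}, and then by phase, whereas you split by phase first; the resulting four terms are the same, and keeping the CTS sum whole is convenient rather than necessary, since double counting CTS rounds on $\mathcal G_t^c$ would only loosen an upper bound with nonnegative summands.
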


\begin{proof}
Write $\mathbf 1 = \mathbf 1\{\mathcal G_t\}
+ \mathbf 1\{\mathcal G_t^c\}$. Initialization costs at most $T_0 \tau_r$ regret.
The contribution from $\mathcal G_t^c$ is at most
$\frac{\pi^2}{3}|\mathcal A|\tau_r$ by
Lemma~\ref{lem:good_event_cost}.
On $\mathcal G_t$, bad pulls in the LCB and MEAN
phases are counted by $B_{\mathrm{LCB}}$ and
$B_{\mathrm{MEAN}}$, each costing at most $\tau_r$.
The CTS-phase regret is bounded separately since
each CTS round uses fresh priors and the
satisficing regret per round is controlled by the
standard CTS regret (Section~\ref{sec:cts_phase}).
\com{
\begin{align*}
   & \mathbb{E}[R^S(T)] \\ 
   &= \sum_{t=1}^T \mathbb{E} [\Delta_{\text{sat}}(S_t) (\mathbf{1}\{t \in \text{INIT} \} + \mathbf{1}\{t \in \{ \text{L}, \text{M}, \text{C} \}   \} \\
    & \leq T_0 \tau_r + \sum_{t=T_0+1}^T \mathbb{E} [\Delta_{\text{sat}}(S_t) \mathbf{1}\{t \in \{ \text{L}, \text{M}, \text{C} \} , {\cal G}_t  \} ] \\
    & + \sum_{t=T_0+1}^T \mathbb{E} [\Delta_{\text{sat}}(S_t) \mathbf{1}\{t \in \{ \text{L}, \text{M}, \text{C} \} , {\cal G}^c_t  \} ] \\
     & \leq T_0 \tau_r + \sum_{t=1}^T \mathbb{E} [\Delta_{\text{sat}}(S_t) \mathbf{1}\{t \in \{ \text{L}, \text{M}, \text{C} \} , {\cal G}_t  \} ] \\
    & + \sum_{t=T_0+1}^T \mathbb{E} [\Delta_{\text{sat}}(S_t) \mathbf{1}\{ {\cal G}^c_t  \} ] \\
    &\leq T_0 \tau_r + \frac{\pi^2}{3}\,|\mathcal A|\,\tau_r + \sum_{t=1}^T \mathbb{E} [\Delta_{\text{sat}}(S_t) \mathbf{1}\{t \in \text{C} \} ] \\
    &+ \sum_{t=1}^T \mathbb{E} [\Delta_{\text{sat}}(S_t) \mathbf{1}\{t \in \text{L} , {\cal G}_t  \} ] 
    + \sum_{t=1}^T \mathbb{E} [\Delta_{\text{sat}}(S_t) \mathbf{1}\{t \in \text{M} , {\cal G}_t  \} ] 
\end{align*}
Notation fitted version:
\begin{align*}
& \mathbb E[R^S(T)] \\
&= \sum_{t=1}^{T} \mathbb E\!\left[\Delta_{\mathrm{sat}}(S_t)\right] \\
&= \sum_{t=1}^{T_0} \mathbb E\!\left[\Delta_{\mathrm{sat}}(S_t)\right]
   + \sum_{t=T_0+1}^{T}
     \mathbb E\!\left[\Delta_{\mathrm{sat}}(S_t)
     \mathbf 1\{\mathcal G_t\}\right] \\
& + \sum_{t=T_0+1}^{T}
     \mathbb E\!\left[\Delta_{\mathrm{sat}}(S_t)
     \mathbf 1\{\mathcal G_t^c\}\right] \\
& \le T_0 \tau_r
   + \sum_{t=T_0+1}^{T}
     \mathbb E\!\left[\Delta_{\mathrm{sat}}(S_t)
     \mathbf 1\{\mathcal G_t\}\right] \\
& + \sum_{t=T_0+1}^{T}
     \mathbb E\!\left[\Delta_{\mathrm{sat}}(S_t)
     \mathbf 1\{\mathcal G_t^c\}\right] \\
& \le T_0\tau_r
   + \frac{\pi^2}{3}\,|\mathcal A|\,\tau_r
   + \tau_r\,\mathbb E[B_{\mathrm{LCB}}(T)] \\
& + \tau_r\,\mathbb E[B_{\mathrm{MEAN}}(T)]
   + \mathbb E\!\bigg[
     \sum_{t\in\mathcal T_{\mathrm{CTS}}}
     \Delta_{\mathrm{sat}}(S_t)
     \bigg].
\end{align*}
}
\end{proof}

Next, we show that there are no bad pulls in the LCB phase.
\begin{lemma}
\label{lem:no_bad_lcb}
On $\mathcal G_t$, $B_{\mathrm{LCB}}(T)=0$ a.s.
\end{lemma}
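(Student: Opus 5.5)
The argument is a direct per-round one: we show that no single round $t$ can contribute to $B_{\mathrm{LCB}}(T)$, so the sum vanishes identically. Fix any $t\in\{T_0+1,\dots,T\}$ with $\mathcal G_t$ in force and in which SAT-CTS plays the LCB candidate. By the gate in Algorithm~\ref{alg:lcb-cts}, the LCB branch fires only when
\[
\mathrm{Avg}(\mathrm{LCB},S_L)=\frac{1}{M}\sum_{m=1}^M \mathrm{LCB}_{i_m(S_L)}(t)\ \ge\ \tau_r ,
\]
and in that case $S_t=S_L$.

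Next we invoke Lemma~\ref{lem:lcb_conservative}. On $\mathcal G_t$, for every assignment $s$ we have $g(s)\ge \frac{1}{M}\sum_m \mathrm{LCB}_{i_m(s)}(t)$. Applying this with $s=S_L$ gives $g(S_t)=g(S_L)\ge \mathrm{Avg}(\mathrm{LCB},S_L)\ge\tau_r$. Hence $S_t\in\mathcal S_{\mathrm{sat}}$, i.e., $S_t\notin\mathcal S_{\mathrm{bad}}$, and equivalently $\Delta_{\mathrm{sat}}(S_t)=0$.

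It follows that every summand $\mathbf 1\{S_t\in\mathcal S_{\mathrm{bad}},\mathcal G_t,\text{LCB at }t\}$ is zero pointwise on the sample space. The identity therefore holds surely, and a fortiori almost surely. In particular $\mathbb E[B_{\mathrm{LCB}}(T)]=0$ in Lemma~\ref{lem:regret_good_bad}.

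There is no real obstacle. The only point requiring care is bookkeeping: the LCB index and $\mathcal G_t$ must refer to the same counters $n_i(t),\hat\psi_i(t)$, namely those available at the start of round $t$, before the update. That is exactly how both are defined. One should also note that the $\max\{0,\cdot\}$ clipping in the LCB is already handled by Lemma~\ref{lem:lcb_conservative}, since $\psi_i\ge0$.
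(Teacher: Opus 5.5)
Your proposal is correct and matches the paper's proof: on $\mathcal G_t$, the LCB gate condition $\mathrm{Avg}(\mathrm{LCB},S_L)\ge\tau_r$ together with Lemma~\ref{lem:lcb_conservative} gives $g(S_t)\ge\tau_r$, so $S_t\notin\mathcal S_{\mathrm{bad}}$. You also note that each summand of $B_{\mathrm{LCB}}(T)$ already carries $\mathbf 1\{\mathcal G_t\}$, so the identity holds surely; this is a correct and slightly cleaner reading of the paper's ``on $\mathcal G_t$'' phrasing.
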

\begin{proof}
On $\mathcal G_t$, if $s$ is selected in
the LCB phase then
$
g(s)\ge\frac{1}{M}\sum_m\mathrm{LCB}_{i_m(s)}(t)
\ge\tau_r
$
by Lemma~\ref{lem:lcb_conservative}, so $s\notin\mathcal S_{\mathrm{bad}}$.
\end{proof}

By Lemma~\ref{lem:no_bad_lcb}, the regret in Lemma~\ref{lem:regret_good_bad}
simplifies to
\begin{align}\label{eqn:regret_simplified}
\begin{split}
\mathbb E\bigl[R^S(T)\bigr]
&\le \frac{\pi^2}{3}\,|\mathcal A|\,\tau_r
+ \tau_r\,\mathbb E\bigl[B_{\mathrm{MEAN}}(T)\bigr]\\
&\quad+ \mathbb{E}\!\bigg[
  \sum_{t\in\mathcal{T}_{\mathrm{CTS}}}
  \!\!\Delta_{\mathrm{sat}}(S_t)\bigg] + T_0 \tau_r.
\end{split}
\end{align}
It remains to bound
$\mathbb E[B_{\mathrm{MEAN}}(T)]$
(Section~\ref{sec:mean_phase}) and
the CTS-phase regret
(Section~\ref{sec:cts_phase}).

\subsection{Bounding $\mathbb E[B_{\mathrm{MEAN}}(T)]$}
\label{sec:mean_phase}

Recall the overestimation event $E_{i,t}(\varepsilon)$ from \eqref{eqn:overestimate_def} and the gap quantities $\Delta_i^{\mathrm{bad}}$, $\Delta_i^{\mathrm{norm}}$, $\mathcal S_{\mathrm{bad}}^{(i)}$ from Section~\ref{sec:main_guarantees}. The following lemma relates bad arm pulls in the MEAN phase with the overestimation events. 
\begin{lemma}
\label{lem:mean_bad_implies_single}
On any round $t$ in the MEAN phase,
\[
\{S_t\in\mathcal S_{\mathrm{bad}},\ \hat g_{\mathrm{shared}}(S_t,t)\ge \tau_r\}
\ \subseteq\
\bigcup_{m=1}^{M} E_{i_m(S_t),t}\!\left(\Delta_{i_m(S_t)}^{\mathrm{norm}}\right).
\]
\end{lemma}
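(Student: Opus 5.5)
We argue by contraposition: assume $S_t\in\mathcal S_{\mathrm{bad}}$ and that no overestimation event $E_{i_m(S_t),t}(\Delta^{\mathrm{norm}}_{i_m(S_t)})$ occurs for $m\in[M]$. We then show $\hat g_{\mathrm{shared}}(S_t,t)<\tau_r$.

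\textbf{Step 1 (each gap is well defined).} Since $S_t\in\mathcal S_{\mathrm{bad}}$, every component arm $i=i_m(S_t)$ satisfies $S_t\in\mathcal S^{(i)}_{\mathrm{bad}}$. Hence $\mathcal S^{(i)}_{\mathrm{bad}}\neq\emptyset$, and by the definition of $\Delta_i^{\mathrm{bad}}$ as a minimum over this set,
\[
0<\Delta_i^{\mathrm{bad}}\le \Delta_{\mathrm{sat}}(S_t)=\tau_r-g(S_t).
\]
Positivity holds because every bad assignment has strictly positive satisficing gap and $\mathcal S$ is finite. The normalized gap $\Delta_i^{\mathrm{norm}}=\Delta_i^{\mathrm{bad}}/r_i$ is therefore a legitimate positive threshold.

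\textbf{Step 2 (no overestimation bounds the empirical score).} The arm $i$ is in $S_t$, so the first part of $E_{i,t}$ holds. The absence of the event therefore means $\hat\psi_{i,n_i(t)}<\psi_i+\Delta_i^{\mathrm{norm}}$; here $\hat\psi_{i,n_i(t)}$ coincides with the shared-counter estimate $\hat\psi_i(t)$. Multiplying by $r_i$ gives
\[
r_i\hat\psi_i(t)<\mu_i+\Delta_i^{\mathrm{bad}}\le \mu_i+\Delta_{\mathrm{sat}}(S_t).
\]

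\textbf{Step 3 (averaging, and the main obstacle).} Averaging Step 2 over the $M$ components gives
\[
\hat g_{\mathrm{shared}}(S_t,t)<g(S_t)+\Delta_{\mathrm{sat}}(S_t)=\tau_r,
\]
which contradicts $\hat g_{\mathrm{shared}}(S_t,t)\ge\tau_r$. The only delicate point is that the per-arm slack must not accumulate into $M$ times the gap. The average avoids this: $M$ slacks, each at most $\Delta_{\mathrm{sat}}(S_t)$, divided by $M$ give at most $\Delta_{\mathrm{sat}}(S_t)$. This is exactly why $\Delta^{\mathrm{bad}}_i$ is compared with the full average gap, without any $1/M$ scaling.

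The inequality is strict because each per-arm inequality in Step 2 is strict. The MEAN-phase condition itself is not needed beyond the stated hypothesis $\hat g_{\mathrm{shared}}\ge\tau_r$, which the MEAN gate guarantees in practice. Contraposition then yields the claimed inclusion in the union of the $E_{i_m(S_t),t}$.
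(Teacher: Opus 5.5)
Your proof is correct and follows essentially the same argument as the paper. Both assume that no constituent arm overestimates by its normalized gap, use $\Delta^{\mathrm{bad}}_{i}\le\Delta_{\mathrm{sat}}(S_t)$ for every $i\in S_t$, and average the per-arm bounds to get $\hat g_{\mathrm{shared}}(S_t,t)<g(S_t)+\Delta_{\mathrm{sat}}(S_t)=\tau_r$, a contradiction; your added remarks on the positivity of $\Delta^{\mathrm{bad}}_i$ and on $\hat\psi_{i,n_i(t)}=\hat\psi_i(t)$ spell out points the paper leaves implicit.
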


Next, we bound $B_{\mathrm{MEAN}}(T)$ in terms of the events in \eqref{eqn:overestimate_def}.
\begin{lemma}
\label{lem:Bmean_dominate}
$
B_{\mathrm{MEAN}}(T)
\le
\sum_{i\in\mathcal A}\sum_{t=T_0+1}^{T}\mathbf 1\{E_{i,t}(\Delta_i^{\mathrm{norm}})\}
$.
\end{lemma}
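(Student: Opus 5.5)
The bound follows pathwise from Lemma~\ref{lem:mean_bad_implies_single} by a union bound and an exchange of the order of summation.

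\textbf{Step 1: each counted round triggers an overestimation event.} Fix a round $t\in\{T_0+1,\dots,T\}$ that contributes to $B_{\mathrm{MEAN}}(T)$. Then $\mathcal G_t$ holds, the MEAN branch is taken, and $S_t\in\mathcal S_{\mathrm{bad}}$.

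\begin{itemize}
\item Since the MEAN branch fires, $S_t=S_M$ and $\mathrm{Avg}(\mathrm{MEAN},S_M)\ge\tau_r$.
\item By definition, $\mathrm{Avg}(\mathrm{MEAN},S_t)=\hat g_{\mathrm{shared}}(S_t,t)$, so $\hat g_{\mathrm{shared}}(S_t,t)\ge\tau_r$.
\item The hypotheses of Lemma~\ref{lem:mean_bad_implies_single} therefore hold at $t$. Hence there is some $m\in[M]$ for which $E_{i_m(S_t),t}(\Delta^{\mathrm{norm}}_{i_m(S_t)})$ occurs.
\end{itemize}

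Since $S_t\in\mathcal S_{\mathrm{bad}}$ contains $i_m(S_t)$, the set $\mathcal S_{\mathrm{bad}}^{(i_m(S_t))}$ is nonempty. So $\Delta^{\mathrm{norm}}_{i_m(S_t)}$ is well defined and positive, because every bad assignment has $\Delta_{\mathrm{sat}}>0$.

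\textbf{Step 2: union bound over the arms of $S_t$.} From Step 1 we get, pointwise,
\[
\mathbf 1\{S_t\in\mathcal S_{\mathrm{bad}},\mathcal G_t,\text{MEAN at }t\}\le\sum_{m=1}^M\mathbf 1\{E_{i_m(S_t),t}(\Delta^{\mathrm{norm}}_{i_m(S_t)})\}.
\]
Each event $E_{i,t}$ already contains the requirement $\{i\in S_t\}$. The arms $i_m(S_t)$ are distinct, so the right-hand side equals $\sum_{i\in S_t}\mathbf 1\{E_{i,t}(\Delta_i^{\mathrm{norm}})\}$.

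This is at most $\sum_{i\in\mathcal A}\mathbf 1\{E_{i,t}(\Delta_i^{\mathrm{norm}})\}$, because every term added for an arm $i\notin S_t$ is zero. For arms with $\mathcal S_{\mathrm{bad}}^{(i)}=\emptyset$, $\Delta_i^{\mathrm{norm}}$ is undefined. For such arms we either drop the term or set $\Delta_i^{\mathrm{norm}}:=+\infty$, which makes the event empty; this is consistent with the convention already adopted.

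\textbf{Step 3: sum and swap.} Summing over $t=T_0+1,\dots,T$ and exchanging the finite sums gives the claimed bound.

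\textbf{Main obstacle.} The only delicate points are checking that the MEAN-phase gate condition is exactly the hypothesis $\hat g_{\mathrm{shared}}\ge\tau_r$, and handling the undefined gaps for arms that belong to no bad assignment. Both are handled above, so the bound holds almost surely.
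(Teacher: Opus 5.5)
Your proof is correct and follows the paper's argument: each MEAN-phase bad pull satisfies the hypothesis of Lemma~\ref{lem:mean_bad_implies_single}, so its indicator is dominated by $\sum_{i\in\mathcal A}\mathbf 1\{E_{i,t}(\Delta_i^{\mathrm{norm}})\}$, and summing over $t$ gives the claim. Your extra checks make explicit what the paper leaves implicit or handles by convention: that the MEAN gate condition is exactly $\hat g_{\mathrm{shared}}(S_t,t)\ge\tau_r$, and that arms lying in no bad assignment can be assigned $\Delta_i^{\mathrm{norm}}=+\infty$.
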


\begin{proof}
By Lemma~\ref{lem:mean_bad_implies_single}, every MEAN-phase bad pull implies
$E_{i,t}(\Delta_i^{\mathrm{norm}})$ for at least one base arm $i$ used at time $t$. Hence
$
\mathbf 1\{S_t\in\mathcal S_{\mathrm{bad}},\ \text{MEAN phase}\}
\le
\sum_{i\in\mathcal A}\mathbf 1\{E_{i,t}(\Delta_i^{\mathrm{norm}})\}
$.
Summing over $t=T_0+1,\dots,T$ yields the claim.
\end{proof}

Recall the pull-time set $\mathcal T_i(T)$ from Section~\ref{sec:main_guarantees}. The next lemma bounds occurrences of $E_{i,t}(\varepsilon)$ in terms of deviations of $\hat\psi_{i,n}$.
\begin{lemma}
\label{lem:decouple_counts}
Fix $i\in\mathcal A$ and $\varepsilon>0$. Then
$
\sum_{t=T_0+1}^{T}\mathbf 1\{E_{i,t}(\varepsilon)\}
=
\sum_{t\in\mathcal T_i(T)}\mathbf 1\{E_{i,t}(\varepsilon)\}
\le
\sum_{n=1}^{T}\mathbf 1\Bigl\{\hat\psi_{i,n}\ge \psi_i+\varepsilon\Bigr\}
$.
\end{lemma}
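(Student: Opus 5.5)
The argument is a change of index from time to the pull count of arm $i$. I will first dispose of the equality and then build an injection from the pull times into $\{1,\dots,T\}$.

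\emph{The equality.} By \eqref{eqn:overestimate_def}, $E_{i,t}(\varepsilon)$ is contained in $\{i\in S_t\}$, so its indicator vanishes at every $t\in\{T_0+1,\dots,T\}$ with $t\notin\mathcal T_i(T)$. Dropping these zero terms turns the sum over $t$ into the sum over $t\in\mathcal T_i(T)$.

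\emph{The injection.} I fix the convention, consistent with the counters in Algorithm~\ref{alg:lcb-cts}, that $n_i(t)$ and $\hat\psi_i(t)=\hat\psi_{i,n_i(t)}$ are computed from the observations collected before the play at time $t$. The counter $n_i$ increases by exactly one each time arm $i$ is played. Hence, if $t<t'$ both lie in $\mathcal T_i(T)$, then $n_i(t')\ge n_i(t)+1$. So the map $t\mapsto n_i(t)$ is strictly increasing, and in particular injective, on $\mathcal T_i(T)$.

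\emph{The range.} The initialization guarantees that every base arm is played at least once during the first $T_0$ rounds. So $n_i(t)\ge 1$ for every $t>T_0$. Also, $n_i(t)\le t-1\le T$. The map therefore sends $\mathcal T_i(T)$ into $\{1,\dots,T\}$.

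\emph{Re-indexing.} For $t\in\mathcal T_i(T)$, the indicator of $E_{i,t}(\varepsilon)$ equals $\mathbf 1\{\hat\psi_{i,n}\ge\psi_i+\varepsilon\}$ with $n=n_i(t)$. I will interpret $\hat\psi_{i,n}$ as the mean of the first $n$ observed outcomes of arm $i$, which under the i.i.d.\ model is the object controlled by Lemma~\ref{ass:subgaussian}. By injectivity, distinct pull times contribute distinct values of $n$. The sum over $\mathcal T_i(T)$ is thus a sum of nonnegative terms indexed by a subset of $\{1,\dots,T\}$, so it is at most $\sum_{n=1}^T \mathbf 1\{\hat\psi_{i,n}\ge\psi_i+\varepsilon\}$.

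The only delicate point is bookkeeping. The counter must be read before the update at time $t$ for the injectivity and the range $[1,T]$ to hold, and the inequality is pathwise, so no probability argument is needed.
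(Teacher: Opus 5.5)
Your proposal is correct and follows essentially the same route as the paper: both re-index the sum over pull times $\mathcal T_i(T)$ by the pull count $n_i(t)$, which increases by one at each pull of arm $i$, and then enlarge the resulting index set to $\{1,\dots,T\}$. Your version uses only injectivity of $t\mapsto n_i(t)$ and the range bound $1\le n_i(t)\le T$ (the lower bound coming from initialization), which makes it a slightly more careful rendering of the paper's claim that the counts run consecutively from $n_i(T_0+1)$ to $n_i(T)$.
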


Using Lemma \ref{lem:decouple_counts}, the next lemma bounds the expected occurrences of $E_{i,t}(\varepsilon)$.
\begin{lemma}
\label{lem:one_arm_geo}
Fix $i\in\mathcal A$ and $\varepsilon>0$. 
$
\mathbb E\!\left[\sum_{t=T_0+1}^{T}\mathbf 1\{E_{i,t}(\varepsilon)\}\right]
\le
\frac{1}{2\varepsilon^2}
$.
\end{lemma}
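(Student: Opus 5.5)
The argument combines Lemma~\ref{lem:decouple_counts}, Lemma~\ref{ass:subgaussian}, and a geometric-series sum. Taking expectations in Lemma~\ref{lem:decouple_counts} and using linearity gives
\[
\mathbb E\!\left[\sum_{t=T_0+1}^{T}\mathbf 1\{E_{i,t}(\varepsilon)\}\right]
\le \sum_{n=1}^{T}\mathbb P\!\left(\hat\psi_{i,n}\ge\psi_i+\varepsilon\right).
\]
Here $\hat\psi_{i,n}$ is the empirical mean of the first $n$ i.i.d.\ ACK/NACK outcomes of arm $i$. This is a fixed-sample-size quantity, and the reindexing by pull count in Lemma~\ref{lem:decouple_counts} has already removed the dependence on the random, policy-dependent times at which arm $i$ is played. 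Consequently, the upper-tail bound of Lemma~\ref{ass:subgaussian} applies term by term, giving $\mathbb P(\hat\psi_{i,n}\ge\psi_i+\varepsilon)\le e^{-2n\varepsilon^2}$ for each $n$.

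It remains to sum the geometric series. We have
\[
\sum_{n=1}^{T}e^{-2n\varepsilon^2}\le\sum_{n=1}^{\infty}e^{-2n\varepsilon^2}=\frac{1}{e^{2\varepsilon^2}-1}\le\frac{1}{2\varepsilon^2},
\]
where the last step uses $e^{x}-1\ge x$ with $x=2\varepsilon^2$. An equivalent route is the integral comparison $\sum_{n\ge1}e^{-2n\varepsilon^2}\le\int_0^\infty e^{-2x\varepsilon^2}\,dx=1/(2\varepsilon^2)$, which is valid because the summand is decreasing in $n$. Either way the bound is independent of $T$, as the statement requires.

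The main point requiring care is the legitimacy of applying the fixed-$n$ tail bound. This rests on the standard ``stack of rewards'' construction, in which the $n$-th outcome of arm $i$ is drawn from an i.i.d.\ sequence independent of the selection rule. Under this construction $\hat\psi_{i,n}$ has exactly the law of an average of $n$ i.i.d.\ Bernoulli$(\psi_i)$ variables, regardless of the adaptive LCB/MEAN/CTS choices. Lemma~\ref{lem:decouple_counts} is stated in precisely these terms, so I will invoke it and this construction directly rather than re-derive them. After that the remaining steps are routine.
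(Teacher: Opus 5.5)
Your proposal is correct and follows the paper's proof exactly: take expectations in Lemma~\ref{lem:decouple_counts}, apply the upper tail of Lemma~\ref{ass:subgaussian} for each fixed $n$, and bound the geometric series by $1/(e^{2\varepsilon^2}-1)\le 1/(2\varepsilon^2)$ using $e^x-1\ge x$. Your remark on the stack-of-rewards construction makes explicit the justification the paper leaves implicit for applying a fixed-$n$ tail bound under adaptive arm selection.
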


\begin{proof}
By Lemma~\ref{lem:decouple_counts} and linearity of expectation,
\[
\mathbb E\!\left[\sum_{t=T_0+1}^{T}\mathbf 1\{E_{i,t}(\varepsilon)\}\right]
\le
\sum_{n=1}^{T}\mathbb P\!\left(\hat\psi_{i,n}\ge \psi_i+\varepsilon\right).
\]
By Lemma~\ref{ass:subgaussian},
$
\mathbb P\!\left(\hat\psi_{i,n}\ge \psi_i+\varepsilon\right)\le e^{-2n\varepsilon^2}
$.
Hence
\[
\sum_{n=1}^{T}\mathbb P\!\left(\hat\psi_{i,n}\ge \psi_i+\varepsilon\right)
\le
\sum_{n=1}^{\infty} e^{-2n\varepsilon^2}
=
\frac{1}{e^{2\varepsilon^2}-1}
\le
\frac{1}{2\varepsilon^2},
\]
which follows from $e^x-1\ge x$ with $x=2\varepsilon^2$.
\end{proof}

Equipped with the results above, we bound $\mathbb E[B_{\mathrm{MEAN}}(T)]$ in the following lemma. 
\begin{lemma}
\label{lem:bound_Bmean_clean}
$
\mathbb E\bigl[B_{\mathrm{MEAN}}(T)\bigr]
\le
\sum_{i\in\mathcal A}
\frac{r_i^2}{2\,(\Delta_i^{\mathrm{bad}})^2}
$.
\end{lemma}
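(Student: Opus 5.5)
The bound follows by chaining Lemma~\ref{lem:Bmean_dominate} with Lemma~\ref{lem:one_arm_geo}. First, taking expectations in Lemma~\ref{lem:Bmean_dominate} and using linearity gives
\[
\mathbb E\bigl[B_{\mathrm{MEAN}}(T)\bigr]
\le
\sum_{i\in\mathcal A}\mathbb E\!\left[\sum_{t=T_0+1}^{T}\mathbf 1\{E_{i,t}(\Delta_i^{\mathrm{norm}})\}\right].
\]
Next, for each arm $i$ I split into two cases.

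If $\mathcal S_{\mathrm{bad}}^{(i)}=\emptyset$, arm $i$ never belongs to a bad assignment. Lemma~\ref{lem:mean_bad_implies_single} only invokes events $E_{i_m(S_t),t}$ for arms of bad played assignments $S_t$, so such an arm never appears there. I would therefore sharpen the domination in Lemma~\ref{lem:Bmean_dominate} to a sum over arms with $\mathcal S_{\mathrm{bad}}^{(i)}\neq\emptyset$, inserting the indicator $\mathbf 1\{S_t\in\mathcal S_{\mathrm{bad}}\}$. The contribution of these arms is then $0$, which matches the convention $r_i^2/(2(\Delta_i^{\mathrm{bad}})^2):=0$.

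If $\mathcal S_{\mathrm{bad}}^{(i)}\neq\emptyset$, then $\Delta_i^{\mathrm{bad}}=\min_{s\in\mathcal S_{\mathrm{bad}}^{(i)}}[\tau_r-g(s)]_+>0$, because the set is finite and each bad $s$ has $g(s)<\tau_r$. Hence $\varepsilon=\Delta_i^{\mathrm{norm}}=\Delta_i^{\mathrm{bad}}/r_i>0$, and Lemma~\ref{lem:one_arm_geo} applies. It gives
\[
\mathbb E\!\left[\sum_{t}\mathbf 1\{E_{i,t}(\Delta_i^{\mathrm{norm}})\}\right]\le \frac{1}{2(\Delta_i^{\mathrm{norm}})^2}=\frac{r_i^2}{2(\Delta_i^{\mathrm{bad}})^2}.
\]
Summing over $i\in\mathcal A$ yields the claim.

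The only delicate point is the first case: making sure the domination of Lemma~\ref{lem:Bmean_dominate} genuinely restricts to arms with a nonempty bad set, so that $\varepsilon$ is never undefined or zero. I would handle this by restating the per-round inequality as $\mathbf 1\{S_t\in\mathcal S_{\mathrm{bad}},\text{MEAN}\}\le\sum_{i:\,\mathcal S_{\mathrm{bad}}^{(i)}\neq\emptyset}\mathbf 1\{E_{i,t}(\Delta_i^{\mathrm{norm}})\}$. This holds because any arm of a bad $S_t$ lies in $\mathcal S_{\mathrm{bad}}^{(i)}$ with $S_t$ as witness. The remaining steps are routine.
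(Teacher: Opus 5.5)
Your proposal is correct and follows the paper's proof. Like the paper, you take expectations in Lemma~\ref{lem:Bmean_dominate} and apply Lemma~\ref{lem:one_arm_geo} with $\varepsilon=\Delta_i^{\mathrm{norm}}=\Delta_i^{\mathrm{bad}}/r_i$ for each arm; your restriction to arms with $\mathcal S_{\mathrm{bad}}^{(i)}\neq\emptyset$, together with the check that $\Delta_i^{\mathrm{bad}}>0$ there, just makes rigorous what the paper handles via the convention $r_i^2/(2(\Delta_i^{\mathrm{bad}})^2):=0$.
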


\begin{proof}
By Lemma~\ref{lem:Bmean_dominate} and linearity of expectation,
\[
\mathbb E[B_{\mathrm{MEAN}}(T)]
\le
\sum_{i\in\mathcal A}\mathbb E\!\left[\sum_{t=T_0+1}^{T}\mathbf 1\{E_{i,t}(\Delta_i^{\mathrm{norm}})\}\right].
\]
Apply Lemma~\ref{lem:one_arm_geo} to each $i$ with
$
\Delta_i^{\mathrm{norm}}=\frac{\Delta_i^{\mathrm{bad}}}{r_i}
$,
to obtain
\[
\mathbb E[B_{\mathrm{MEAN}}(T)]
\le
\sum_{i\in\mathcal A} \frac{1}{2(\Delta_i^{\mathrm{norm}})^2}
=
\sum_{i\in\mathcal A} \frac{r_i^2}{2\,(\Delta_i^{\mathrm{bad}})^2}.
\]
Arms with $\mathcal S_{\mathrm{bad}}^{(i)}=\emptyset$ contribute zero by convention ($\frac{r_i^2}{2(\Delta_i^{\mathrm{bad}})^2}:=0$).
\end{proof}
\subsection{CTS-Phase Analysis}
\label{sec:cts_phase}

The CTS phase runs in rounds $i=1,2,3,\ldots$ with a
doubling schedule: round~$i$ runs a fresh CTS instance
(initialised with $\mathrm{Beta}(1,1)$ priors) for
exactly $2^i$ steps.
All observations update the shared counters used by the
MEAN gate.
After round~$i$, if any assignment satisfies
$\hat g_{\mathrm{shared}}(s,t)\ge\tau_r$, the MEAN gate
takes over; otherwise round~$i+1$ begins.
We analyse this phase following the geometric-series
framework of SELECT~\cite{feng2025satisficing},
treating each fresh CTS round as the oracle.

Recall that under Assumption~\ref{ass:unique_optimal},
$s^\star$ is the unique optimal super arm. For a fresh CTS
instance on a deterministic horizon~$T$, recall $B_{\mathrm{sub}}(T)$ from Section~\ref{sec:main_guarantees}. Let
\[
\mathrm{Reg}^{\mathrm{std}}_{\mathrm{CTS}}(T)
:=
\mathbb E\!\left[\sum_{t=1}^{T}\bigl(g(s^\star)-g(S_t)\bigr)\right]
\]
denote the standard regret of a fresh CTS instance over a deterministic horizon $T$.
By Lemma~\ref{lem:lipschitz_constant}, the bounded-smoothness
assumption in \cite{pmlr-v80-wang18a} holds with
$B_{\mathrm{cts}}=r_{\max}/M$.

The next lemma, whose proof follows from Theorem 1 of \cite{pmlr-v80-wang18a}, bounds the expected number of times a suboptimal assignment is played in a single phase of CTS.
\begin{lemma}
\label{lem:subopt_direct}
For a fresh CTS instance on deterministic horizon $T$, letting
$B_{\mathrm{cts}} := r_{\max}/M$ (Lemma~\ref{lem:lipschitz_constant}), for any
$
0<\varepsilon<
\frac{M\,\Delta_{\min}^{\mathrm{std}}}
{2\,r_{\max}\,\bigl((k^\star)^2+2\bigr)}
$,
we have
$
\mathbb{E}[B_{\mathrm{sub}}(T)]
\;\le\;
\hat{C}_1 \log T + \hat{C}_0
$,
where
\begin{align*}
\hat{C}_1
&:=
\sum_{i\in\mathcal{A}}
\max_{s:\,i\in s,\,s\neq s^\star}
\frac{8B_{\mathrm{cts}}^2\,|s|^2}
     {\bigl(\Delta_s
      -2B_{\mathrm{cts}}((k^\star)^2+2)\varepsilon
      \bigr)^2}, \\
\hat{C}_0
&:=
\frac{|\mathcal{A}|K_{\max}^2}{\varepsilon^2}
+3|\mathcal{A}|
+\alpha_1\cdot
\frac{8}{\varepsilon^2}
\!\left(\frac{4}{\varepsilon^2}+1\right)^{k^\star}
\!\log\frac{k^\star}{\varepsilon^2},
\end{align*}
with $B_{\mathrm{cts}} = r_{\max}/M$, $k^\star=M$, $K_{\max}=M$, and $\alpha_1$ the
absolute constant from Theorem~1 of
\cite{pmlr-v80-wang18a}.
\end{lemma}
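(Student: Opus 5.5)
The plan is to instantiate the proof of Theorem~1 of \cite{pmlr-v80-wang18a}, which bounds the regret of CTS by decomposing the rounds in which a suboptimal super arm is played. Here we keep the \emph{count} of suboptimal plays rather than weighting each play by its gap $\Delta_s$. The first step is to check the hypotheses of that theorem for our problem. The outcomes $X_{i,t}\in\{0,1\}$ are independent Bernoulli across base arms and i.i.d.\ over time, because each UE's ACK/NACK depends only on its own link and the channels are quasi-static i.i.d.\ across slots. The oracle $\mathrm{BestAssign}$ is exact, since the Hungarian algorithm solves the assignment problem optimally. Bounded smoothness holds with $B_{\mathrm{cts}}=r_{\max}/M$ by Lemma~\ref{lem:lipschitz_constant}. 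Each super arm has exactly $|s|=M$ base arms, so $k^\star=K_{\max}=M$. The optimum $s^\star$ is unique by Assumption~\ref{ass:unique_optimal}, so $\Delta_{\min}^{\mathrm{std}}>0$ and the admissible range for $\varepsilon$ is nonempty.

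Next, I would follow Wang and Chen's event decomposition for a round $t$ with $S_t\neq s^\star$. There are four events.
\begin{itemize}
\item[(a)] Some base arm in $S_t$ has empirical mean off by more than $\varepsilon$. By Hoeffding together with the counting trick of Lemma~\ref{lem:decouple_counts} (each arm's $n$-th observation is charged once), this contributes at most $|\mathcal A|(K_{\max}^2/\varepsilon^2+\text{const})$.
\item[(b)] The Thompson sample of some arm in $S_t$ deviates from its empirical mean by more than the scaled radius $\sqrt{2\log T/n_i}$. Beta/Binomial concentration gives an $O(|\mathcal A|)$ contribution, which accounts for the $3|\mathcal A|$ term.
\item[(c)] The samples of the arms in $s^\star$ are not all close to their true means. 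This is the term bounded via Lemma~3 of Wang--Chen, and it produces the $\alpha_1\frac{8}{\varepsilon^2}(\frac4{\varepsilon^2}+1)^{k^\star}\log\frac{k^\star}{\varepsilon^2}$ constant.
\item[(d)] None of the above holds, yet $S_t\neq s^\star$.
\end{itemize}

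In case (d), the sampled values on $s^\star$ are $\varepsilon$-accurate, and the oracle chose $S_t$ over $s^\star$. By bounded smoothness, this forces $\Delta_{S_t}-2B_{\mathrm{cts}}((k^\star)^2+2)\varepsilon\le B_{\mathrm{cts}}\sum_{i\in S_t}\sqrt{2\log T/n_i}$. Following Wang--Chen's counter argument, each such round increments a counter of some arm $i\in S_t$ whose $n_i$ is below $\frac{8B_{\mathrm{cts}}^2|s|^2\log T}{(\Delta_s-2B_{\mathrm{cts}}((k^\star)^2+2)\varepsilon)^2}$. Here the quantity $2B_{\mathrm{cts}}((k^\star)^2+2)\varepsilon$ stays below $\Delta_{\min}^{\mathrm{std}}$ exactly because of the stated constraint on $\varepsilon$. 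Summing over arms, and taking for each arm the maximum over suboptimal $s\ni i$, gives $\hat C_1\log T$. Because we count plays rather than regret, we can drop the gap-weighted integral that Wang--Chen use to convert counts to regret and keep the worst-case threshold per arm. Adding (a)--(d) gives $\mathbb E[B_{\mathrm{sub}}(T)]\le\hat C_1\log T+\hat C_0$.

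The main obstacle is case (d). The original theorem states a regret bound, so the regret-weighted counting must be unwound into a plain count of suboptimal plays while the constants are tracked carefully. The approach I would try first is to reuse their per-arm counter lemma with the threshold taken at the smallest admissible gap for each arm. This gives the displayed $\max_{s\ni i,\,s\neq s^\star}$ form with no extra factors. I would also verify that the fresh $\mathrm{Beta}(1,1)$ reset and the deterministic horizon make each committed phase an exact standalone CTS run. The shared counters $n,s$ do not enter the posterior, so the theorem applies verbatim.
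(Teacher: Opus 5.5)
\paragraph{Gap in event (c).} Your overall plan matches the paper's: instantiate Wang--Chen's event decomposition and count $\mathbf{1}\{S_t\neq s^\star\}$ instead of weighting by $\Delta_{S_t}$. But the events you write down are not theirs, and the change breaks the constant term.

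Your event (c) is ``$S_t\neq s^\star$ and the samples on $s^\star$ are not all $\varepsilon$-accurate.'' The term $\alpha_1\frac{8}{\varepsilon^2}(\frac{4}{\varepsilon^2}+1)^{k^\star}\log\frac{k^\star}{\varepsilon^2}$ does not bound this event. That term comes from their Lemma~7 (Section~B.3.4); Lemma~3 is the empirical-mean count used for your (a). It bounds $\neg C(t)\wedge A(t)$, where $C(t)=\{\|\theta_{S_t}(t)-\psi_{S_t}\|_1>\Delta_{S_t}/B_{\mathrm{cts}}-((k^\star)^2+1)\varepsilon\}$. These are rounds where the samples on the \emph{played} super arm are accurate, yet it is suboptimal.

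Only under that conditioning can one extract a nonempty $Z\subseteq s^\star$ with two properties: the oracle would select $Z$ if $\theta_Z$ were $\varepsilon$-accurate, and the actual $\theta_Z$ is not. This link is what makes the geometric waiting-time argument give an $O(1)$ count, since each ``success'' plays $Z$ and sharpens its Beta posteriors.

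Your (c) also contains rounds in which $S_t$ wins because its own under-sampled arms were over-sampled, while the samples on $s^\star$ happen to be off. Nothing ties those rounds to plays of $s^\star$ or of any $Z\subseteq s^\star$, so no waiting-time bound applies. Their count is essentially the quantity the whole proof must control. As written, (a)--(d) do not sum to $\hat C_1\log T+\hat C_0$.

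\paragraph{Repair.} Use the paper's events:
\begin{itemize}
\item $A(t)=\{S_t\neq s^\star\}$.
\item $B(t)$ with per-arm threshold $\varepsilon/|S_t|$. Your (a) needs $\varepsilon/K_{\max}$, not $\varepsilon$, to produce $K_{\max}^2/\varepsilon^2$.
\item $C(t)$ as above.
\item $D(t)$, which is your (b).
\end{itemize}
Then bound the four pieces:
\begin{itemize}
\item $B\wedge A$ gives $|\mathcal A|+|\mathcal A|K_{\max}^2/\varepsilon^2$.
\item $\neg B\wedge C\wedge D\wedge A$ gives $2|\mathcal A|$. Together with the previous $|\mathcal A|$, this is the $3|\mathcal A|$ term.
\item $\neg B\wedge C\wedge\neg D\wedge A$: here $C$, $\neg B$, $\neg D$ force some $i\in S_t$ with $n_i(t)\le L_i(S_t)$, so your witness-counter argument gives $\hat C_1\log T$.
\item $\neg C\wedge A$ gives the $\alpha_1$ constant.
\end{itemize}

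Alternatively, keep your (c) but split it by whether every $i\in S_t$ has $n_i(t)>L_i(S_t)$. The small-count part joins the witness count from (d). In the large-count part, $\neg$(a) and $\neg$(b) imply $\neg C(t)$, so it is covered by the $\neg C\wedge A$ bound. Your argument for (d) itself is correct; it even yields a witness threshold below $L_i$.
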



For $\delta\in(0,\frac{1}{4})$, define
\[
N_0
:=
\left\lceil
\frac{r_{\max}^2}{2\Delta_\star^2}
\log\!\left(
\frac{M\left(1+\frac{r_{\max}^2}{2\Delta_\star^2}\right)}{\delta}
\right)
\right\rceil,
\]
as the critical observation count, where $\Delta_\star:=g^\star-\tau_r>0$.

The next lemma relates the MEAN gate failure with deviation of the base arm rewards.
\begin{lemma}
\label{lem:gate_to_arm}
For any time $t$ at which every constituent arm
of $s^\star$ has been pulled at least once,
\[
\{\hat g_{\mathrm{shared}}(s^\star,t)<\tau_r \}
\subseteq
\bigcup_{m=1}^{M}
\left\{
\hat\psi_{i_m(s^\star)}(t)
-\psi_{i_m(s^\star)}
<-\frac{\Delta_\star}{r_{\max}}
\right\}
\]
\end{lemma}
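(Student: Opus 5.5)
The plan is to prove the contrapositive: if every constituent arm of $s^\star$ satisfies $\hat\psi_{i_m(s^\star)}(t)-\psi_{i_m(s^\star)}\ge -\Delta_\star/r_{\max}$, then $\hat g_{\mathrm{shared}}(s^\star,t)\ge\tau_r$. The hypothesis that every constituent arm has been pulled at least once only ensures that each $\hat\psi_{i_m(s^\star)}(t)=s/n$ is well defined, so $\hat g_{\mathrm{shared}}(s^\star,t)$ is a genuine average of empirical means.

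First, write the difference between the empirical and true average throughput of $s^\star$ arm by arm:
\[
\hat g_{\mathrm{shared}}(s^\star,t)-g^\star=\frac{1}{M}\sum_{m=1}^M r_{i_m(s^\star)}\bigl(\hat\psi_{i_m(s^\star)}(t)-\psi_{i_m(s^\star)}\bigr).
\]

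Second, bound each summand from below under the negation of every event in the union. Each deviation is at least $-\Delta_\star/r_{\max}$, and $0< r_{i}\le r_{\max}$. Multiplying gives $r_i(\hat\psi_i-\psi_i)\ge -r_i\Delta_\star/r_{\max}\ge-\Delta_\star$. This holds whether the deviation is positive or negative: if it is nonnegative the product is nonnegative, and if it is negative its magnitude is scaled down by $r_i/r_{\max}\le 1$. Averaging over the $M$ terms gives $\hat g_{\mathrm{shared}}(s^\star,t)\ge g^\star-\Delta_\star=\tau_r$, using $\Delta_\star=g^\star-\tau_r$ from Assumption~\ref{ass:realizability_margin}.

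This contradicts $\hat g_{\mathrm{shared}}(s^\star,t)<\tau_r$, so on that event at least one arm of $s^\star$ must have deviation below $-\Delta_\star/r_{\max}$, which is the stated inclusion. Equivalently, one can argue directly by pigeonhole: the average of the $M$ scaled deviations is below $-\Delta_\star$, so some single term $r_i(\hat\psi_i-\psi_i)$ is below $-\Delta_\star$, and dividing by $r_i\le r_{\max}$ gives deviation below $-\Delta_\star/r_i\le-\Delta_\star/r_{\max}$.

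No step is a real obstacle. The only care needed is the sign when rescaling by $r_i/r_{\max}$, since a negative quantity scaled by a factor at most one only moves closer to zero. The positivity of rates, $r_1>0$, which follows from the rate model, makes this direction of the inequality valid.
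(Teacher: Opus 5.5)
Your proposal is correct and takes essentially the same route as the paper's proof. You assume every constituent deviation is at least $-\Delta_\star/r_{\max}$, multiply by $r_{i_m(s^\star)}\le r_{\max}$ (using $\Delta_\star>0$) to get each scaled deviation $\ge -\Delta_\star$, and average to obtain $\hat g_{\mathrm{shared}}(s^\star,t)\ge g^\star-\Delta_\star=\tau_r$, which contradicts $\hat g_{\mathrm{shared}}(s^\star,t)<\tau_r$; your pigeonhole rephrasing is the same argument run directly.
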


Utilizing Lemma \ref{lem:gate_to_arm}, the next lemma shows that MEAN gate failure has negligible probability after the critical observation count has been reached. 
\begin{lemma}
\label{lem:sstar_conc}
If every constituent arm $i_m(s^\star)$, $m\in[M]$,
has been observed at least $N_0$ times via the shared
counters at some time $t_0$, then
\[
\mathbb{P} \!\left(\exists\, t \ge t_0:\;
\hat g_{\mathrm{shared}}(s^\star, t)<\tau_r\right)
\le \delta,
\]
where $\hat g_{\mathrm{shared}}(s^\star, t)$ is evaluated using the
shared counters at time $t$.
\end{lemma}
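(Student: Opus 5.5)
The plan is to reduce the uniform-in-time statement to a per-arm maximal deviation bound, then apply a union bound over the $M$ constituent arms of $s^\star$. By Lemma~\ref{lem:gate_to_arm}, for every $t\ge t_0$ (at which all constituents have been pulled at least once), the event $\{\hat g_{\mathrm{shared}}(s^\star,t)<\tau_r\}$ forces some $m\in[M]$ to satisfy $\hat\psi_{i_m(s^\star)}(t)-\psi_{i_m(s^\star)}<-\Delta_\star/r_{\max}$. Hence
\[
\Bigl\{\exists\, t\ge t_0:\ \hat g_{\mathrm{shared}}(s^\star,t)<\tau_r\Bigr\}
\subseteq
\bigcup_{m=1}^{M}\Bigl\{\exists\, t\ge t_0:\ \hat\psi_{i_m(s^\star)}(t)<\psi_{i_m(s^\star)}-\tfrac{\Delta_\star}{r_{\max}}\Bigr\}.
\]
It therefore suffices to show that each event on the right has probability at most $\delta/M$.

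Fix $i=i_m(s^\star)$ and set $\varepsilon:=\Delta_\star/r_{\max}$. Because $\hat\psi_i(t)$ only changes when arm $i$ is pulled, we have $\hat\psi_i(t)=\hat\psi_{i,n_i(t)}$, where $\hat\psi_{i,n}$ is the empirical mean of the first $n$ i.i.d.\ outcomes of arm $i$. This is the same decoupling argument used in Lemma~\ref{lem:decouple_counts}. Since $n_i(t)\ge N_0$ for all $t\ge t_0$, the time-indexed event is contained in $\{\exists\, n\ge N_0:\ \hat\psi_{i,n}\le\psi_i-\varepsilon\}$. Applying a union bound over $n$ together with the lower-tail bound of Lemma~\ref{ass:subgaussian} gives
\[
\mathbb P\Bigl(\exists\, n\ge N_0:\ \hat\psi_{i,n}\le \psi_i-\varepsilon\Bigr)\le\sum_{n\ge N_0}e^{-2n\varepsilon^2}=\frac{e^{-2N_0\varepsilon^2}}{1-e^{-2\varepsilon^2}}\le e^{-2N_0\varepsilon^2}\Bigl(1+\frac{1}{2\varepsilon^2}\Bigr),
\]
where the last step uses $1/(1-e^{-x})\le 1+1/x$. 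Note that $1/(2\varepsilon^2)=r_{\max}^2/(2\Delta_\star^2)$.

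The final step is to check that $N_0$ makes this at most $\delta/M$, and this is where the constants have to be matched carefully. With $a:=r_{\max}^2/(2\Delta_\star^2)$ we have $2\varepsilon^2=1/a$, so the requirement is $N_0\ge a\log\bigl(M(1+a)/\delta\bigr)$. This holds exactly by the definition of $N_0$, including the ceiling. Summing over the $M$ constituent arms then gives the bound $\delta$.

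The main subtlety I expect is in justifying the decoupling step. The pull times are random and adaptively chosen across LCB, MEAN, and CTS phases, so the identification $\hat\psi_i(t)=\hat\psi_{i,n_i(t)}$ must be done carefully. The standard fix is to realize each arm's outcomes as a fixed i.i.d.\ reward stack, with the $n$th pull of arm $i$ revealing the $n$th entry. Under this construction the union over $n\ge N_0$ does not depend on the adaptive policy. It also removes the need for the condition "pulled at least once" beyond what is needed to define $\hat\psi_i$.
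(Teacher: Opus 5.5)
Your proposal is correct and follows essentially the same route as the paper: reduce via Lemma~\ref{lem:gate_to_arm} to a single-arm lower deviation of size $\Delta_\star/r_{\max}$, pass from time indices to pull counts $n\ge N_0$, union-bound with the tail of Lemma~\ref{ass:subgaussian}, and close with $1/(1-e^{-x})\le 1+1/x$ and the definition of $N_0$. Your explicit reward-stack justification of the decoupling step spells out what the paper only asserts in one line ("take the union over pull counts rather than time steps").
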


\begin{proof}
Since $\hat g_{\mathrm{shared}}(s^\star, t)$ changes only when a
constituent arm of $s^\star$ is pulled, it suffices
to take the union over pull counts rather than
time steps.

At any time $t\ge t_0$ where the gate fails,
each constituent arm $i_m(s^\star)$ has some
pull count $n_{i_m(s^\star)}(t) \ge N_0$.
By Lemma~\ref{lem:gate_to_arm}, gate failure at
time $t$ implies that at least one arm $i_m(s^\star)$
satisfies
$
\hat\psi_{i_m(s^\star)}(t)-\psi_{i_m(s^\star)}
<-\frac{\Delta_\star}{r_{\max}}
$.
Therefore:
\begin{align*}
&\{\exists\, t\ge t_0:\;
\hat g_{\mathrm{shared}}(s^\star,t)<\tau_r\}\\
&\subseteq
\bigcup_{m=1}^{M}
\bigcup_{n=N_0}^{\infty}
\!\left\{
\hat\psi_{i_m(s^\star),n}-\psi_{i_m(s^\star)}
<-\frac{\Delta_\star}{r_{\max}}
\right\}.
\end{align*}
Taking probabilities and applying the union
bound:
\begin{align*}
&\mathbb{P}\!\left(\exists\, t\ge t_0:\;
\hat g_{\mathrm{shared}}(s^\star,t)<\tau_r\right)\\
&\le \sum_{m=1}^{M}\sum_{n=N_0}^{\infty}
\mathbb{P}\!\left(
\hat\psi_{i_m(s^\star),n}-\psi_{i_m(s^\star)}
<-\frac{\Delta_\star}{r_{\max}}\right)\\
&\le \sum_{m=1}^{M}\sum_{n=N_0}^{\infty}
\exp\!\left(-\frac{2n\Delta_\star^2}{r_{\max}^2}
\right)
\tag{Lemma~\ref{ass:subgaussian}}\\
&= M\cdot
\frac{\exp\!\left(
-\frac{2N_0\Delta_\star^2}{r_{\max}^2}\right)}
{1-\exp\!\left(
-\frac{2\Delta_\star^2}{r_{\max}^2}\right)}.
\end{align*}
Using
$1-e^{-x}\ge \frac{x}{1+x}$, $x\ge 0$,
with
$x=\frac{2\Delta_\star^2}{r_{\max}^2}$,
we get
\[
\frac{1}{1-e^{-x}}
\le
\frac{1+x}{x}
=
1+\frac{1}{x}
=
1+\frac{r_{\max}^2}{2\Delta_\star^2}.
\]
Therefore,
\[
\mathbb{P}(\cdot)\le
M\left(1+\frac{r_{\max}^2}{2\Delta_\star^2}\right)\cdot
\exp\!\left(
-\frac{2N_0\Delta_\star^2}{r_{\max}^2}\right).
\]
By the choice of $N_0$,
\[
\frac{2N_0\Delta_\star^2}{r_{\max}^2}
\ge \log\!\left(
\frac{M\left(1+\frac{r_{\max}^2}{2\Delta_\star^2}\right)}{\delta}
\right),
\]
so
$\mathbb{P}(\cdot)\le \delta$.
\end{proof}


\begin{definition}[Critical CTS horizon]
\label{def:critical_cts_horizon}
Define $T_{\mathrm{CTS}}^\star$ as the smallest positive
integer satisfying
\begin{equation}\label{eq:tcts_def}
T_{\mathrm{CTS}}^\star - N_0
\;\ge\;
\frac{1}{\delta}
(\hat C_1\log T_{\mathrm{CTS}}^\star + \hat C_0).
\end{equation}
Since $\log$ grows slower than linearly,
$T_{\mathrm{CTS}}^\star$ is finite and depends only on
problem parameters and $\delta$.
\end{definition}

The next lemma puts an upper bound on $T_{\mathrm{CTS}}^\star$.
\begin{lemma}
\label{lem:tcts_explicit}
The critical horizon satisfies
\[
T_{\mathrm{CTS}}^\star
\le
\left\lceil
\frac{2\hat{C}_1}{\delta}
\left[
\log\frac{\hat{C}_1}{\delta}
+
\frac{\delta N_0+\hat{C}_0}{\hat{C}_1}
\right]^+
\right\rceil.
\]
\end{lemma}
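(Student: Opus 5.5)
Since $T_{\mathrm{CTS}}^\star$ is the \emph{smallest} positive integer satisfying \eqref{eq:tcts_def}, it suffices to exhibit one integer $T'$ that satisfies \eqref{eq:tcts_def}; then $T_{\mathrm{CTS}}^\star\le T'$. We take $T'$ to be the right-hand side of the claim. We write $a:=\hat C_1/\delta$ and $b:=(\delta N_0+\hat C_0)/\delta$. After multiplying \eqref{eq:tcts_def} by $\delta$ and dividing by $\hat C_1$, it becomes $T\ge a\log T+b$. Also, $a\bigl(\log a+\tfrac{\delta N_0+\hat C_0}{\hat C_1}\bigr)=a\log a+b$. Hence the claim is that $T'=\lceil 2[a\log a+b]^+\rceil$ satisfies $T'\ge a\log T'+b$.

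\textbf{Main step.} We use the standard inequality for $x\mapsto x-a\log x$. Concavity of $\log$ gives, for any $y>0$, $\log x\le \log y+\frac{x}{y}-1$. We choose $y=2a$, which yields
\[
a\log x\le a\log(2a)+\frac{x}{2}-a .
\]
Hence $x\ge a\log x+b$ holds whenever $\frac{x}{2}\ge a\log(2a)-a+b$. Because $\log 2<1$, we have $a\log(2a)-a\le a\log a$. It therefore suffices that $x\ge 2(a\log a+b)$, and $x=T'$ meets this whenever $a\log a+b\ge0$.

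\textbf{Degenerate case.} Next we handle the regime where the bracket is clipped, i.e.\ $a\log a+b\le 0$, or where $T'$ is very small. The $[\cdot]^+$ and ceiling suggest the authors simply intend $T'\ge1$. Here I would note that $a=\hat C_1/\delta$ is large in practice: $\hat C_1$ involves $|\mathcal A|$ terms, and $\delta<1/4$. Also $b\ge N_0\ge 1$. So $a\log a+b>0$ whenever $a\ge1$, and the clipping is vacuous. If $a<1$, then $a\log a\ge -1/e$, so $a\log a+b>0$ still holds because $b\ge1$. The case is therefore essentially vacuous. It remains to check that the ceiling is a positive integer: since $b\ge1$, we get $T'\ge2$.

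\textbf{Main obstacle.} The main thing to get right is the choice of tangent point $y=2a$. It is this choice that produces exactly the factor $2$ and the term $\log(\hat C_1/\delta)$. A different choice, such as $y=a$, would give $\log a$ but would fail to absorb the $-a$ term correctly.
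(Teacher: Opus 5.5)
Your proof is correct and follows essentially the paper's route. The paper rearranges the defining inequality into the form $aT+b\ge\log T$ and cites Lemma~8 of \cite{ANTOS20102712}; that lemma is exactly the tangent-line bound $\log x\le\log y+x/y-1$ at $y=2\hat C_1/\delta$ that you derive yourself. Your extra check that $a\log a+b>0$ (via $b=N_0+\hat C_0/\delta\ge 1$ and $a\log a\ge -1/e$) makes the right-hand side a positive integer rather than $\lceil 0\rceil=0$, a point the paper leaves implicit.
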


The next lemma shows that the CTS plays the optimal arm at least as much as the critical observation count with overwhelming probability. 
\begin{lemma}
\label{lem:sufficient_optimal_plays}
For any deterministic $T \ge T_{\mathrm{CTS}}^\star$,
$\mathbb{P}(N_{s^\star}(T) < N_0) \le \delta$,
where $N_{s^\star}(T)$ denotes the number of times
the unique optimal arm $s^\star$ is played up to time~$T$.
\end{lemma}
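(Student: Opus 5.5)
The plan is to combine Markov's inequality with the suboptimal-play bound of Lemma~\ref{lem:subopt_direct}. Every round of a fresh CTS instance plays either $s^\star$ or some $s\neq s^\star$, so deterministically
\[
N_{s^\star}(T) = T - B_{\mathrm{sub}}(T).
\]
Hence
\[
\{N_{s^\star}(T)<N_0\} = \{B_{\mathrm{sub}}(T) > T-N_0\},
\]
and it suffices to show that this event has probability at most $\delta$.

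\textbf{Step 1: reduce to a Markov bound.} For $T\ge T_{\mathrm{CTS}}^\star$, I first note that $T-N_0>0$. Indeed, Definition~\ref{def:critical_cts_horizon} gives
\[
T_{\mathrm{CTS}}^\star - N_0 \ge \tfrac{1}{\delta}\bigl(\hat C_1\log T_{\mathrm{CTS}}^\star+\hat C_0\bigr) > 0,
\]
because $\hat C_0>0$ and $\log T_{\mathrm{CTS}}^\star \ge 0$. Since $B_{\mathrm{sub}}(T)\ge 0$, Markov's inequality together with Lemma~\ref{lem:subopt_direct} then gives
\[
\mathbb P\bigl(B_{\mathrm{sub}}(T)\ge T-N_0\bigr)\le \frac{\hat C_1\log T+\hat C_0}{T-N_0}.
\]

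\textbf{Step 2: show the Markov bound is at most $\delta$ for every $T\ge T_{\mathrm{CTS}}^\star$.} This is the main obstacle. Definition~\ref{def:critical_cts_horizon} only guarantees the needed inequality at the single point $T=T_{\mathrm{CTS}}^\star$, so I must propagate it to all larger $T$. I will show that
\[
h(T):=T-N_0-\tfrac1\delta\bigl(\hat C_1\log T+\hat C_0\bigr)
\]
is nonnegative on $[T_{\mathrm{CTS}}^\star,\infty)$. Its derivative is $h'(T)=1-\hat C_1/(\delta T)$, so $h$ is convex on $T>0$. It is decreasing for $T<\hat C_1/\delta$ and increasing afterwards.

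The case to handle is $T_{\mathrm{CTS}}^\star<\hat C_1/\delta$, where $h$ could in principle dip below zero before turning upward. I will rule this out by showing it cannot occur. If $T<\hat C_1/\delta$, then
\[
h(T) \le T - \hat C_1\log T/\delta - \hat C_0/\delta < T(1-\log T) - \hat C_0/\delta .
\]
For $T\ge e$ the middle factor makes this negative. For $T<e$, the term $\hat C_0/\delta$ is large, since $\hat C_0\ge 3|\mathcal A|\ge 3$ and $\delta<1/4$ give $\hat C_0/\delta>12>e$, so again $h(T)<0$. Hence $h<0$ on $(0,\hat C_1/\delta)$. Because $h(T_{\mathrm{CTS}}^\star)\ge 0$ by definition, this forces $T_{\mathrm{CTS}}^\star\ge \hat C_1/\delta$, so $h$ is nondecreasing on $[T_{\mathrm{CTS}}^\star,\infty)$. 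Therefore $h(T)\ge 0$ for all $T\ge T_{\mathrm{CTS}}^\star$, which is exactly the statement that the Markov bound of Step~1 is at most $\delta$.

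\textbf{Step 3: conclude.} Steps 1 and 2 give
\[
\mathbb P\bigl(N_{s^\star}(T)<N_0\bigr)=\mathbb P\bigl(B_{\mathrm{sub}}(T)> T-N_0\bigr)\le \mathbb P\bigl(B_{\mathrm{sub}}(T)\ge T-N_0\bigr)\le\delta .
\]
I will also confirm that the hypotheses of Lemma~\ref{lem:subopt_direct} hold. We run a fresh CTS instance on a deterministic horizon $T$, $\varepsilon$ is fixed as in that lemma, and the constants $\hat C_1,\hat C_0$ entering $T_{\mathrm{CTS}}^\star$ are the same ones. Uniqueness of $s^\star$ (Assumption~\ref{ass:unique_optimal}) is what makes $B_{\mathrm{sub}}$ count exactly the non-$s^\star$ plays.
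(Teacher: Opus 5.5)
Your proof is correct and follows the same route as the paper: the identity $N_{s^\star}(T)=T-B_{\mathrm{sub}}(T)$, Markov's inequality, Lemma~\ref{lem:subopt_direct}, and the defining inequality of $T_{\mathrm{CTS}}^\star$. Your Step~2 supplies something the paper's proof omits: the paper attributes the final inequality directly to Definition~\ref{def:critical_cts_horizon}, which only certifies it at $T=T_{\mathrm{CTS}}^\star$, and you extend it to every $T\ge T_{\mathrm{CTS}}^\star$.

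There is one small slip. Since $h(T)\to+\infty$ as $T\to 0^+$, the bound $h<0$ holds on $[1,\hat C_1/\delta)$, not on all of $(0,\hat C_1/\delta)$. At $T=1$ your strict inequality also degenerates to equality, though $h(1)\le 1-\hat C_0/\delta<0$ anyway. The restricted range is all you need, because $T_{\mathrm{CTS}}^\star$ is a positive integer.
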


\begin{proof}
Since $s^\star$ is the unique optimal arm,
every play is either optimal or suboptimal, so $N_{s^\star}(T)=T-B_{\mathrm{sub}}(T)$.
Hence
\begin{align*}
\mathbb{P}(N_{s^\star}(T)<N_0)
&=\mathbb{P}(B_{\mathrm{sub}}(T)>T-N_0)
\le \frac{\mathbb{E}[B_{\mathrm{sub}}(T)]}{T-N_0}\\
&\le \frac{\hat C_1\log T + \hat C_0}{T-N_0}
\le \delta,
\end{align*}
where the second step is Markov's inequality,
the third uses Lemma~\ref{lem:subopt_direct}, and the last
follows from Definition~\ref{def:critical_cts_horizon}.
\end{proof}

The key result given below proves that CTS phases become rarer over time. 
\begin{proposition}
\label{prop:round_termination}
Under Assumptions~\ref{ass:realizability_margin}
and \ref{ass:unique_optimal},
for any CTS round~$i$ with $2^i\ge T_{\mathrm{CTS}}^\star$,
\[
\mathbb{P}(\text{round }i\!+\!1\text{ starts}\mid
  \text{round }i\text{ started})
\le 2\delta.
\]
\end{proposition}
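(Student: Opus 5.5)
Conditioned on round $i$ having started, the round runs a fresh CTS instance for exactly $2^i$ steps. By the algorithm's design this horizon is deterministic and the Beta priors are reset. I will show that round $i+1$ can start only if one of two bad events occurs, and that each has probability at most $\delta$. The first event is that this CTS instance plays $s^\star$ fewer than $N_0$ times. The second is that the MEAN gate evaluated on $s^\star$ fails at some time after $s^\star$'s constituent arms have reached $N_0$ shared observations. A union bound then gives $2\delta$.

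\textbf{Step 1.} Let $t_{\mathrm{end}}$ be the time at which round $i$ finishes. Round $i+1$ starts only if, at $t_{\mathrm{end}}+1$, both gates fail. In particular the MEAN gate fails, so $\mathrm{Avg}(\mathrm{MEAN},S_M)<\tau_r$. Since $S_M$ maximizes the MEAN score, this gives $\hat g_{\mathrm{shared}}(s^\star,t_{\mathrm{end}}+1)\le \mathrm{Avg}(\mathrm{MEAN},S_M)<\tau_r$. Hence
\[
\{\text{round } i+1 \text{ starts}\}\subseteq\{\hat g_{\mathrm{shared}}(s^\star,t_{\mathrm{end}}+1)<\tau_r\}.
\]

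\textbf{Step 2.} Let $N_{s^\star}^{(i)}$ be the number of plays of $s^\star$ within round $i$. Because round $i$ is a fresh CTS instance on the deterministic horizon $2^i\ge T^\star_{\mathrm{CTS}}$, Lemma~\ref{lem:sufficient_optimal_plays} applies and gives $\mathbb P(N_{s^\star}^{(i)}<N_0\mid \text{round } i \text{ started})\le\delta$. This relies on the fact that the CTS instance's behaviour depends only on its own reset posteriors and fresh samples. Since the rewards are i.i.d., the conditioning on the past does not affect the instance's law, and I would state this explicitly.

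\textbf{Step 3.} On the event $N_{s^\star}^{(i)}\ge N_0$, every constituent arm $i_m(s^\star)$ has shared count at least $N_0$ by time $t_{\mathrm{end}}+1$, because each play of $s^\star$ increments the shared counters of all its arms. Apply Lemma~\ref{lem:sstar_conc} with $t_0$ equal to the (random) first time the counts reach $N_0$. The probability that $\hat g_{\mathrm{shared}}(s^\star,t)<\tau_r$ for some $t\ge t_0$ is at most $\delta$.

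\textbf{Main obstacle.} The delicate point in Step 3 is that $t_0$ is a random, data-dependent time. The way around this is already in the proof of Lemma~\ref{lem:sstar_conc}: the bad event is contained in the union over pull counts $n\ge N_0$ of the deviation events for the i.i.d. sample means $\hat\psi_{i_m(s^\star),n}$. Those events do not depend on when the $n$-th sample occurs or on the conditioning on round $i$ having started. One caveat: the conditioning is on the past, which includes earlier samples of these arms. I would handle this by noting that the union bound over all $n\ge N_0$ gives an unconditional bound on a fixed event. I would then either argue that the relevant samples are those drawn after round $i$ began (using the reward-stack model), or simply present the bound as the paper's framework intends.

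\textbf{Conclusion.} Combining Steps 1 to 3 with a union bound yields $\mathbb P(\text{round } i+1 \text{ starts}\mid \text{round } i \text{ started})\le\delta+\delta=2\delta$.
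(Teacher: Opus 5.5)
Your proposal follows the paper's proof essentially step for step: reduce the start of round $i+1$ to $\hat g_{\mathrm{shared}}(s^\star,t)<\tau_r$ using the maximality of $S_M$, split on whether $s^\star$ is played at least $N_0$ times in round $i$, bound the first piece by Lemma~\ref{lem:sufficient_optimal_plays} and the second by Lemma~\ref{lem:sstar_conc}, and add. Two small remarks: first, round $i+1$ need not start at $t_{\mathrm{end}}+1$ (a gate may fire first), so Step~1 should claim the inclusion for some $t>t_{\mathrm{end}}$, which your Step~3 already covers; second, the conditioning caveat you raise is real and is equally unaddressed in the paper, which applies Lemma~\ref{lem:sstar_conc} conditionally without comment, and your first suggested fix would not work as stated, because the shared means also contain pre-round-$i$ samples on which the conditioning event depends.
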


\begin{proof}
Let round~$i$ be the most recently completed
CTS round, with $T=2^i\ge T_{\mathrm{CTS}}^\star$.
Condition on the event that round~$i$ started. Round~$i+1$ starts only if the MEAN gate fails
after round~$i$, i.e., if at the gate-evaluation time $t$
$\hat g_{\mathrm{shared}}(s,t)<\tau_r$
for all $s\in\mathcal S$.
Since $s^\star\in\mathcal S$, this implies in particular
that
$
\hat g_{\mathrm{shared}}(s^\star,t)<\tau_r
$.
Therefore,
$
\{\text{round }i+1\text{ starts}\}
\subseteq
\{\hat g_{\mathrm{shared}}(s^\star,t)<\tau_r\}.
$
Now decompose according to whether $s^\star$ has been
played at least $N_0$ times during round~$i$:
\begin{align*}
&\mathbb{P}(\text{round }i+1\text{ starts}\mid
  \text{round }i\text{ started})\\
&\le
\mathbb{P}\!\left(N_{s^\star}(T)<N_0\right)
+
\mathbb{P}\!\left(\hat g_{\mathrm{shared}}(s^\star,t)<\tau_r,\,
N_{s^\star}(T)\ge N_0\right).
\end{align*}
The first term is at most $\delta$ by
Lemma~\ref{lem:sufficient_optimal_plays}.

For the second term, on the event
$\{N_{s^\star}(T)\ge N_0\}$, each constituent arm of
$s^\star$ has at least $N_0$ observations in the shared
counters. Hence Lemma~\ref{lem:sstar_conc} implies
\[
\mathbb{P}\!\left(\exists\, t' \text{ after round }i:\;
\hat g_{\mathrm{shared}}(s^\star,t')<\tau_r
\;\middle|\; N_{s^\star}(T)\ge N_0 \right)
\le \delta.
\]
Thus the second term is also at most $\delta$. Combining the two bounds yields
$\mathbb{P}(\text{round }i+1\text{ starts}\mid
  \text{round }i\text{ started})
\le 2\delta$.
\end{proof}

\begin{definition}[Critical round index]
\label{def:critical_round}
Define $i^\star:=\lceil\log_2 T_{\mathrm{CTS}}^\star\rceil$,
so that $2^i\ge T_{\mathrm{CTS}}^\star$ for all
$i\ge i^\star$.
\end{definition}

The following proposition shows that the occurrence probability of CTS phases geometrically decay over phases.  
\begin{proposition}
\label{prop:geometric_decay}
For all $k\ge 0$,
$
\mathbb{P}(\text{round }i^\star\!+\!k\text{ starts})
\le (2\delta)^k$.
\end{proposition}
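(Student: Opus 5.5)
The plan is induction on $k$, chaining the one-step bound of Proposition~\ref{prop:round_termination}. For $k=0$ the claim reads $\mathbb{P}(\text{round }i^\star\text{ starts})\le (2\delta)^0=1$, which is trivial.

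For the inductive step, assume $\mathbb{P}(\text{round }i^\star+k\text{ starts})\le(2\delta)^k$. By the structure of Algorithm~\ref{alg:lcb-cts}, the CTS round counter $i$ increases only when a committed phase completes. Hence round $i^\star+k+1$ can start only if round $i^\star+k$ has started, so
\begin{align*}
\{\text{round }i^\star+k+1\text{ starts}\}\subseteq\{\text{round }i^\star+k\text{ starts}\}.
\end{align*}
This containment lets us write
\begin{align*}
\mathbb{P}(\text{round }i^\star+k+1\text{ starts})
&=\mathbb{P}(\text{round }i^\star+k+1\text{ starts}\mid\text{round }i^\star+k\text{ started})\\
&\quad\cdot\mathbb{P}(\text{round }i^\star+k\text{ starts}).
\end{align*}
By Definition~\ref{def:critical_round}, $2^{i^\star+k}\ge T_{\mathrm{CTS}}^\star$. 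Proposition~\ref{prop:round_termination} therefore bounds the conditional factor by $2\delta$, and the inductive hypothesis bounds the second factor by $(2\delta)^k$. Multiplying gives $(2\delta)^{k+1}$, which completes the induction.

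\textbf{Main obstacle.} The step that needs care is whether Proposition~\ref{prop:round_termination} really holds conditionally on the whole history up to the start of round $i^\star+k$, and not just on the bare event that it started. Each round resets the Beta priors to $\mathrm{Beta}(1,1)$ and runs for a deterministic length $2^i$. So Lemma~\ref{lem:subopt_direct} applies to the fresh instance whatever happened before, and Lemma~\ref{lem:sufficient_optimal_plays} gives the $\delta$ bound conditionally.

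For the gate-failure term, the shared counters only gain observations during the round. Lemma~\ref{lem:sstar_conc} is a uniform-in-$n$ union bound over the i.i.d.\ reward stream of each arm, so it is unaffected by the random time at which the round begins; the past observations simply form a prefix of that stream. This is the point at which I expect the argument to need the most care. If a fully conditional statement is required, I would restate Proposition~\ref{prop:round_termination} conditioned on the $\sigma$-field at the round's start and then take the tower expectation. The product structure of the induction is unchanged under that restatement.

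If the $(2\delta)^k$ bound holds only for $k\ge1$, the $k=0$ case still holds trivially, so the stated range $k\ge 0$ is covered.
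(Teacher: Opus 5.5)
Your proof is correct and is essentially the paper's argument: the round-start events are nested, you apply the chain rule, and you use Proposition~\ref{prop:round_termination} for every $i\ge i^\star$, iterated (you phrase it as an induction) down to the trivial bound $\mathbb{P}(\text{round } i^\star \text{ starts})\le 1$. Your ``main obstacle'' paragraph is not needed here, because Proposition~\ref{prop:round_termination} is already stated conditionally on the round having started, which is exactly the form the chain rule uses. Note also that a bound conditional on the full history would not follow from your prefix argument anyway, since that history is biased by the MEAN gate having just failed.
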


\begin{proof}
If round~$i$ does not start, neither does
round~$i+1$, so for all $i\ge i^\star$,
\begin{align*}
& \mathbb{P}(\text{round }i\!+\!1\text{ starts})
= \mathbb{P}(\text{round }i\!+\!1\text{ starts}
   \mid  \text{round }i\text{ started})\\
&\quad\cdot \mathbb{P}(\text{round }i\text{ started})
\le 2\delta\cdot \mathbb{P}(\text{round }i\text{ started}),
\end{align*}
where the inequality follows from
Proposition~\ref{prop:round_termination}.
Applying this recursively,
\begin{align*}
&\mathbb{P}(\text{round }i^\star\!+\!k\text{ starts})
\le 2\delta\cdot
  \mathbb{P}(\text{round }i^\star\!+\!k\!-\!1\text{ started})\\
&\le (2\delta)^2\cdot
  \mathbb{P}(\text{round }i^\star\!+\!k\!-\!2\text{ started}) \le \ldots \\
&\le (2\delta)^k\cdot
  \mathbb{P}(\text{round }i^\star\text{ started})
\le (2\delta)^k.
\end{align*}
\end{proof}

Next, we bound the CTS-phase regret of SAT-CTS.
\begin{corollary}
\label{cor:cts_regret}
Under Assumptions~\ref{ass:realizability_margin} and \ref{ass:unique_optimal}, the expected cumulative satisficing regret incurred during the CTS phase satisfies
\begin{align*}
\mathbb{E}\!\bigg[
  \sum_{t\in\mathcal{T}_{\mathrm{CTS}}}
  \Delta_{\mathrm{sat}}(S_t)
\bigg]
&\le \tau_r\Bigg(
     \frac{\hat C_1\log 2}{2}(i^\star)^2
   + \hat C_0\,i^\star \\ 
    \\
&\qquad + \frac{\hat C_1\,i^\star\log 2+\hat C_0}{1-2\delta}
   + \frac{2\hat C_1\delta\log 2}{(1-2\delta)^2}
   \Bigg),
\end{align*}
where $i^\star=\lceil\log_2 T_{\mathrm{CTS}}^\star\rceil$. In particular, this is a finite constant independent of~$T$.
\end{corollary}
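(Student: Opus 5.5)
The plan is to bound the satisficing regret of each CTS round by the number of suboptimal plays in that round, and then sum over rounds weighted by the probability that the round starts. On any round $t$ we have $\Delta_{\mathrm{sat}}(S_t)\le\tau_r$. Under Assumption~\ref{ass:realizability_margin} the unique optimal assignment satisfies $g(s^\star)>\tau_r$, so $\Delta_{\mathrm{sat}}(s^\star)=0$. Hence within round $i$, which is a fresh CTS instance run for the deterministic horizon $2^i$ (possibly truncated at $T$, which only reduces the count), the satisficing regret is at most $\tau_r B_{\mathrm{sub}}(2^i)$. The fresh $\mathrm{Beta}(1,1)$ reset at the start of each round means the CTS dynamics inside the round are a fresh instance, conditionally on the history that triggered it. Lemma~\ref{lem:subopt_direct} then gives
\[
\mathbb E\bigl[\text{regret of round } i \,\big|\, \text{round } i \text{ starts}\bigr]\le \tau_r\bigl(\hat C_1\, i\log 2+\hat C_0\bigr).
\]
Thus
\[
\mathbb E\Bigl[\sum_{t\in\mathcal T_{\mathrm{CTS}}}\Delta_{\mathrm{sat}}(S_t)\Bigr]\le \tau_r\sum_{i\ge1}\mathbb P(\text{round } i \text{ starts})\bigl(\hat C_1 i\log 2+\hat C_0\bigr).
\]
The point to justify carefully is that the start event is measurable with respect to the history before the round, and that the round's ACK/NACK samples are fresh. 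The shared counters do not influence the CTS sampling, since CTS uses only the reset Beta parameters.

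Next, split the sum at $i^\star$. For $i<i^\star$, use the trivial bound $\mathbb P(\cdot)\le 1$. This gives
\[
\sum_{i=1}^{i^\star-1}\bigl(\hat C_1 i\log 2+\hat C_0\bigr)\le \frac{\hat C_1\log 2}{2}(i^\star)^2+\hat C_0 i^\star,
\]
using $\sum_{i<i^\star} i\le (i^\star)^2/2$. For $i=i^\star+k$ with $k\ge0$, Proposition~\ref{prop:geometric_decay} gives $\mathbb P(\text{round } i^\star+k \text{ starts})\le(2\delta)^k$. The tail is therefore
\[
\sum_{k\ge0}(2\delta)^k\bigl(\hat C_1(i^\star+k)\log2+\hat C_0\bigr)=\frac{\hat C_1 i^\star\log 2+\hat C_0}{1-2\delta}+\hat C_1\log 2\sum_{k\ge0}k(2\delta)^k.
\]
The last series equals $\frac{2\delta}{(1-2\delta)^2}$, which produces the final term of the corollary. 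Convergence is guaranteed since $2\delta<1/2$. Adding the two pieces and multiplying by $\tau_r$ gives the stated bound, which is independent of $T$.

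The main obstacle is the conditioning step. Proposition~\ref{prop:round_termination} and Lemma~\ref{lem:subopt_direct} are stated for a fresh instance on a deterministic horizon, so I must argue that, conditioned on the ($\mathcal F$-measurable) event that round $i$ starts, the round is distributed exactly as a fresh CTS run. This holds because arm outcomes are i.i.d. across time and independent of the past, and the Beta posteriors are reset. A minor secondary point is the truncation $\min(2^i,T-t+1)$. Since $B_{\mathrm{sub}}$ is monotone in the horizon, the truncated round is dominated by the full-horizon bound.
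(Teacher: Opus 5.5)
Your proposal is correct and follows the paper's proof essentially step for step. You bound each round's satisficing regret by $\tau_r\,\mathbb{E}[B_{\mathrm{sub}}(2^i)]\le\tau_r(\hat C_1 i\log 2+\hat C_0)$ via Lemma~\ref{lem:subopt_direct}, weight by $\mathbb{P}(\text{round } i \text{ starts})$, use the trivial bound below $i^\star$ and Proposition~\ref{prop:geometric_decay} from $i^\star$ on, and sum the arithmetic--geometric series. Your remarks on conditioning on the start event and on the truncated final round spell out points the paper leaves implicit, without changing the argument.
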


\subsection{Proof of Theorem \ref{thm:preview_realizable}}
\begin{align*}
\mathbb{E}[R^S(T)]
&\le T_0\,\tau_r
   + \frac{\pi^2}{3}\,|\mathcal{A}|\,\tau_r
   + \tau_r\,\mathbb{E}[B_{\mathrm{MEAN}}(T)]\\
&\quad+ \mathbb{E}\!\bigg[
   \sum_{t\in\mathcal{T}_{\mathrm{CTS}}}
   \!\!\Delta_{\mathrm{sat}}(S_t)\bigg] 
\end{align*}
by \eqref{eqn:regret_simplified}. The first term is the initialization cost, bounded by
$R_{\mathrm{init}}=T_0\tau_r$.
The second term is bounded by $R_{\mathrm{conf}}$ by
Lemma~\ref{lem:good_event_cost}.
The third term is bounded by $R_{\mathrm{MEAN}}$ by
Lemma~\ref{lem:bound_Bmean_clean}.
The fourth term is bounded by $R_{\mathrm{CTS}}$ by
Corollary~\ref{cor:cts_regret}.
All four terms are finite constants independent of~$T$.

\section{Experiments}\label{sec:experiments}
\subsection{Experimental Setup}
We consider a multi-cell mmWave communication system with 3 BSs, where each BS is equipped with a uniform linear array (ULA) of 64 antenna elements with full wavelength spacing. Each BS can form 120 directional beams, resulting in a total of 360 beams across the network that serve 15 UEs distributed across the coverage area. The channel characteristics are obtained using the DeepMIMO dataset \cite{alkhateeb2019deepmimogenericdeeplearning}, specifically the \texttt{city\_3\_houston\_28} scenario, which provides realistic channel realizations based on ray-tracing simulations in an urban environment.\footnote{Our implementation is available at \url{https://github.com/Bilkent-CYBORG/Satisficing-with-Binary-Feedback-for-Combinatorial-Beam-Alignment}.\com{Update this with a link to the latest results}} The system operates with a bandwidth of 50 MHz \cite{alkhateeb2019deepmimogenericdeeplearning}, enabling high-throughput communication in the millimeter-wave band. The system employs adaptive modulation with three discrete rate levels: $\{6, 8, 12\}$ bits/symbol, which correspond to achievable data rates of 300 Mbps, 400 Mbps, and 600 Mbps respectively with the 50 MHz bandwidth. The target performance which is equal to satisficing threshold is set at $\tau_r = 8$ bits/symbol for the realizable case, corresponding to 400 Mbps per user, and in the non-realizable case it is set to $\tau_r = 25$ bits/symbol which corresponds to 1.25 Gbps. For all measurements, simulations were repeated for 5 independent runs and the average with standard deviation is plotted. For the optimization oracle, the Hungarian Algorithm is used \cite{hungariankuhn}.

 The channel gain randomness is achieved by modeling the array steering vector in \eqref{eqn:channelgain} as a Gaussian distributed random variable with mean equal to the channel vector. As benchmarks, we use CTS \cite{pmlr-v80-wang18a}, CUCB \cite{pmlr-v28-chen13a}, and the preliminary workshop version of our method, denoted by SAT-CTS-W \cite{ozyldrm2025satisficing}. 
In the theoretical analysis (Sections~\ref{sec:realizable}--\ref{sec:nonrealizable}), each committed CTS round is modeled as a fresh instance initialized with $\mathrm{Beta}(1,1)$ priors. This fresh-start construction is introduced to obtain a clean finite-time regret analysis by decoupling successive CTS rounds and enabling a deterministic-horizon argument. In the simulations, however, we do not restart the Beta priors at the beginning of each CTS round; the CTS phase instead samples from the global posterior accumulated over all previous rounds, allowing it to retain history. We have observed that this minor modification consistently improves empirical performance.   

We measure the cumulative satisficing regret for varying thresholds and assess user-level fairness on the same performance metric as our objective, throughput.
Let $r_{m,t}=\rho_{m,t}x_{m,t}$ be UE $m$'s throughput (bits/symbol) at slot $t$; define cumulative throughput $G_m(T)=\sum_{t=1}^{T} r_{m,t}$.
We report Jain’s Fairness Index \cite{jain1984fairness}:
\[
J(T)=\frac{\big(\sum_{m=1}^{M} G_m(T)\big)^2}{M\sum_{m=1}^{M} G_m^2(T)}\in\Big[\tfrac{1}{M},1\Big],
\]
where $J=1$ indicates perfectly even throughput across UEs and smaller values indicate disparity. The cumulative fairness over time is also measured in simulation.

\subsection{Results and Discussion}

\begin{figure}[!t]
  \centering
  \includegraphics[width=\columnwidth]{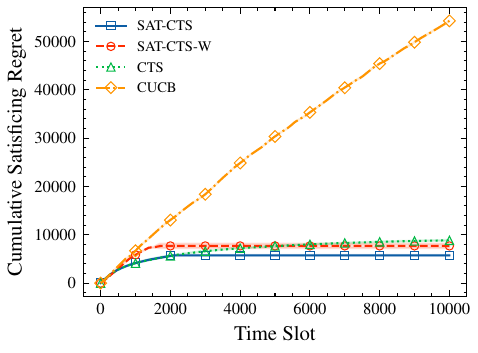}
  \caption{Cumulative satisficing regret (realizable, $\tau_r=8$).}
  \label{fig:4a}
\end{figure}

\begin{figure}[!t]
  \centering
  \includegraphics[width=\columnwidth]{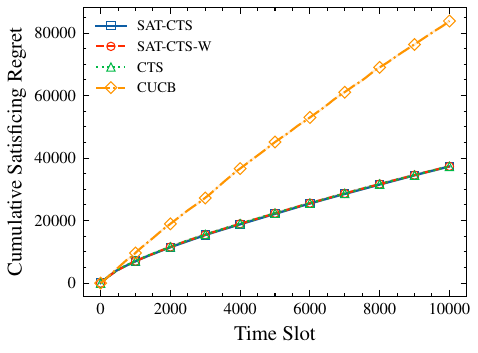}
  \caption{Cumulative satisficing regret (non-realizable, $\tau_r=25$).}
  \label{fig:4b}
\end{figure}

\begin{figure}[!t]
  \centering
  \includegraphics[width=\columnwidth]{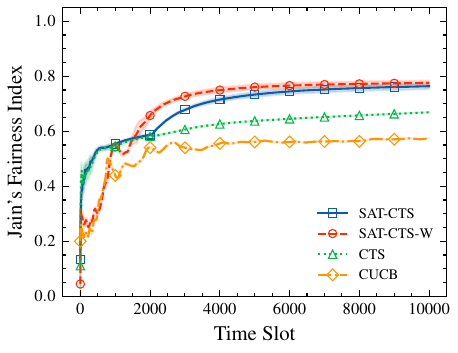}
  \caption{Jain's Fairness Index over time.}
  \label{fig:4c}
\end{figure}

\begin{figure}[!t]
  \centering
  \includegraphics[width=\columnwidth]{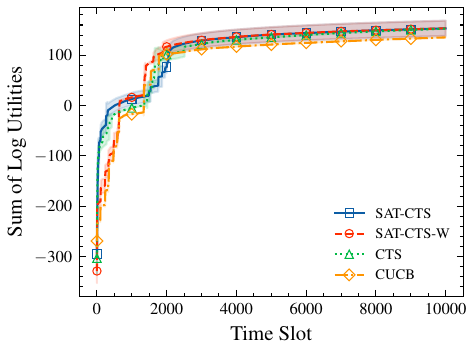}
  \caption{Sum of log utilities over time.}
  \label{fig:4d}
\end{figure}

Fig.~\ref{fig:4a} shows the cumulative satisficing regret under a realizable threshold ($\tau_r = 8$).\footnote{Some error bars are not visible because the standard deviation over different runs is very small.} Among all algorithms, the proposed SAT-CTS achieves the lowest cumulative satisficing regret throughout the horizon. Its regret increases only during the initial learning period and then flattens early, indicating that the algorithm quickly identifies beam-rate assignments whose average throughput satisfies the target. This behavior is consistent with Theorem~\ref{thm:preview_realizable}, which guarantees a horizon-free upper bound on the cumulative satisficing regret when the target is realizable. At $T=10{,}000$, SAT-CTS attains the smallest regret, followed by SAT-CTS-W, then CTS, while CUCB performs significantly worse. These results empirically support the algorithmic changes made relative to the workshop version, namely replacing the earlier design with the committed-round SAT-CTS structure and removing the optimistic UCB gate.

When the target is non-realizable ($\tau_r = 25$, Fig.~\ref{fig:4b}), all methods obtain linear \emph{satisficing} regret because no super arm can meet the threshold. In SAT-CTS, the decision gate (LCB $\rightarrow$ MEAN) frequently finds that no candidate achieves the target; it therefore enters committed CTS rounds, which are effectively standard CTS on the super-arm. As a result, SAT-CTS, SAT-CTS-W, and CTS behave nearly identically. This is precisely as predicted by Theorem \ref{thm:preview_nonrealizable}, which shows that SAT-CTS reduces to CTS after a finite transient when the target is non-realizable. CUCB, in contrast, incurs cumulative regret more than double that of the other methods confirming that its purely optimistic exploration is inefficient even under non-realizability.

Fairness results in Figure~\ref{fig:4c} show that both SAT-CTS and SAT-CTS-W achieve substantially higher Jain's Fairness Index than CTS and CUCB throughout the horizon. At early time slots, all methods improve rapidly, but the SAT-CTS variants separate from the baselines as learning progresses. At $T=10{,}000$, SAT-CTS-W reaches a Jain's index of $0.776$, followed by SAT-CTS at $0.764$, while CTS and CUCB lag behind at $0.669$ and $0.574$, respectively. This indicates that the satisficing-based methods allocate service more evenly across users over time.

Fig.~\ref{fig:4d} shows the sum of log utilities $\sum_{m=1}^{M}\log G_m(T)$, another standard measure of fairness. At $T=10{,}000$, SAT-CTS-W achieves $153.5$, SAT-CTS reaches $153.3$, and CTS attains $152.9$, while CUCB remains clearly below at $135.4$. Taken together, these results show that SAT-CTS matches or exceeds the workshop version on fairness-related metrics and consistently improves over the classical baselines.
\subsection{Scalability Experiments}

We next evaluate how the proposed method scales with the different parameters of the problem. In particular, we study the effect of increasing the beam codebook size, the number of BSs, and the number of users on cumulative satisficing regret at $T=10{,}000$ (mean $\pm$ std).

\begin{table}[!htbp]
\centering
\caption{Cumulative satisficing regret at $T=10{,}000$ across beam codebook sizes (15 users, 3 BSs,  $\tau_r=8$ bits/symbol, averaged over 5 runs).}
\label{tab:scalability}
\scriptsize
\setlength{\tabcolsep}{4pt}
\renewcommand{\arraystretch}{1.08}
\begin{tabularx}{\linewidth}{@{}l *{4}{>{\centering\arraybackslash}X}@{}}
\toprule
& \multicolumn{4}{c}{\textbf{$K$ (beams/BS)}} \\
\cmidrule(lr){2-5}
\textbf{Algorithm} & \textbf{30} & \textbf{60} & \textbf{120} & \textbf{240} \\
\midrule
SAT-CTS   & $8169 \pm 199$  & $7975 \pm 10$  & $5596 \pm 71$  & $4156 \pm 997$ \\
SAT-CTS-W & $15410 \pm 67$  & $19442 \pm 23$ & $7331 \pm 10$  & $8263 \pm 28$  \\
CTS       & $8414 \pm 86$   & $7928 \pm 87$  & $8940 \pm 67$  & $10182 \pm 102$ \\
CUCB      & $38426 \pm 153$ & $43074 \pm 71$ & $54451 \pm 51$ & $61286 \pm 14$ \\
\bottomrule
\end{tabularx}
\end{table}

Table~\ref{tab:scalability} shows that SAT-CTS remains robust as the beam codebook size grows. The performance improvement with larger beam codebooks can be explained by two complementary effects. First, SAT-CTS adapts quickly enough to benefit from the richer beam set without incurring excessive exploration cost which can be seen by comparing with the CTS. Second, as the number of beams increases, the beams become narrower and more directional, which concentrates transmit power more effectively toward the intended UE. This improves the received SNR and reduces the outage probability, making it easier to find beam-rate assignments that satisfy the throughput target.

\begin{table}[!htbp]
\centering
\caption{Cumulative satisficing regret at $T=10{,}000$ across number of base stations (15 users, $K=120$ beams/BS, $\tau_r=8$ bits/symbol, averaged over 5 runs).}
\label{tab:scalability-bs}
\scriptsize
\setlength{\tabcolsep}{4pt}
\renewcommand{\arraystretch}{1.08}
\begin{tabularx}{\linewidth}{@{}l *{3}{>{\centering\arraybackslash}X}@{}}
\toprule
& \multicolumn{3}{c}{\textbf{Number of BSs}} \\
\cmidrule(lr){2-4}
\textbf{Algorithm} & \textbf{1} & \textbf{2} & \textbf{3} \\
\midrule
SAT-CTS   & $7247 \pm 89$   & $17161 \pm 183$ & $6207 \pm 903$ \\
SAT-CTS-W & $10137 \pm 126$ & $30791 \pm 52$  & $8074 \pm 505$ \\
CTS       & $7341 \pm 100$  & $17370 \pm 82$  & $8952 \pm 16$  \\
CUCB      & $21391 \pm 54$  & $51465 \pm 81$  & $54387 \pm 61$ \\
\midrule
SAT-CTS/CTS & $0.987$ & $0.988$ & $0.693$ \\
\bottomrule
\end{tabularx}
\end{table}

Table~\ref{tab:scalability-bs} evaluates scaling with the number of BSs. The two BS case appears more challenging for all methods, but when the system expands to three BSs, SAT-CTS achieves the lowest regret and improves substantially over CTS, as also reflected by the SAT-CTS/CTS ratio dropping to $0.693$. This indicates that SAT-CTS benefits more effectively from the additional spatial diversity and assignment flexibility available in the larger network, while CUCB remains consistently much worse than the Thompson sampling based methods.
\begin{table}[!htbp]
\centering
\caption{Cumulative satisficing regret at $T=10{,}000$ across number of users (3 BSs, $K=120$ beams/BS, 360 total beams, $\tau_r=8$ bits/, averaged over 5 runs).}
\label{tab:scalability-users}
\scriptsize
\setlength{\tabcolsep}{4pt}
\renewcommand{\arraystretch}{1.08}
\begin{tabularx}{\linewidth}{@{}l *{3}{>{\centering\arraybackslash}X}@{}}
\toprule
& \multicolumn{3}{c}{\textbf{Number of Users}} \\
\cmidrule(lr){2-4}
\textbf{Algorithm} & \textbf{15} & \textbf{50} & \textbf{100} \\
\midrule
SAT-CTS   & $5596 \pm 71$   & $4940 \pm 25$   & $1634 \pm 8$   \\
SAT-CTS-W & $7331 \pm 10$   & $7380 \pm 149$  & $3657 \pm 5$   \\
CTS       & $8940 \pm 67$   & $6115 \pm 61$   & $4463 \pm 21$  \\
CUCB      & $54451 \pm 51$  & $52199 \pm 53$  & $50259 \pm 37$ \\
\bottomrule
\end{tabularx}
\end{table}

Table~\ref{tab:scalability-users} examines the effect of increasing the number of users. This table suggests that, in the considered environment, increasing the number of users can improve learning efficiency. A possible reason is that the learner receives more semi-bandit feedback in each round, since one ACK/NACK outcome is observed per served user. At the same time, because 360 beams are available for at most 100 users, beam contention remains limited in these experiments. Thus, the additional feedback appears to outweigh the increased assignment complexity in this setting. We emphasize, however, that this is an empirical observation for the present scenario rather than a general claim for all system scales. CUCB, in contrast, remains an order of magnitude worse across all cases.
\section{Conclusion and Future Research}\label{sec:conclusion}
We formulated a multi-user beam–rate selection problem with ACK/NACK feedback as a satisficing combinatorial bandit with a per-user throughput threshold and a no–beam-sharing assignment, for which we proposed SAT-CTS, an if-gated policy that blends conservative and exploratory indices. We established finite-time regret bounds for SAT-CTS: when the satisficing threshold is realizable, we proved a horizon-free upper bound on cumulative satisficing regret; when the threshold is non-realizable, we showed that SAT-CTS reduces to CTS after a finite transient and inherits the standard CTS regret bound. In experiments on time-varying channels using DeepMIMO as an accurate simulator of real-life environments, SAT-CTS reached realizable targets with lower cumulative satisficing regret than baselines found in the literature, consistent with our theoretical bounds, while behaving comparably to CTS when the target was infeasible, as predicted by our reduction theorem.
As future work, it is possible to extend our formulations and approaches to contextual combinatorial bandits that exploit side information (e.g., geometry, mobility) to accelerate learning and improve robustness. One could also explicitly incorporate fairness objectives as constraints or via multi-objective optimization.

\add{

\appendices

\setlength{\floatsep}{6pt}
\setlength{\textfloatsep}{6pt}
\setlength{\intextsep}{6pt}

\section{Omitted Proofs of Results in Section \ref{sec:realizable}} \label{appendix:omitted}

\subsection{Proof of Lemma \ref{lem:single_arm}}

For a fixed $n \geq 1$, we have by Lemma \ref{ass:subgaussian}
\begin{align*}
    \mathbb P\left(\big|\hat\psi_{i,n}-\psi_i\big|>c(t,n)\right) &\leq 2 \exp \bigl(-2n c(t,n)^2\bigr) \\ 
    &= 2 \exp \bigl(-3 \log t \bigr) \\
    &= 2 t^{-3} .
\end{align*}
Note that for $t > T_0$, we have $1 \leq n_i(t) \leq t$. Thus, 
\begin{align*}
    \left\{ \big|\hat\psi_i(t)-\psi_i\big|>c(t,n_i(t)) \right\} \subseteq \bigcup_{n=1}^t \left\{ \big|\hat\psi_{i,n} -\psi_i\big|>c(t,n) \right\}
\end{align*}
Using the two displays above, we obtain the following via the union bound
\begin{align*}
  &\mathbb P \left(\big|\hat\psi_i(t)-\psi_i\big|>c(t,n_i(t))\right) \\
  &\qquad \qquad \leq \sum_{n=1}^t \mathbb P\left(\big|\hat\psi_{i,n}-\psi_i\big|>c(t,n)\right) \\
  &\qquad \qquad = \sum_{n=1}^t 2 t^{-3} = 2 t^{-2}.
\end{align*}

\subsection{Proof of Lemma \ref{lem:lipschitz_constant}}

We have
\begin{align*}
|g(s;\psi_s)-g(s;\tilde\psi_s)|
&=
\left|
\frac{1}{M}\sum_{m=1}^{M}
r_{i_m(s)}\bigl(\psi_{i_m(s)}-\tilde\psi_{i_m(s)}\bigr)
\right| \\
&\le
\frac{1}{M}\sum_{m=1}^{M}
r_{i_m(s)}\left|\psi_{i_m(s)}-\tilde\psi_{i_m(s)}\right| \\
&\le
\frac{r_{\max}}{M}\sum_{m=1}^{M}
\left|\psi_{i_m(s)}-\tilde\psi_{i_m(s)}\right| \\
&=
\frac{r_{\max}}{M}\|\psi_s-\tilde\psi_s\|_1.
\end{align*}

\subsection{Proof of Lemma \ref{lem:mean_bad_implies_single}}

Let $S_t=s\in\mathcal S_{\mathrm{bad}}$ and $\hat g_{\mathrm{shared}}(s,t)\ge\tau_r$.
Then
\[
\hat g_{\mathrm{shared}}(s,t)-g(s)\ge \tau_r-g(s)=\Delta_{\mathrm{sat}}(s).
\]
Write
\[
\hat g_{\mathrm{shared}}(s,t)-g(s)
=
\frac{1}{M}\sum_{m=1}^{M} r_{i_m(s)}\bigl(\hat\psi_{i_m(s)}(t)-\psi_{i_m(s)}\bigr).
\]
If for all $m$ we had
\[
\hat\psi_{i_m(s)}(t)-\psi_{i_m(s)}<\Delta^{\text{norm}}_{i_m(s)}
= \frac{\Delta_{i_m(s)}^{\mathrm{bad}}}{r_{i_m(s)}},
\]
then since $\Delta_{i_m(s)}^{\mathrm{bad}}\le\Delta_{\mathrm{sat}}(s)$ for each $i_m(s)\in s$,
\[
\hat g_{\mathrm{shared}}(s,t)-g(s)
<
\frac{1}{M}\sum_{m=1}^{M} r_{i_m(s)}\cdot\frac{\Delta_{\mathrm{sat}}(s)}{r_{i_m(s)}}
=\Delta_{\mathrm{sat}}(s),
\]
contradiction. Hence at least one $m$ satisfies
\[
\hat\psi_{i_m(s)}(t)-\psi_{i_m(s)}\ge \Delta^{\text{norm}}_{i_m(s)}.
\]

\subsection{Proof of Lemma \ref{lem:decouple_counts}}


As $t$ ranges over $\mathcal T_i(T)$, the count $n_i(t)$ takes distinct values
\[
n_i(T_0+1),\; n_i(T_0+1)+1,\;\dots,\; n_i(T)
\]
exactly once. Therefore
\[
\begin{split}
\sum_{t=T_0+1}^{T}\mathbf 1\{E_{i,t}(\varepsilon)\}
&=
\sum_{t\in\mathcal T_i(T)}\mathbf 1\{E_{i,t}(\varepsilon)\} \\
&=
\sum_{n=n_i(T_0+1)}^{n_i(T)}
\mathbf 1\Bigl\{\hat\psi_{i,n}\ge \psi_i+\varepsilon\Bigr\} \\
&\le
\sum_{n=1}^{T}\mathbf 1\Bigl\{\hat\psi_{i,n}\ge \psi_i+\varepsilon\Bigr\}.
\end{split}
\]

\subsection{Proof of Lemma \ref{lem:subopt_direct}}

Let $\theta_i(t) \sim \mathrm{Beta}(A_i, B_i)$ denote the Thompson sample for arm $i$ at time $t$, and let $\theta_{S_t}(t) := (\theta_{i_1(S_t)}(t), \ldots, \theta_{i_M(S_t)}(t))$.
We apply the four-event decomposition of the
supplementary material of \cite{pmlr-v80-wang18a}
directly to $\mathbf{1}\{S_t\neq s^\star\}$ rather
than to the regret-weighted quantity
$\Delta_{S_t}\cdot\mathbf{1}\{S_t\neq s^\star\}$.
Recall the four events:
\begin{itemize}
  \item $A(t)=\{S_t\notin\mathrm{OPT}\}$,
  \item $B(t)=\{\exists\,i\in S_t,\,
        |\hat\psi_i(t)-\psi_i|>\varepsilon/|S_t|\}$,
  \item $C(t)=\{\|\theta_{S_t}(t)-\psi_{S_t}\|_1
        >\Delta_{S_t}/B_{\mathrm{cts}}
        -((k^\star)^2+1)\varepsilon\}$,
  \item $D(t)=\{\exists\,i\in S_t,\,
        |\theta_i(t)-\hat\psi_i(t)|
        >\sqrt{2\log T/n_i(t)}\}$.
\end{itemize}

\medskip\noindent
\textbf{Term~1} ($B(t)\wedge A(t)$, 
Section~B.3.1 of \cite{pmlr-v80-wang18a}):
By Lemma~3 of \cite{pmlr-v80-wang18a}, the expected 
number of rounds arm $i$ is played with empirical 
mean deviation $>\varepsilon/K_{\max}$ is at most 
$1+K_{\max}^2/\varepsilon^2$.
Summing over arms and dropping the $\Delta_{\max}$ 
weight:
\[
\sum_{t=1}^T
\mathbb{E}[\mathbf{1}\{B(t)\wedge A(t)\}]
\;\le\;
|\mathcal{A}|
+\frac{|\mathcal{A}|K_{\max}^2}{\varepsilon^2}.
\]

\medskip\noindent
\textbf{Term~2} ($\neg B\wedge C\wedge D\wedge A$,
Section~B.3.2 of \cite{pmlr-v80-wang18a}):
By Lemma~4 of \cite{pmlr-v80-wang18a},
$\Pr[|\theta_i(t)-\hat\psi_i(t)|
>\sqrt{2\log T/n_i(t)}]\le 2/T$.
A union bound over arms gives:
\[
\sum_{t=1}^T
\mathbb{E}[\mathbf{1}\{\neg B(t) \wedge C(t) \wedge D(t) \wedge A(t) \}]
\;\le\; 2|\mathcal{A}|.
\]
\medskip\noindent
\textbf{Term~3} ($\neg B\wedge C\wedge\neg D\wedge A$,
Section~B.3.3 of \cite{pmlr-v80-wang18a}):
Define the sufficiency threshold
\[
L_i(s)
:=
\frac{2\log T}
{\Bigl(
  \frac{\Delta_s}{2B_{\mathrm{cts}}|s|}
  -\frac{(k^\star)^2+2}{|s|}\varepsilon
\Bigr)^2}
=
\frac{8B_{\mathrm{cts}}^2|s|^2\log T}
     {(\Delta_s-2B_{\mathrm{cts}}((k^\star)^2+2)\varepsilon)^2}.
\]
By Section~B.3.3 of \cite{pmlr-v80-wang18a},
if $n_i(t)>L_i(S_t)$ for all $i\in S_t$ then
$\neg B\wedge C\wedge\neg D\wedge A$ cannot occur.
Hence at any such round there exists a witness arm
$i\in S_t$ with $n_i(t)\le L_i(S_t)\le L_i^{\max}$,
where $L_i^{\max}:=\max_{s:\,i\in s,\,s\neq s^\star}
L_i(s)$.
Since $n_i(t)$ increments with each pull of arm $i$,
arm $i$ can serve as witness for at most $L_i^{\max}$
rounds. Summing over arms:
\[
\sum_{t=1}^T
\mathbb{E}[\mathbf{1}\{
  \neg B(t) \wedge C(t) \wedge\neg D(t)\wedge A(t)\}]
\;\le\;
\sum_{i\in\mathcal{A}} L_i^{\max}
=
\hat{C}_1\log T.
\]

\medskip\noindent
\textbf{Term~4} ($\neg C\wedge A$,
Section~B.3.4 of \cite{pmlr-v80-wang18a}):
By Lemma~7 of \cite{pmlr-v80-wang18a} applied to 
the count (dropping the $\Delta_{\max}$ factor):
\[
\sum_{t=1}^T
\mathbb{E}[\mathbf{1}\{\neg C(t)\wedge A(t)\}]
\;\le\;
\alpha_1\cdot
\frac{8}{\varepsilon^2}
\!\left(\frac{4}{\varepsilon^2}+1\right)^{k^\star}
\!\log\frac{k^\star}{\varepsilon^2}.
\]

\medskip\noindent
Summing all four terms, with $\hat{C}_0$ collecting 
Terms~1, 2, and~4, yields the result.

\subsection{Proof of Lemma \ref{lem:gate_to_arm}}

Suppose for contradiction that for all $m\in[M]$,
\[
\hat\psi_{i_m(s^\star)}(t)
-\psi_{i_m(s^\star)}
\ge -\frac{\Delta_\star}{r_{\max}}.
\]
Since $r_{i_m(s^\star)}\le r_{\max}$, multiplying
both sides by $r_{i_m(s^\star)}$ gives
\[
r_{i_m(s^\star)}
\bigl(\hat\psi_{i_m(s^\star)}(t)
-\psi_{i_m(s^\star)}\bigr)
\ge -r_{i_m(s^\star)}\cdot
\frac{\Delta_\star}{r_{\max}}
\ge -\Delta_\star
\]
for every $m$. Averaging over $m=1,\ldots,M$:
\begin{align*}
\hat g_{\mathrm{shared}}(s^\star,t)-g(s^\star)
&= \frac{1}{M}\sum_{m=1}^{M}
r_{i_m(s^\star)}
\bigl(\hat\psi_{i_m(s^\star)}(t)
-\psi_{i_m(s^\star)}\bigr)\\
&\ge \frac{1}{M}\sum_{m=1}^{M}
(-\Delta_\star)
= -\Delta_\star.
\end{align*}
Hence
\[
\hat g_{\mathrm{shared}}(s^\star,t)
\ge g(s^\star)-\Delta_\star
=\tau_r,
\]
contradicting
\[
\hat g_{\mathrm{shared}}(s^\star,t)<\tau_r.
\]
Therefore at least one $m\in[M]$ satisfies
\[
\hat\psi_{i_m(s^\star)}(t)
-\psi_{i_m(s^\star)}
<-\Delta_\star/r_{\max}.
\]

\subsection{Proof of Lemma \ref{lem:tcts_explicit}}

Rearranging \eqref{eq:tcts_def}, we need the smallest $T$
such that $aT + b \ge \log T$, where
$a := \frac{\delta}{\hat{C}_1} > 0$,
$b := -\frac{\delta N_0 + \hat{C}_0}{\hat{C}_1}$.
By Lemma~8 of \cite{ANTOS20102712}, this 
inequality holds for all
\[
T \ge \frac{2}{a}\left[\log\frac{1}{a} - b\right]^+
=
\frac{2\hat{C}_1}{\delta}
\left[
\log\frac{\hat{C}_1}{\delta}
+
\frac{\delta N_0+\hat{C}_0}{\hat{C}_1}
\right]^+.
\]
Since $T_{\mathrm{CTS}}^\star$ is an integer, 
\[
T_{\mathrm{CTS}}^\star
\le
\left\lceil
\frac{2\hat{C}_1}{\delta}
\left[
\log\frac{\hat{C}_1}{\delta}
+
\frac{\delta N_0+\hat{C}_0}{\hat{C}_1}
\right]^+
\right\rceil.
\]

\subsection{Proof of Corollary \ref{cor:cts_regret}}

For any CTS round $i$, since
\[
\Delta_{\mathrm{sat}}(S_t)=[\tau_r-g(S_t)]_+,
\]
and $g(s^\star)>\tau_r$ under realizability, we have
$\Delta_{\mathrm{sat}}(S_t)=0$ whenever $S_t=s^\star$.
Also, for any $t$, $\Delta_{\mathrm{sat}}(S_t)\le \tau_r$.
Hence,
\[
\mathbb{E}\!\left[
\sum_{t\in\text{round }i}\Delta_{\mathrm{sat}}(S_t)
\right]
\le
\tau_r\,\mathbb{E}[B_{\mathrm{sub}}(2^i)]
\le
\tau_r\,(\hat C_1\,i\log 2+\hat C_0),
\]
where the last inequality follows from Lemma~\ref{lem:subopt_direct}.

Therefore, the total CTS-phase satisficing regret satisfies
\begin{align*}
&\mathbb{E}\!\left[
\sum_{t\in\mathcal{T}_{\mathrm{CTS}}}\Delta_{\mathrm{sat}}(S_t)
\right] \\
&\le
\sum_{i=1}^{\infty}
\tau_r(\hat C_1\,i\log 2+\hat C_0)\,
\mathbb P(\text{round }i\text{ starts})\\
&\le \tau_r\sum_{i=1}^{i^\star-1}
     (\hat C_1\,i\log 2+\hat C_0)\\
&\quad+ \tau_r\sum_{k=0}^{\infty}
     (\hat C_1(i^\star+k)\log 2+\hat C_0)
     (2\delta)^k \\
&\le \tau_r\Bigg(
     \frac{\hat C_1\log 2}{2}(i^\star)^2
   + \hat C_0\,i^\star
   + \frac{\hat C_1\,i^\star\log 2+\hat C_0}{1-2\delta}
   + \frac{2\hat C_1\delta\log 2}{(1-2\delta)^2}
   \Bigg),
\end{align*}
where the second inequality follows from Proposition~\ref{prop:geometric_decay}, and
$i^\star=\lceil\log_2 T_{\mathrm{CTS}}^\star\rceil$.
Thus, the CTS-phase satisficing regret is
$O((\log T_{\mathrm{CTS}}^\star)^2)$, which is a finite constant independent of $T$.
\qed

\section{Standard Regret Analysis for Non-Realizable Target}
\label{sec:nonrealizable}

In this section we analyze SAT-CTS when the satisficing threshold is non-realizable. In this case, no assignment can achieve the target average throughput. We show that the expected number of rounds in which the LCB or MEAN gate fires after initialization is finite. We also show that the expected number of rounds in which SAT-CTS does not execute a CTS step is finite. The remaining regret is therefore governed by the sum of the regret contributions of committed CTS rounds. Since each committed CTS round is restarted with fresh priors, the total standard regret is obtained by summing the regret over all CTS rounds, which yields an $O((\log T)^2)$ bound, similar to the non-realizable regret bound in \cite{feng2025satisficing}.

We use the same notation as in Section~\ref{sec:main_guarantees}. Recall that $g(s)$, $g^\star$, and $s^\star$ denote the average expected throughput, optimal throughput, and unique optimal assignment, respectively.
Recall $\Delta^{\mathrm{nr}}(s)$ from Section~\ref{sec:main_guarantees}. Under Assumption~\ref{ass:nonrealizable}, we have
\[
\Delta^{\mathrm{nr}}(s)\ge \Delta_\star^{\mathrm{nr}} > 0
\qquad \text{for all } s\in\mathcal S.
\]

\subsection{LCB and MEAN Gates under Non-Realizability}

We first show that on the good event, the LCB gate cannot fire, while the MEAN gate can fire only finitely many times in expectation.

\begin{lemma}[LCB gate cannot fire on the good event]
\label{lem:lcb_nonrealizable}
For all $t>T_0$, on $\mathcal G_t$, the LCB gate does not fire. Equivalently,
\[
\mathbf 1\{\mathrm{Avg}(\mathrm{LCB},S_L)\ge \tau_r,\ \mathcal G_t\}=0
\qquad \text{a.s.}
\]
\end{lemma}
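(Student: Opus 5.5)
The argument is a short contradiction built on the conservatism of the LCB index, Lemma~\ref{lem:lcb_conservative}, combined with Assumption~\ref{ass:nonrealizable}.

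Fix $t>T_0$ and suppose $\mathcal G_t$ holds. Let $S_L=\mathrm{BestAssign}(\mathrm{LCB})$ be the LCB candidate computed at time $t$. Note that $S_L$ is a feasible assignment in $\mathcal S$, since the oracle maximizes over $\mathcal S$.

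Lemma~\ref{lem:lcb_conservative} applies to every $s\in\mathcal S$, so in particular to $s=S_L$, and gives
\[
\mathrm{Avg}(\mathrm{LCB},S_L)=\frac{1}{M}\sum_{m=1}^M \mathrm{LCB}_{i_m(S_L)}(t)\le g(S_L).
\]
Since $g^\star$ is the maximum of $g$ over $\mathcal S$, we also have $g(S_L)\le g^\star$. Assumption~\ref{ass:nonrealizable} then gives $g^\star=\tau_r-\Delta_\star^{\mathrm{nr}}<\tau_r$.

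Chaining these three inequalities yields
\[
\mathrm{Avg}(\mathrm{LCB},S_L)\le \tau_r-\Delta_\star^{\mathrm{nr}}<\tau_r.
\]
So the gate condition $\mathrm{Avg}(\mathrm{LCB},S_L)\ge\tau_r$ fails on $\mathcal G_t$. Hence the indicator $\mathbf 1\{\mathrm{Avg}(\mathrm{LCB},S_L)\ge \tau_r,\ \mathcal G_t\}$ is identically zero, which is the almost-sure statement claimed.

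There is essentially no obstacle here. The only point to check is that Lemma~\ref{lem:lcb_conservative} holds for a data-dependent assignment such as $S_L$. It does, because that lemma is a pointwise statement on $\mathcal G_t$ holding simultaneously for all base arms, and therefore for all $s\in\mathcal S$. Realizability is not used anywhere in this argument, so the lemma carries over to the non-realizable setting unchanged.
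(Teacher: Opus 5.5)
Your proof is correct and matches the paper's argument: on $\mathcal G_t$ you apply Lemma~\ref{lem:lcb_conservative} to the data-dependent assignment $S_L$ to get $\mathrm{Avg}(\mathrm{LCB},S_L)\le g(S_L)\le g^\star<\tau_r$. The only difference is that the paper phrases this as a contradiction rather than a direct chain of inequalities.
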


\begin{proof}
Let $S_L$ be the assignment returned by the LCB oracle at time $t>T_0$. By Lemma~\ref{lem:lcb_conservative}, on $\mathcal G_t$ we have
\[
g(S_L)\ge \frac{1}{M}\sum_{m=1}^{M}\mathrm{LCB}_{i_m(S_L)}(t)
=\mathrm{Avg}(\mathrm{LCB},S_L).
\]
Hence, if the LCB gate fired on $\mathcal G_t$, then
\[
g(S_L)\ge \mathrm{Avg}(\mathrm{LCB},S_L)\ge \tau_r.
\]
This contradicts Assumption~\ref{ass:nonrealizable}, since $g(S_L)\le g^\star<\tau_r$. Therefore the LCB gate cannot fire on $\mathcal G_t$.
\end{proof}

\begin{definition}[MEAN-gate firing count]
\label{def:mean_count_nonrealizable}
Define the number of rounds in which the MEAN gate fires after initialization up to time $T$ as
\[
N_{\mathrm{MEAN}}(T)
:=
\sum_{t=T_0+1}^{T}\mathbf 1\{\hat g_{\mathrm{shared}}(S_M,t)\ge \tau_r\},
\]
where $S_M$ denotes the assignment returned by the MEAN oracle at time $t$. Since
\[
\hat g_{\mathrm{shared}}(S_M,t)=\mathrm{Avg}(\mathrm{MEAN},S_M),
\]
this is exactly the number of post-initialization rounds in which the MEAN gate fires.
\end{definition}

\begin{lemma}[A MEAN-gate firing implies a single-arm upward deviation]
\label{lem:mean_nonrealizable_single}
For any round $t>T_0$,
\[
\{\hat g_{\mathrm{shared}}(S_M,t)\ge \tau_r\}
\subseteq
\bigcup_{m=1}^{M}
E_{i_m(S_M),t}\!\left(\frac{\Delta^{\mathrm{nr}}(S_M)}{r_{i_m(S_M)}}\right),
\]
where $E_{i,t}(\varepsilon)$ is the overestimation event defined in \eqref{eqn:overestimate_def}.
\end{lemma}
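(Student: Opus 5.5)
The plan is to mirror the contradiction argument of Lemma~\ref{lem:mean_bad_implies_single}, with the realizable gap $\Delta_{\mathrm{sat}}(s)$ replaced by the non-realizability gap $\Delta^{\mathrm{nr}}(S_M)$.

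Fix $t>T_0$ and set $s:=S_M$. This is the assignment returned by the MEAN oracle at time $t$, and it is a function of the shared counters. Assume $\hat g_{\mathrm{shared}}(s,t)\ge \tau_r$. Since $g(s)=\tau_r-\Delta^{\mathrm{nr}}(s)$, we get
\[
\hat g_{\mathrm{shared}}(s,t)-g(s)\ \ge\ \Delta^{\mathrm{nr}}(s)\ >\ 0 ,
\]
and Assumption~\ref{ass:nonrealizable} gives $\Delta^{\mathrm{nr}}(s)\ge\Delta_\star^{\mathrm{nr}}$, so the threshold is strictly positive. Expanding the left-hand side gives
\[
\hat g_{\mathrm{shared}}(s,t)-g(s)=\frac1M\sum_{m=1}^M r_{i_m(s)}\bigl(\hat\psi_{i_m(s)}(t)-\psi_{i_m(s)}\bigr).
\]

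Now suppose, for contradiction, that every $m$ satisfies $\hat\psi_{i_m(s)}(t)-\psi_{i_m(s)}<\Delta^{\mathrm{nr}}(s)/r_{i_m(s)}$. Multiplying each inequality by $r_{i_m(s)}>0$ and averaging over $m$ gives $\hat g_{\mathrm{shared}}(s,t)-g(s)<\Delta^{\mathrm{nr}}(s)$, which contradicts the display above. Hence some $m$ satisfies $\hat\psi_{i_m(s)}(t)\ge \psi_{i_m(s)}+\Delta^{\mathrm{nr}}(s)/r_{i_m(s)}$. Here the threshold is assignment-specific, so no per-arm minimization of the kind used in the definition of $\Delta_i^{\mathrm{bad}}$ is required.

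It remains to check that this deviation is exactly the event $E_{i_m(s),t}(\cdot)$ from \eqref{eqn:overestimate_def}. On the MEAN branch the algorithm plays $S_t=S_M$, so $i_m(s)\in S_t$. Also $\hat\psi_{i}(t)=\hat\psi_{i,n_i(t)}$ by definition of the shared counters, and $n_i(t)\ge1$ after initialization.

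The main subtlety is the membership requirement $\{i\in S_t\}$. The set on the left-hand side is just $\{\hat g_{\mathrm{shared}}(S_M,t)\ge\tau_r\}$, and on it the LCB gate might fire first, in which case $S_t=S_L\neq S_M$ is possible. I would handle this by reading the inclusion on rounds where the MEAN gate is the one that fires, consistent with Definition~\ref{def:mean_count_nonrealizable}, so that $S_t=S_M$ there. If the LCB gate fires, I would attribute that round to the LCB count instead; by Lemma~\ref{lem:lcb_nonrealizable} this happens only off $\mathcal G_t$. Alternatively, I would interpret $E$ with $S_M$ in place of $S_t$, since the deviation statement does not depend on what is played. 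With that clarification the inclusion follows directly from the contradiction argument.
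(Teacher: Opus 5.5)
Your argument is correct and is the same contradiction argument the paper uses, namely Lemma~\ref{lem:mean_bad_implies_single} with $\Delta^{\mathrm{nr}}(S_M)$ in place of $\Delta_{\mathrm{sat}}(s)$. The paper's proof silently adopts your first reading of the membership issue by starting from ``if the MEAN gate fires at time $t$'' (so $S_t=S_M$); keep that reading rather than your alternative of replacing $S_t$ by $S_M$, because Lemma~\ref{lem:decouple_counts} downstream needs arm $i$ to actually be played at $t$.
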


\begin{proof}
If the MEAN gate fires at time $t>T_0$, then
\[
\hat g_{\mathrm{shared}}(S_M,t)\ge \tau_r.
\]
Since $g(S_M)<\tau_r$ under Assumption~\ref{ass:nonrealizable}, it follows that
\[
\hat g_{\mathrm{shared}}(S_M,t)-g(S_M)\ge \tau_r-g(S_M)=\Delta^{\mathrm{nr}}(S_M).
\]
Write
\begin{align*}
& \hat g_{\mathrm{shared}}(S_M,t)-g(S_M) \\
&\qquad =
\frac{1}{M}\sum_{m=1}^{M}
r_{i_m(S_M)}
\bigl(\hat\psi_{i_m(S_M)}(t)-\psi_{i_m(S_M)}\bigr).
\end{align*}
If for all $m\in[M]$ we had
\[
\hat\psi_{i_m(S_M)}(t)-\psi_{i_m(S_M)}
<
\frac{\Delta^{\mathrm{nr}}(S_M)}{r_{i_m(S_M)}},
\]
then
\[
\hat g_{\mathrm{shared}}(S_M,t)-g(S_M)
<
\frac{1}{M}\sum_{m=1}^{M}\Delta^{\mathrm{nr}}(S_M)
=
\Delta^{\mathrm{nr}}(S_M),
\]
which is a contradiction. Hence there exists at least one $m\in[M]$ such that
\[
\hat\psi_{i_m(S_M)}(t)-\psi_{i_m(S_M)}
\ge \frac{\Delta^{\mathrm{nr}}(S_M)}{r_{i_m(S_M)}}.
\]
This proves the claim.
\end{proof}

\begin{lemma}[Finite expected number of MEAN-gate firings]
\label{lem:mean_nonrealizable_finite}
For all $T\ge T_0+1$,
\[
\mathbb E[N_{\mathrm{MEAN}}(T)]
\le
\sum_{i\in\mathcal A}\frac{r_i^2}{2(\Delta_\star^{\mathrm{nr}})^2}.
\]
\end{lemma}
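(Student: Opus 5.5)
The plan is to mirror the argument for $\mathbb E[B_{\mathrm{MEAN}}(T)]$ in Lemma~\ref{lem:bound_Bmean_clean}, now using the uniform non-realizability margin in place of $\Delta_i^{\mathrm{bad}}$. First, since $\Delta^{\mathrm{nr}}(s)\ge\Delta_\star^{\mathrm{nr}}$ for every $s\in\mathcal S$, the overestimation events are monotone in the threshold: $E_{i,t}(\varepsilon)\subseteq E_{i,t}(\varepsilon')$ whenever $\varepsilon'\le\varepsilon$. Lemma~\ref{lem:mean_nonrealizable_single} then gives, for every $t>T_0$,
\[
\{\hat g_{\mathrm{shared}}(S_M,t)\ge\tau_r\}\subseteq\bigcup_{m=1}^M E_{i_m(S_M),t}\!\left(\frac{\Delta_\star^{\mathrm{nr}}}{r_{i_m(S_M)}}\right).
\]
This removes the dependence of the threshold on the random assignment $S_M$, so that it depends only on the arm.

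The second step is to bound the indicator by a union over all arms:
\[
\mathbf 1\{\hat g_{\mathrm{shared}}(S_M,t)\ge\tau_r\}\le\sum_{i\in\mathcal A}\mathbf 1\{E_{i,t}(\Delta_\star^{\mathrm{nr}}/r_i)\},
\]
exactly as in Lemma~\ref{lem:Bmean_dominate}. Summing over $t=T_0+1,\dots,T$ and taking expectations, Lemma~\ref{lem:one_arm_geo} with $\varepsilon=\Delta_\star^{\mathrm{nr}}/r_i$ bounds each arm's contribution by $\frac{1}{2\varepsilon^2}=\frac{r_i^2}{2(\Delta_\star^{\mathrm{nr}})^2}$. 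Summing over $i\in\mathcal A$ then yields the claim.

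The main obstacle is a subtlety in the definition of $E_{i,t}$. That event requires $i\in S_t$, the arm actually played at time $t$. The decoupling in Lemma~\ref{lem:decouple_counts} uses this so that each sample count $n$ is charged at most once. In the MEAN-gate setting, however, $S_M$ is the candidate assignment, and it is played at $t$ precisely when the MEAN gate fires (the LCB gate may fire instead, but on a firing of the MEAN gate in the count the played arm is $S_M$ unless LCB fired first). I will handle this in one of two ways. The first is to note that $N_{\mathrm{MEAN}}$ effectively counts rounds where $S_M$ is played, so that $i\in S_t$ holds and each pull of arm $i$ advances $n_i$. The second, if LCB-firing rounds must be included, is to observe that between consecutive pulls of arm $i$ the shared counters of that arm are frozen, and to argue that every such round either plays $S_M$ or plays $S_L$. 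In the latter case I would charge it to the LCB count, which is zero on $\mathcal G_t$ by Lemma~\ref{lem:lcb_nonrealizable}, with the complement controlled separately. I would first try the reading that a counted firing means $S_t=S_M$, which makes the decoupling argument go through verbatim.
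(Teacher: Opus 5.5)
Your proposal is correct and follows the paper's proof. Lemma~\ref{lem:mean_nonrealizable_single}, together with the uniform bound $\Delta^{\mathrm{nr}}(S_M)\ge\Delta_\star^{\mathrm{nr}}$, reduces each firing to some $E_{i,t}(\Delta_\star^{\mathrm{nr}}/r_i)$, and a union over $\mathcal A$ followed by Lemma~\ref{lem:one_arm_geo} gives $\sum_{i\in\mathcal A} r_i^2/(2(\Delta_\star^{\mathrm{nr}})^2)$.

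Your first resolution of the $i\in S_t$ subtlety is the reading the paper uses implicitly: a counted firing is a round where the MEAN branch is actually taken, so $S_t=S_M$. Commit to it, because the fallback of charging other rounds to $\mathcal G_t^c$ would add a $\frac{\pi^2}{3}|\mathcal A|$ term and would not yield the stated constant.
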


\begin{proof}
By Lemma~\ref{lem:mean_nonrealizable_single}, every MEAN-gate firing at time $t>T_0$ implies that for at least one constituent arm $i_m(S_M)$,
\[
\hat\psi_{i_m(S_M)}(t)-\psi_{i_m(S_M)}
\ge \frac{\Delta^{\mathrm{nr}}(S_M)}{r_{i_m(S_M)}}.
\]
Since $\Delta^{\mathrm{nr}}(S_M)\ge \Delta_\star^{\mathrm{nr}}$, it follows that
\[
N_{\mathrm{MEAN}}(T)
\le
\sum_{i\in\mathcal A}\sum_{t=T_0+1}^{T}
\mathbf 1\left\{
E_{i,t}\!\left(\frac{\Delta_\star^{\mathrm{nr}}}{r_i}\right)
\right\}.
\]
Taking expectations and applying Lemma~\ref{lem:one_arm_geo} with $\varepsilon=\Delta_\star^{\mathrm{nr}}/r_i$ gives
\[
\mathbb E[N_{\mathrm{MEAN}}(T)]
\le
\sum_{i\in\mathcal A}
\frac{1}{2(\Delta_\star^{\mathrm{nr}}/r_i)^2}
=
\sum_{i\in\mathcal A}\frac{r_i^2}{2(\Delta_\star^{\mathrm{nr}})^2}.
\]
\end{proof}

\subsection{Transient Contribution Outside CTS Rounds}

We now bound the number of rounds in which SAT-CTS does not execute a CTS step.

\begin{definition}[Non-CTS rounds]
\label{def:noncts_rounds_nonrealizable}
Let
\[
\mathcal T_{\mathrm{nonCTS}}(T)
:=
\{t\in[T]: \text{no CTS step is executed at time } t\}.
\]
These are the rounds in which SAT-CTS does not execute a CTS step.
\end{definition}

\begin{lemma}[Finite expected number of non-CTS rounds]
\label{lem:finite_noncts_nonrealizable}
For all $T\ge 1$,
\[
\mathbb E\bigl[|\mathcal T_{\mathrm{nonCTS}}(T)|\bigr]
\le
T_0
+
\frac{\pi^2}{3}|\mathcal A|
+
\sum_{i\in\mathcal A}\frac{r_i^2}{2(\Delta_\star^{\mathrm{nr}})^2}.
\]
\end{lemma}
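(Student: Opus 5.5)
The plan is to classify each round $t\in[T]$ by what SAT-CTS does at that time. A round belongs to $\mathcal T_{\mathrm{nonCTS}}(T)$ only if it falls into one of three kinds: an initialization round ($t\le T_0$); a post-initialization round in which the LCB gate fires; or a post-initialization round in which the LCB gate does not fire but the MEAN gate fires. In every other post-initialization round the algorithm is inside a committed CTS phase and executes a CTS step. Note that entering the \textbf{else} branch consumes no time slot of its own, because $t$ is only advanced inside the inner CTS loop. This gives the pathwise bound
\[
|\mathcal T_{\mathrm{nonCTS}}(T)|
\le T_0
+\sum_{t=T_0+1}^{T}\mathbf 1\{\mathrm{Avg}(\mathrm{LCB},S_L)\ge\tau_r\}
+\sum_{t=T_0+1}^{T}\mathbf 1\{\hat g_{\mathrm{shared}}(S_M,t)\ge\tau_r\}.
\]
The time index here is the one at which the gate is evaluated. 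In committed phases the gate is never evaluated, but it suffices to count firings over gate-evaluation times, and these form a subset of $\{T_0+1,\dots,T\}$.

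For the LCB term, I split each indicator according to $\mathcal G_t$ versus $\mathcal G_t^c$. By Lemma~\ref{lem:lcb_nonrealizable}, the $\mathcal G_t$ part vanishes almost surely. The $\mathcal G_t^c$ part has expectation at most $\sum_{t>T_0}\mathbb P(\mathcal G_t^c)\le \sum_{t\ge1}2|\mathcal A|t^{-2}=\frac{\pi^2}{3}|\mathcal A|$ by Lemma~\ref{lem:good_event_cost}.

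For the MEAN term, I bound the sum of MEAN indicators, ignoring the requirement that the LCB gate failed, by $N_{\mathrm{MEAN}}(T)$ from Definition~\ref{def:mean_count_nonrealizable}. Lemma~\ref{lem:mean_nonrealizable_finite} then gives expectation at most $\sum_{i}r_i^2/(2(\Delta_\star^{\mathrm{nr}})^2)$. That lemma holds without any good-event restriction, so no further split is needed.

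Summing the three expectations yields the claim. The main subtlety is the MEAN-term bookkeeping. $N_{\mathrm{MEAN}}$ is defined as a sum over all $t$, but the underlying Lemma~\ref{lem:decouple_counts} counts overestimation events only at pull times of arm $i$. When the MEAN gate fires at $t$, the assignment $S_M$ is actually played at $t$, so the arm witnessing the deviation is pulled at $t$ and $E_{i,t}$ is well defined with $i\in S_t$. Gate evaluations that do not fire contribute nothing. I would state this consistency check explicitly, and otherwise rely on the earlier lemmas as given.
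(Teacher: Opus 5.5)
Your proof is correct and follows essentially the same route as the paper. The paper bounds $|\mathcal T_{\mathrm{nonCTS}}(T)|$ by $T_0+\sum_{t>T_0}\mathbf 1\{\mathcal G_t^c\}+N_{\mathrm{MEAN}}(T)$ using Lemma~\ref{lem:lcb_nonrealizable}, then applies Lemma~\ref{lem:good_event_cost} and Lemma~\ref{lem:mean_nonrealizable_finite}; this is your decomposition with the LCB-firing term absorbed into the $\mathcal G_t^c$ term. Your extra bookkeeping, counting only gate-evaluation times and keeping the LCB-failed condition so that $S_M$ is actually played and $E_{i,t}$ applies, is a sound refinement the paper glosses over, but keep that condition rather than dropping it when you pass to $N_{\mathrm{MEAN}}(T)$, since your own consistency check relies on it.
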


\begin{proof}
The initialization phase contributes exactly $T_0$ non-CTS rounds. For $t>T_0$, on $\mathcal G_t$, the LCB gate cannot fire by Lemma~\ref{lem:lcb_nonrealizable}. Therefore, if SAT-CTS does not execute a CTS step at time $t>T_0$, then either $\mathcal G_t^c$ occurs or the MEAN gate fires. Hence
\[
|\mathcal T_{\mathrm{nonCTS}}(T)|
\le
T_0
+
\sum_{t=T_0+1}^{T}\mathbf 1\{\mathcal G_t^c\}
+
N_{\mathrm{MEAN}}(T).
\]
Taking expectations, using
\[
\sum_{t>T_0}\mathbb P(\mathcal G_t^c)
\le \frac{\pi^2}{3}|\mathcal A|
\]
from Lemma~\ref{lem:good_event_cost}, and applying Lemma~\ref{lem:mean_nonrealizable_finite}, we get
\[
\mathbb E\bigl[|\mathcal T_{\mathrm{nonCTS}}(T)|\bigr]
\le
T_0
+
\frac{\pi^2}{3}|\mathcal A|
+
\sum_{i\in\mathcal A}\frac{r_i^2}{2(\Delta_\star^{\mathrm{nr}})^2}.
\]
\end{proof}

\subsection{Proof of Theorem~\ref{thm:preview_nonrealizable}}

Recall that the algorithm executes CTS in committed phases of lengths $2^1,2^2,\ldots,2^j,\ldots$, where each committed phases is restarted with fresh Beta$(1,1)$ priors. Therefore, the regret accumulated during CTS rounds is obtained by summing the regret bounds of fresh CTS instances over the committed phases.
Let
$
J(T):=\left\lceil \log_2\bigl(\max\{1,T-T_0+2\}\bigr) \right\rceil
$.
Up to time $T$, there are at most $J(T)$ committed CTS phases after initialization.

\begin{lemma}[CTS-round regret contribution]
\label{lem:cts_round_sum_nonrealizable}
The expected standard regret accumulated over CTS rounds up to time $T$ is at most
\[
\sum_{j=1}^{J(T)}
\Delta_{\max}\bigl(\hat C_1\, j\log 2+\hat C_0\bigr),
\]
where $\hat C_1,\hat C_0$ are from Lemma~\ref{lem:subopt_direct} and $\Delta_{\max}$ is from Section~\ref{sec:main_guarantees}.
\end{lemma}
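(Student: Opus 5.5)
The plan is to decompose the CTS-round regret by committed phase and bound each phase's contribution by a weighted count of suboptimal plays. Let $\mathcal T^{(j)}$ be the set of time steps belonging to the $j$-th committed CTS phase, which is empty if that phase never starts. As noted just before the statement, there are at most $J(T)$ such phases up to time $T$. The expected standard regret over CTS rounds can then be written as
\[
\mathbb E\Bigl[\sum_{j=1}^{J(T)}\sum_{t\in\mathcal T^{(j)}}\bigl(g(s^\star)-g(S_t)\bigr)\Bigr].
\]
For every $t$ we have $g(s^\star)-g(S_t)=\Delta_{S_t}\mathbf 1\{S_t\neq s^\star\}\le \Delta_{\max}\mathbf 1\{S_t\neq s^\star\}$. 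Hence the contribution of phase $j$ is at most $\Delta_{\max}$ times the number of suboptimal plays within that phase.

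Next I will argue that, conditioned on phase $j$ starting, its plays are exactly a fresh CTS instance on a deterministic horizon. The Beta parameters are reset to $\mathrm{Beta}(1,1)$ at the start of the phase. The phase length is $\min(2^j,T-t+1)\le 2^j$, and it is fixed at the start, because the gate is not rechecked inside a phase. The shared counters $(n,s)$ are not used by the CTS sampling, so the history before the phase affects only whether and when it starts. It does not affect the distribution of the Beta posteriors or of the ACK/NACK outcomes inside it, since these are i.i.d.\ across time. Conditioning on the start time and the history, the expected number of suboptimal plays in phase $j$ is therefore $\mathbb E[B_{\mathrm{sub}}(L)]$ for a fresh CTS run of deterministic length $L\le 2^j$. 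Because $B_{\mathrm{sub}}$ is monotone in the horizon, this is at most $\mathbb E[B_{\mathrm{sub}}(2^j)]$. By Lemma~\ref{lem:subopt_direct} it is at most $\hat C_1\log 2^j+\hat C_0=\hat C_1 j\log 2+\hat C_0$.

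Finally, I will multiply by $\mathbb P(\text{phase } j \text{ starts})\le 1$ and sum over $j\le J(T)$ to obtain the claimed bound. The main obstacle is the conditioning step, where the randomness in the start time and in the truncation of the last phase must not invalidate Lemma~\ref{lem:subopt_direct}. I would handle this with the tower rule. The start time is a stopping time with respect to the filtration of observations. Because the rewards are i.i.d.\ and the CTS sampling randomness inside the phase is fresh, strong-Markov-type independence applies. For the truncated final phase, I would embed it in a full-length fresh run of length $2^j$ and note that its suboptimal-play count is pathwise dominated by that of the full run.
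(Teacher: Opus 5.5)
Your proposal is correct and follows essentially the same route as the paper. You bound each committed phase's regret by $\Delta_{\max}$ times its suboptimal-play count, apply Lemma~\ref{lem:subopt_direct} with horizon $2^j$, and sum over the at most $J(T)$ phases; your stopping-time/tower-rule argument and the pathwise embedding of the truncated last phase make explicit the step the paper compresses into ``runs a fresh CTS instance for at most $2^j$ steps.''
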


\begin{proof}
The $j$th committed CTS round runs a fresh CTS instance for at most $2^j$ steps. By Lemma~\ref{lem:subopt_direct},
\[
\mathbb E[B_{\mathrm{sub}}(2^j)]
\le
\hat C_1 \log(2^j)+\hat C_0
=
\hat C_1\,j\log 2+\hat C_0.
\]
Since each suboptimal play incurs at most $\Delta_{\max}$ standard regret, the expected standard regret of the $j$th round is at most
\[
\Delta_{\max}\,\mathbb E[B_{\mathrm{sub}}(2^j)]
\le
\Delta_{\max}\bigl(\hat C_1\,j\log 2+\hat C_0\bigr).
\]
Summing over all committed CTS rounds completed by time $T$ yields the result.
\end{proof}

The theorem given below is the full version of Theorem~\ref{thm:preview_nonrealizable}, which explicitly describes the constants appearing in the $O\!\big((\log T)^2\big)$ term in the standard regret bound. 
\begin{theorem}[Standard regret under non-realizability]
\label{thm:nonrealizable_main}
Under Assumptions~\ref{ass:nonrealizable} and \ref{ass:unique_optimal}, the expected standard regret of SAT-CTS satisfies
\[
\mathbb E[R^{\mathrm{std}}(T)]
\le
R_{\mathrm{trans}}^{\mathrm{nr}}
+
\sum_{j=1}^{J(T)}
\Delta_{\max}\bigl(\hat C_1\, j\log 2+\hat C_0\bigr),
\]
where
\[
R_{\mathrm{trans}}^{\mathrm{nr}}
:=
\Delta_{\max}
\left(
T_0
+
\frac{\pi^2}{3}|\mathcal A|
+
\sum_{i\in\mathcal A}\frac{r_i^2}{2(\Delta_\star^{\mathrm{nr}})^2}
\right),
\]
Consequently,
\[
\mathbb E[R^{\mathrm{std}}(T)]
=
O\!\big((\log T)^2\big).
\]
\end{theorem}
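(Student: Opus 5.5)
The plan is to split the horizon into non-CTS rounds and CTS rounds and bound each separately. We begin from the definition of $R^{\mathrm{std}}(T)$. The per-round standard regret of $S_t$ equals $g(s^\star)-g(S_t)=\Delta_{S_t}\le\Delta_{\max}$. Writing $\mathbf 1=\mathbf 1\{t\in\mathcal T_{\mathrm{nonCTS}}(T)\}+\mathbf 1\{t\notin\mathcal T_{\mathrm{nonCTS}}(T)\}$ gives
\[
\mathbb E[R^{\mathrm{std}}(T)]\le \Delta_{\max}\,\mathbb E\bigl[|\mathcal T_{\mathrm{nonCTS}}(T)|\bigr]+\mathbb E\Bigl[\sum_{t\in[T]\setminus\mathcal T_{\mathrm{nonCTS}}(T)}\Delta_{S_t}\Bigr].
\]
Lemma~\ref{lem:finite_noncts_nonrealizable} bounds the first term directly by $R_{\mathrm{trans}}^{\mathrm{nr}}$.

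The second term collects every round in which a CTS step is executed. Each such round belongs to exactly one committed CTS phase $j$. Phase $j$ is a fresh CTS instance run on the deterministic horizon $\min(2^j,T-t+1)\le 2^j$. We then check that at most $J(T)$ phases can begin by time $T$. Phases $1,\dots,j-1$ together occupy $2^j-2$ rounds and all start after $T_0$, so phase $j$ can start only if $2^j-2\le T-T_0$, which gives $j\le\lceil\log_2(T-T_0+2)\rceil$.

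Within one phase we cannot simply condition on the phase starting and then apply Lemma~\ref{lem:subopt_direct}, because the phase start time and the history are random. We handle this through the fresh priors: conditional on the phase starting at time $t$ with remaining horizon $h$, the Beta posteriors used inside the phase depend only on observations made inside the phase. The arm rewards are i.i.d.\ and independent of the past. The phase is therefore distributed as a fresh CTS run over $h$ steps, and Lemma~\ref{lem:subopt_direct} bounds its expected number of suboptimal plays by $\hat C_1\log h+\hat C_0\le \hat C_1 j\log 2+\hat C_0$. Since each suboptimal play costs at most $\Delta_{\max}$, the conditional regret of phase $j$ is at most $\Delta_{\max}(\hat C_1 j\log 2+\hat C_0)$.

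Removing the conditioning and summing over $j\le J(T)$, with the bound applied only when the phase starts, gives the bound of Lemma~\ref{lem:cts_round_sum_nonrealizable}. A truncated last phase only shortens the horizon and so only helps.

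The main obstacle is the independence step above. Lemma~\ref{lem:subopt_direct} is stated for a deterministic horizon, so we must state carefully that the random start time and the shared counters do not affect the CTS sampling distribution. The shared counters are updated during the phase but are never read inside it, so the argument rests on a stopping-time and strong-Markov argument for i.i.d.\ Bernoulli rewards.

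For the asymptotic rate, the explicit sum evaluates to
\[
\sum_{j=1}^{J(T)}\Delta_{\max}(\hat C_1 j\log 2+\hat C_0)=\Delta_{\max}\Bigl(\hat C_1\log 2\,\tfrac{J(T)(J(T)+1)}{2}+\hat C_0 J(T)\Bigr).
\]
Since $J(T)=O(\log T)$, this is $O((\log T)^2)$. The remaining term $R_{\mathrm{trans}}^{\mathrm{nr}}$ does not depend on $T$, so $\mathbb E[R^{\mathrm{std}}(T)]=O((\log T)^2)$.
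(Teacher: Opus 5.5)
Your proposal is correct and follows the paper's proof essentially step for step. You use the same split into non-CTS rounds, bounded by $\Delta_{\max}$ times Lemma~\ref{lem:finite_noncts_nonrealizable} to give $R_{\mathrm{trans}}^{\mathrm{nr}}$, and committed CTS phases, each bounded via Lemma~\ref{lem:subopt_direct} by $\Delta_{\max}(\hat C_1 j\log 2+\hat C_0)$ and summed over $j\le J(T)$. Your explicit count showing that at most $J(T)$ phases can start, and your stopping-time argument that each phase behaves as a fresh deterministic-horizon CTS run, are points the paper simply asserts, so they add rigor without changing the route.
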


\begin{proof}
Decompose the standard regret into regret incurred on CTS rounds and regret incurred on non-CTS rounds:
\begin{align*}
R^{\mathrm{std}}(T)
&=
\sum_{t\in\mathcal T_{\mathrm{CTS}}(T)}
\bigl(g(s^\star)-g(S_t)\bigr)
\\
&\qquad +
\sum_{t\in\mathcal T_{\mathrm{nonCTS}}(T)}
\bigl(g(s^\star)-g(S_t)\bigr),
\end{align*}
where
\[
\mathcal T_{\mathrm{CTS}}(T):=[T]\setminus \mathcal T_{\mathrm{nonCTS}}(T).
\]

For the non-CTS rounds, each round contributes at most $\Delta_{\max}$. Hence
\[
\mathbb E\!\left[
\sum_{t\in\mathcal T_{\mathrm{nonCTS}}(T)}
\bigl(g(s^\star)-g(S_t)\bigr)
\right]
\le
\Delta_{\max}\,
\mathbb E\bigl[|\mathcal T_{\mathrm{nonCTS}}(T)|\bigr].
\]
Applying Lemma~\ref{lem:finite_noncts_nonrealizable}, we obtain
\[
\mathbb E\!\left[
\sum_{t\in\mathcal T_{\mathrm{nonCTS}}(T)}
\bigl(g(s^\star)-g(S_t)\bigr)
\right]
\le
R_{\mathrm{trans}}^{\mathrm{nr}}.
\]
For the CTS rounds, Lemma~\ref{lem:cts_round_sum_nonrealizable} yields
\[
\mathbb E\!\left[
\sum_{t\in\mathcal T_{\mathrm{CTS}}(T)}
\bigl(g(s^\star)-g(S_t)\bigr)
\right]
\le
\sum_{j=1}^{J(T)}
\Delta_{\max}\bigl(\hat C_1\, j\log 2+\hat C_0\bigr).
\]
Combining the two bounds gives
\[
\mathbb E[R^{\mathrm{std}}(T)]
\le
R_{\mathrm{trans}}^{\mathrm{nr}}
+
\sum_{j=1}^{J(T)}
\Delta_{\max}\bigl(\hat C_1\, j\log 2+\hat C_0\bigr).
\]
Finally, since $\sum_{j=1}^{J(T)} j = O(J(T)^2)$, $\sum_{j=1}^{J(T)} 1 = O(J(T))$, and $J(T)=\lceil \log_2(\max\{1,T\!-\!T_0\!+\!2\}) \rceil = O(\log T)$, it follows that
\begin{align*}
\mathbb E[R^{\mathrm{std}}(T)]
&= O\!\big((\log T)^2\big).
\end{align*}
\end{proof}

\section{Pseudocodes of CUCB and CTS}

In this section, we represent the full pseudocodes of the competitor algorithms we used in Section \ref{sec:experiments}.

\begin{algorithm}[H]
\caption{CUCB: Combinatorial Upper Confidence Bound}
\label{alg:cucb}
\begin{algorithmic}[1]
\REQUIRE $M$ users, beam set $\mathcal{K}=[B]\times[K]$, rate set $\mathcal{R}=\{r_1<\cdots<r_R\}$, time horizon $T$
\STATE \textbf{Initialize:}
\STATE \quad $n_{m,(b,k),r}\!\leftarrow\!0,\;\; \hat{\psi}_{m,(b,k),r}\!\leftarrow\!0$ for all $m\in[M],\ (b,k)\in\mathcal K,\ r\in\mathcal R$ \COMMENT{Counts and success-rate estimates}
\FOR{$t=1$ to $T$}
    \STATE \textbf{Step 1: UCB index computation on success probability}
    \FOR{each $m\in[M],\ (b,k)\in\mathcal K,\ r\in\mathcal R$}
        \STATE \quad \textbf{if } $n_{m,(b,k),r}=0$ \textbf{ then } $\mathrm{UCB}_{m,(b,k),r}\leftarrow +\infty$ \textbf{ else }
        \STATE \quad $\mathrm{UCB}_{m,(b,k),r}\leftarrow \hat{\psi}_{m,(b,k),r} + \sqrt{\dfrac{3\log t}{\,2n_{m,(b,k),r}\,}}$
        \STATE \quad $\theta_{m,(b,k),r} \leftarrow r \cdot \mathrm{UCB}_{m,(b,k),r}$ 
    \ENDFOR
    \STATE
    \STATE \textbf{Step 2: Construct super-arm via optimal assignment (no sharing)}
    \STATE \quad $\mathcal{S} \!=\! \{\{(\pi_1,\rho_1),\ldots,(\pi_M,\rho_M)\}:\ \pi_m\!\in\!\mathcal{K},\ \rho_m\!\in\!\mathcal{R},\ \pi_m\!\neq\!\pi_n\ \forall m\!\neq\!n\}$
    \STATE \quad $\mathcal{A}_t \leftarrow \arg\max_{s \in \mathcal{S}} \sum_{m=1}^{M} \theta_{m,\pi_m(s),\rho_m(s)}$
    \STATE
    \STATE \textbf{Step 3: Assign super-arm and update estimates (ACK/NACK)}
    \STATE \quad Assign $\mathcal{A}_t=\{(\pi_{1,t},\rho_{1,t}),\ldots,(\pi_{M,t},\rho_{M,t})\}$ with $\pi_{m,t}=(b_{m,t},k_{m,t})$
    \STATE \quad Observe $x_{m,t}\in\{0,1\}$ for $m=1,\ldots,M$
    \FOR{each user $m\in[M]$}
        \STATE \quad $n_{m,\pi_{m,t},\rho_{m,t}} \leftarrow n_{m,\pi_{m,t},\rho_{m,t}} + 1$
        \STATE \quad $\hat{\psi}_{m,\pi_{m,t},\rho_{m,t}} \leftarrow \hat{\psi}_{m,\pi_{m,t},\rho_{m,t}} + \dfrac{x_{m,t} - \hat{\psi}_{m,\pi_{m,t},\rho_{m,t}}}{\,n_{m,\pi_{m,t},\rho_{m,t}}\,}$
    \ENDFOR
\ENDFOR
\end{algorithmic}
\end{algorithm}

\begin{algorithm}[H]
\caption{CTS: Combinatorial Thompson Sampling}
\label{alg:cts}
\begin{algorithmic}[1]
\REQUIRE $M$ users, beam set $\mathcal{K}=[B]\times[K]$, rate set $\mathcal{R}=\{r_1<\cdots<r_R\}$, time horizon $T$
\STATE \textbf{Initialize:}
\STATE \quad $A_{m,(b,k),r}\!\leftarrow\!1,\;\; B_{m,(b,k),r}\!\leftarrow\!1$ for all $m\in[M],\ (b,k)\in\mathcal K,\ r\in\mathcal R$ \COMMENT{Beta priors on $\psi$}
\FOR{$t=1$ to $T$}
    \STATE \textbf{Step 1: Thompson sampling from posterior}
    \FOR{each $m\in[M],\ (b,k)\in\mathcal K,\ r\in\mathcal R$}
        \STATE \quad $\tilde\psi_{m,(b,k),r} \sim \mathrm{Beta}(A_{m,(b,k),r}, B_{m,(b,k),r})$
        \STATE \quad $\theta_{m,(b,k),r} \leftarrow r\cdot \tilde\psi_{m,(b,k),r}$ \COMMENT{expected throughput sample}
    \ENDFOR
    \STATE
    \STATE \textbf{Step 2: Construct super-arm via optimal assignment (no sharing)}
    \STATE \quad $\mathcal{S} \!=\! \{\{(\pi_1,\rho_1),\ldots,(\pi_M,\rho_M)\}:\ \pi_m\!\in\!\mathcal{K},\ \rho_m\!\in\!\mathcal{R},\ \pi_m\!\neq\!\pi_n\ \forall m\!\neq\!n\}$
    \STATE \quad $\mathcal{A}_t \leftarrow \arg\max_{s \in \mathcal{S}} \sum_{m=1}^{M} \theta_{m,\pi_m(s),\rho_m(s)}$
    \STATE
    \STATE \textbf{Step 3: Assign super-arm and update posteriors (ACK/NACK)}
    \STATE \quad Assign $\mathcal{A}_t=\{(\pi_{1,t},\rho_{1,t}),\ldots,(\pi_{M,t},\rho_{M,t})\}$ with $\pi_{m,t}=(b_{m,t},k_{m,t})$
    \STATE \quad Observe $x_{m,t}\in\{0,1\}$ for $m=1,\ldots,M$
    \FOR{each user $m\in[M]$}
        \STATE \quad $A_{m,\pi_{m,t},\rho_{m,t}} \leftarrow A_{m,\pi_{m,t},\rho_{m,t}} + x_{m,t}$
        \STATE \quad $B_{m,\pi_{m,t},\rho_{m,t}} \leftarrow B_{m,\pi_{m,t},\rho_{m,t}} + (1 - x_{m,t})$
    \ENDFOR
\ENDFOR
\end{algorithmic}
\end{algorithm}

}


\bibliography{references}
\bibliographystyle{IEEEtran}

\end{document}